\def\eqref#1{equation~\ref{#1}}
\def\1{\bm{1}}
\def\eps{{\epsilon}}
\DeclareMathAlphabet{\mathsfit}{\encodingdefault}{\sfdefault}{m}{sl}
\SetMathAlphabet{\mathsfit}{bold}{\encodingdefault}{\sfdefault}{bx}{n}
\newcommand{\tens}[1]{\bm{\mathsfit{#1}}}
\newcommand{\E}{\mathbb{E}}
\newcommand{\lr}{\alpha}
\DeclareMathOperator{\sign}{sign}
\definecolor{mydarkblue}{rgb}{0,0.08,0.45}
\theoremstyle{plain}
\newtheorem{theorem}{Theorem}[section]
\newtheorem{lemma}[theorem]{Lemma}
\theoremstyle{definition}
\theoremstyle{remark}
\newcommand{\x}{\m{x}}
\newcommand{\y}{\m{y}}
\newcommand{\z}{\m{z}}
\newcommand{\zz}{z}
\newcommand{\Q}{\tens{Q}}
\newcommand{\Qm}{\m{Q}}
\newcommand{\J}{\m{J}}
\newcommand{\G}{\m{G}}
\newcommand{\al}{\alpha}
\newcommand{\bt}{\beta}
\newcommand{\dl}{\delta}
\newcommand{\gm}{\gamma}
\renewcommand{\th}{\sm{\theta}}
\newcommand{\Lo}{\mathcal{L}}
\renewcommand{\P}{P}
\newcommand{\D}{D}
\renewcommand{\lr}{\eta}
\renewcommand{\v}{\m{v}}
\newcommand{\U}{\m{U}}
\newcommand{\X}{\m{X}}
\newcommand{\Y}{\m{Y}}
\newcommand{\N}{N}
\newcommand{\K}{K}
\newcommand{\W}{\m{W}}
\newcommand{\tzz}{\tilde{z}}
\newcommand{\tjj}{\tilde{J}}
\newcommand{\qlam}{\omega}
\newcommand{\tjjl}[1][i]{\tilde{J}(\qlam_{#1})}
\newcommand{\tl}{\tilde}
\newcommand{\w}{\m{w}}
\newcommand{\M}{M}
\newcommand{\jj}{J}
\newcommand{\TT}{T}
\renewcommand{\eps}{\epsilon}
\renewcommand{\E}{E}
\newcommand{\lam}{\lambda}
\renewcommand{\aa}{a}
\newcommand{\sgz}{\sigma_{z}}
\newcommand{\sgJ}{\sigma_{J}}
\newcommand{\tsgz}{\tilde{\sigma}_{z}}
\newcommand{\tsgJ}{\tilde{\sigma}_{J}}
\newif\ifcomments
\newcommand{\jp}[1]{{\color{blue}[JP: #1]}}
\newcommand{\fpd}[1]{{\color{purple}[FP: #1]}}
\newcommand{\aga}[1]{{\color{red}[AA: #1]}}
\newcommand{\jp}[1]{}
\newcommand{\fpd}[1]{}
\newcommand{\aga}[1]{}
\title{Second-order regression models exhibit progressive sharpening to the edge of stability\aga{Comments on}}
\author{Atish Agarwala, Fabian Pedregosa \& Jeffrey 
Pennington  \\
Google Research, Brain Team\\
\texttt{\{thetish, pedregosa,jpennin\}@google.com}
}
\begin{document}

\maketitle

\begin{abstract}
Recent studies of gradient descent with large step sizes have shown that there is often a regime with an initial increase in the largest eigenvalue of the loss Hessian (progressive sharpening), followed by a stabilization of the eigenvalue near the maximum value which allows convergence (edge of stability). These phenomena are intrinsically non-linear and do not happen for models in the constant Neural Tangent Kernel (NTK) regime, for which the predictive function is approximately linear in the parameters. As such, we consider the next simplest class of predictive models, namely those that are quadratic in the parameters, which we call second-order regression models. For quadratic objectives in two dimensions, we prove that this second-order regression model exhibits progressive sharpening of the NTK eigenvalue towards a value that differs slightly from the edge of stability, which we explicitly compute. In higher dimensions, the model generically shows similar behavior, even without the specific structure of a neural network, suggesting that progressive sharpening and edge-of-stability behavior aren't unique features of neural networks, and could be a more general property of discrete learning algorithms in high-dimensional non-linear models.
\end{abstract}

\section{Introduction}

A recent trend in the theoretical understanding of deep learning has focused on the
\emph{linearized} regime, where the Neural Tangent Kernel (NTK) controls the learning
dynamics \citep{jacot_neural_2018, lee_wide_2019}.
The NTK describes learning dynamics of all networks over short enough time horizons, and can 
describe the dynamics of wide networks over large time horizons.
In the NTK regime, there is a function-space ODE which allows for explicit characterization of the 
network outputs~\citep{jacot_neural_2018, lee_wide_2019, yang_tensor_2021}. This approach has been 
used across the board to gain insights into wide neural networks,
but it suffers a major limitation: the model is linear in the parameters, so it describes a regime
with relatively trivial dynamics that cannot capture feature learning and cannot accurately
represent the types of complex training phenomena often observed in practice.

While other large-width scaling regimes can preserve some non-linearity and allow for certain types
of feature learning~\citep{bordelon_selfconsistent_2022, yang_tensor_2022}, such approaches tend to
focus on the small learning-rate or continuous-time dynamics. In contrast, recent empirical work has
highlighted a number of important phenomena arising from the non-linear discrete dynamics in
training practical networks with large learning
rates~\citep{neyshabur_exploring_2017, gilmer_loss_2022, ghorbani_investigation_2019, foret_sharpnessaware_2022}.
In particular, many experiments have shown the tendency for networks to display
\emph{progressive sharpening} of the curvature
towards the \emph{edge of stability}, in which the maximum 
eigenvalue of the loss Hessian increases over the course of training until it stabilizes at a value 
equal to roughly two divided by the learning rate, corresponding to the largest eigenvalue for
which gradient descent would converge in a quadratic potential~\citep{wu_how_2018, giladi_stability_2020,cohen_adaptive_2022, cohen_gradient_2022}.

In order to build a better understanding of this behavior, we introduce a class of models which 
display all the relevant phenomenology, yet are simple enough
to admit numerical and analytic understanding. In particular, we propose a simple 
\emph{quadratic regression model} and corresponding quartic loss function which fulfills both these
goals. We prove that under the right conditions, this simple model shows both progressive
sharpening \emph{and} edge-of-stability behavior. We then empirically analyze a more general
model which shows these behaviors \emph{generically} in the large datapoint, large model limit.
Finally, we conduct a numerical analysis on the properties of a real neural network and use
tools from our theoretical analysis to show that edge-of-stability behavior ``in the wild" shows
some of the same patterns as the theoretical models.

\section{Basic quartic loss function}


\subsection{Model definition}

We consider the optimization of the quadratic loss function $\Lo(\th) = z^{2}/2$, where $z$ a 
quadratic function on the $\P\times 1$-dimensional
parameter vector $\th$ and $\Qm$ is a $\P\times\P$ symmetric 
matrix:
\begin{equation}
\zz = \frac{1}{2}\left[\th^{\top}\Qm\th-\E\right]\,.
\label{eq:quad_model_one_data}
\end{equation}
This can be interpreted either as a model in which the predictive function is quadratic in the input
parameters, or as a second-order approximation to a more complicated non-linear function such as a 
deep network.
In this objective, the gradient flow (GF) dynamics with scaling factor $\lr$ is given by
\begin{equation}
\dot{\th} = -\lr\nabla_{\th}\Lo = \lr\zz\frac{\partial \zz}{\partial \th} = \frac{\lr}{2}\left[\th^{\top}\Qm\th-\E\right] \Qm\th\,.
\end{equation}
It is useful to re-write the dynamics in terms of $\tzz$ and the $1\times\P$-dimensional Jacobian
$\J= \partial \zz/\partial\th$:
\begin{equation}
\quad ~\dot{\zz} = -\lr(\J\J^\top)\zz,\quad ~\dot{\J} = -2\lr\zz \Qm\J\,.
\end{equation}
We note that in this case the neural tangent kernel (NTK) is a scalar given by the scalar
$\J \J^\top$. In these coordinates, we have $\E = \J\Qm^{+}\J^{\top}-2\zz$,
where $\Qm^{+}$ denotes the Moore-Penrose pseudoinverse.

The GF equations can be simplified by two transformations. 
First, we transform to
$\tzz = \lr\zz$ and $\m{\tjj} = \lr^{1/2}\J$. Next, we rotate $\th$ so that
$\Qm$ is diagonal. This is always possible since $\Qm$ is symmetric.
Since the NTK is given by $\J\J^\top$, this rotation preserves
the dynamics of the curvature. Let $\qlam_1 \geq \ldots \geq \qlam_P$ be the
eigenvalues of $\Qm$, and $\v_{i}$ be the associated eigenvectors (in case of degeneracy,
one can pick any basis). We define $\tjjl = \tl{\J}\v_{i}$, the projection of $\tl{\J}$
onto the $i$th eigenvector. Then the gradient flow
equations can be written as:
\begin{equation}
\frac{d\tzz}{dt} = - \tzz\sum_{i=1}^{\P}\tjjl^{2},\quad ~\frac{d\tjjl^2}{dt} = -2\tzz\qlam_{i}\tjjl^2 \,.
\end{equation}
The first equation implies that $\tzz$ does not change sign under GF dynamics.
Modes with positive $\qlam_{i}\tzz$
decrease the curvature, and those with negative $\qlam_{i}\tzz$ increase the curvature.

In order to study edge-of-stability behavior, we need initializations which allow the
curvature ($\J\J^{\top}$ in this case) to increase over time - a phenomenon known as
\emph{progressive sharpening}. Progressive sharpening has been shown to be ubiquitous in machine learning
models \citep{cohen_gradient_2022}, so any useful phenomenological model should show it
as well. One such initialization for this quadratic regression model is
$\qlam_{1} = -\omega$, $\qlam_{2} = \omega$, $\tjjl[1] = \tjjl[2]$.
This initialization (and others) show progressive
sharpening at all times.

\subsection{Gradient descent}

We are interested in understanding the \emph{edge-of-stability}
(EOS) behavior in this model: gradient descent (GD) trajectories where the maximum
eigenvalue of the NTK, $\J\J^{\top}$, remains close to the critical value $2/\lr$.
(Note: we define edge of stability with respect to the maximum NTK eigenvalue; for
any twice-differentiable model trained with squared loss,
this is equivalent to the maximum eigenvalue of the loss Hessian used in 
\cite{cohen_gradient_2022} as the model converges to a stationary point \citep{jacot_asymptotic_2020}.)

When $\Qm$ has both positive and negative eigenvalues, the loss landscape is the square of a
hyperbolic parabaloid (Figure \ref{fig:two_param_landscape}, left). As suggested by the
gradient flow analysis, this causes some trajectories to increase their curvature before
convergence. This causes the final curvature to depend on both the initialization and
learning rate. One of the challenges in analyzing
the gradient descent (GD) dynamics is that they rapidly and heavily oscillate around minima
for large learning rates. One way 
to mitigate this issue is to consider only every other step
(Figure \ref{fig:two_param_landscape}, right).
We will use this observation to analyze the gradient descent (GD) dynamics directly to find
configurations where these trajectories show edge-of-stability behavior.

\begin{figure}
    \centering
    \begin{tabular}{cc}
    \includegraphics[height=0.3\linewidth]{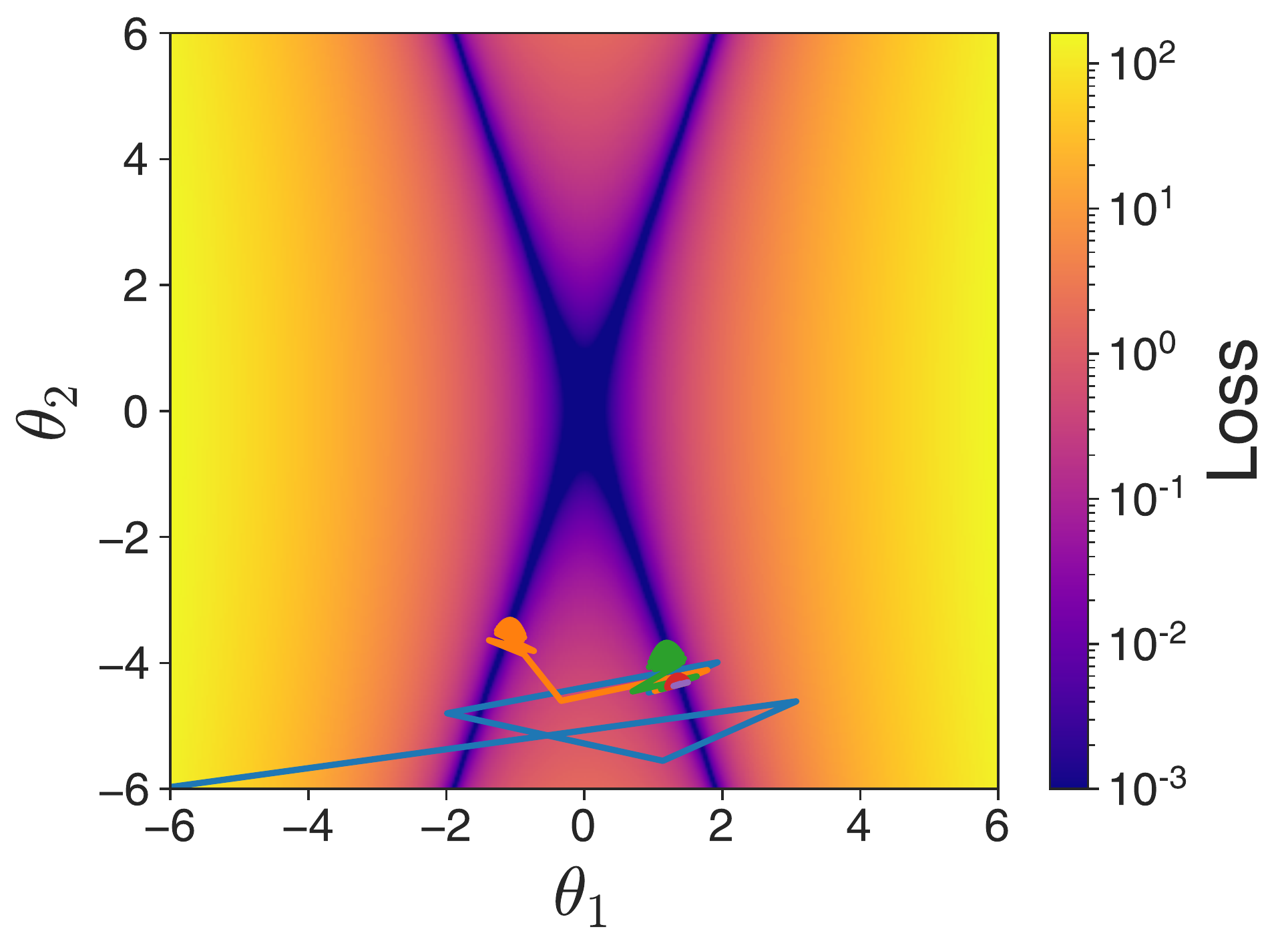} &
    \includegraphics[height=0.3\linewidth]{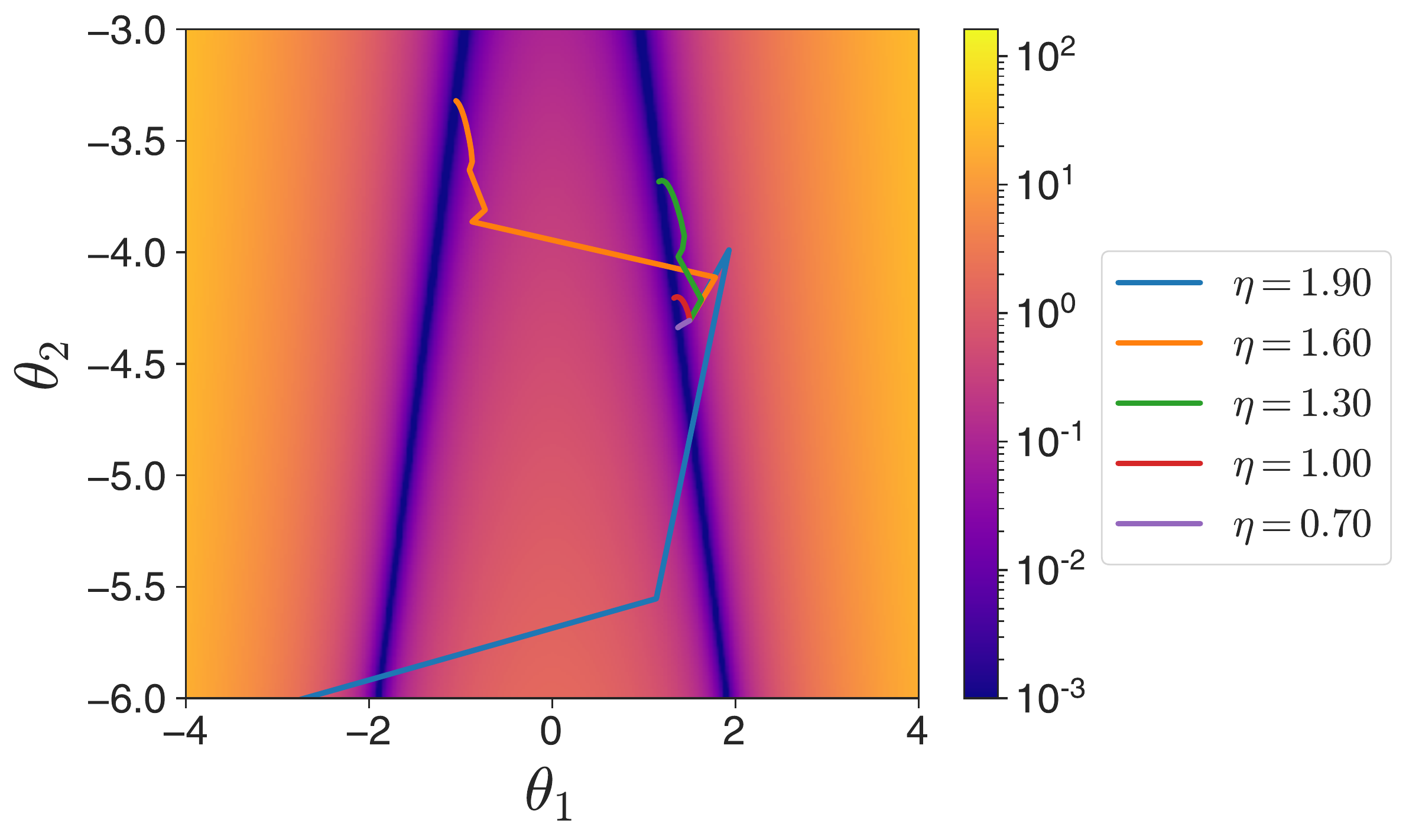}
    \end{tabular}
    \caption{Quartic loss landscape $\mathcal{L}(\cdot)$ as a function of the parameters $\th$, where $D=2, \E = 0$ and $\Qm$ has eigenvalues $1$ and $-0.1$.
    The GD trajectories converge to minima with larger curvature than at initialization and therefore show progressive
    sharpening (left). The two-step dynamics, in which we consider only even iteration numbers, exhibit fewer oscillations near the edge
    of stability (right).}
    \label{fig:two_param_landscape}
\end{figure}

In the eigenbasis coordinates, the gradient descent equations are
\begin{align}\label{eq:zz_dyn}
\tzz_{t+1}-\tzz_{t} &= - \tzz_{t}\sum_{i=1}^{\P}\tjjl_{t}^{2} + \frac{1}{2}(\tzz_{t}^2)\sum_{i=1}^{\P}\qlam_{i}\tjjl_{t}^{2} \\
\tjjl^2_{t+1}-\tjjl^2_{t} &= -\tzz_{t}\qlam_{i}(2-\tzz_{t}\qlam_{i})\tjjl_{t}^2\, \text{ for all $1 \leq i \leq P$}.
\end{align}
We'll find it convenient in the following to write the dynamics in terms of weighted averages of
$\tjjl^2$ instead of the modes $\tjjl$:
\begin{equation}
\TT(\al) = \sum_{i=1}^{\P}\qlam_{i}^{\al}\tjjl^2\,.
\end{equation}
The dynamical equations become:
\begin{align}\label{eq:Tzmodel}
\tzz_{t+1}-\tzz_{t} &= - \tzz_{t}\TT_{t}(0) + \frac{1}{2}(\tzz_{t}^2)\TT_{t}(1) \\
\TT_{t+1}(k)-\TT_{t}(k) &= -\tzz_{t}(2\TT_{t}(k+1)-\tzz_{t}\TT_{t}(k+2))\,. \label{eq:Tzmodel2}
\end{align}
If $\Qm$ is invertible, then we have $\E = \TT_{t}(-1)-2\tzz_{t}$.
Note that by definition $\TT_t(0) = \lr\J_t\J_t^{\top}$ is the (rescaled) NTK. edge-of-stability behavior
corresponds to dynamics
which keep
$\TT_{t}(0)$ near the value $2$ as $\tzz_{t}$ goes to $0$. 

\subsubsection{Reduction to catapult dynamics}

If the eigenvalues of $\Qm$ are $\{-\qlam, \qlam\}$, and $\E = 0$, the model becomes equivalent
to a single hidden layer linear network with one training datapoint (Appendix
\ref{app:one_hidden_layer}) - also known as the catapult phase dynamics. This
model doesn't exhibit sharpening or edge-of-stability behavior 
\citep{lewkowycz_large_2020}.
We will analyze this model in our $\tzz-\TT(0)$ variables as a warmup, with an eye towards analyzing
a different parameter setting which does show sharpening and edge of stability.

We assume without loss of generality that the eigenvalues are $\{-1, 1\}$ - which can be
accomplished by rescaling $\tzz$. The loss function is then the square of a hyperbolic
parabaloid. Since there are only $2$ variables, we can rewrite the dynamics in terms
of $\tzz$ and the curvature $\TT(0)$ only (Appendix \ref{app:two_param_basics}):
\begin{equation}
\tzz_{t+1}-\tzz_{t} = - \tzz_{t}\TT_{t}(0) + \frac{1}{2}(\tzz_{t}^2)(2\tzz_{t}+\E)
\label{eq:z_sym_red}
\end{equation}
\begin{equation}
\TT_{t+1}(0)-\TT_{t}(0) = -2\tzz_{t}(2\tzz_{t}+\E)+\zz_{t}^{2}\TT_{t}(0)\,.
\label{eq:T_sym_red}
\end{equation}
For $\E = 0$, we can see that $\sign(\Delta\TT(0)) = \sign(\TT_{t}(0)-4)$, as in
\cite{lewkowycz_large_2020} - so convergence requires strictly decreasing curvature.
For $\E\neq 0$, there is a region where the curvature can increase (Appendix \ref{app:two_param_basics}). However, there is still no edge-of-stability behavior -
there is no set of initializations which starts with $\lam_{\max}$
far from $2/\lr$, which ends up near $2/\lr$.
In contrast, we will show that asymmetric eigenvalues
can lead to EOS behavior.

\subsubsection{Edge of stability regime}

In this section, we consider the case in which $\Qm$ has two eigenvalues -
one of which is large and positive, and the other one small and negative. Without loss of 
generality, we assume that the largest eigenvalue of $\Qm$ is $1$. We denote the
second eigenvalue by
$-\eps$, for $0<\eps\leq 1$. With this notation we can write the dynamical equations
(Appendix \ref{app:two_param_basics}) as
\begin{equation}
\tzz_{t+1}-\tzz_{t} = - \tzz_{t}\TT_{t}(0) + \frac{1}{2}(\tzz_{t}^2)((1-\eps)\TT_{t}(0)+\eps(2\tzz_{t}+\E))
\label{eq:z_2_red_eps}
\end{equation}
\begin{equation}
\TT_{t+1}(0)-\TT_{t}(0) = -2\tzz_{t}(\eps(2\tzz_{t}+\E)+(1-\eps)\TT_{t}(0))+\tzz_{t}^{2}\left[\TT_{t}(0)+\eps\left(\eps-1\right)(\TT_{t}(0)-\E-2\tzz_{t})\right]\,.
\label{eq:T_2_red_eps}
\end{equation}
For small $\eps$, there are trajectories where $\lam_{\max}$ is initially away from $2/\lr$ but converges towards it (Figure \ref{fig:two_param_asym_two_step}, left) - in other
words, EOS behavior. We used a variety of step sizes $\lr$ but initialized at pairs
initialized at pairs $(\lr z_{0}, \lr T_{0}(0))$ to show
the universality of the $\tzz$-$\TT(0)$ coordinates.

In order to quantitatively understand the progressive sharpening and edge of stability,
it is useful to look at the two-step dynamics.
One additional motivation for studying the two-step dynamics follows from the analysis of gradient
descent on linear least squares (i.e., linear model) with a large step size $\lambda$. For every 
coordinate $\tilde{\theta}$,
the one-step and two-step dynamics are
\begin{equation}
\tilde{\theta}_{t+1}-\tilde{\theta}_{t} = -\lam \tilde{\theta}_{t}~\text{ and }~\tilde{\theta}_{t+2}-\tilde{\theta}_{t} = (1-\lam)^{2}\tilde{\theta}_{t} \qquad \text{(GD in quadratic potential)}\,.
\end{equation}
While the dynamics converge for $\lam <2$, if $\lam>1$ the one-step dynamics oscillate when
approaching minimum, whereas the the two-step dynamics maintain the sign of $\tilde{\theta}$ and the
trajectories exhibit no oscillations.

Likewise, plotting every other iterate in the two parameter model
more clearly demonstrates the phenomenology.
For small $\eps$, the dynamics shows the distinct phases described in 
\citep{li_analyzing_2022}: an initial increase
in $\TT(0)$, a slow increase in $\tzz$, then a decrease in $\TT(0)$, and finally a slow decrease
of $\tzz$ while $\TT(0)$ remains near $2$ (Figure \ref{fig:two_param_asym_two_step}, middle).

Unfortunately, the two-step version of the dynamics defined by
Equations \ref{eq:z_2_red_eps} and \ref{eq:T_2_red_eps}
are more complicated -- they are $3$rd order in $\TT(0)$ and $9$th order in 
$\tzz$; see
Appendix \ref{app:two_step} for a more detailed discussion. However we can still
analyze the dynamics as $\tzz$ goes to $0$. In order to understand the mechanisms
of the EOS behavior, it is useful to understand the \emph{nullclines} of the two step dynamics.
The nullcline $f_{\tzz}(\tzz)$ of $\tzz$ and $f_{\TT}(\tzz)$ of $\TT(0)$ are defined implicitly
by
\begin{equation}
(\tzz_{t+2}-\tzz_{t})(\tzz, f_{\tzz}(\tzz)) = 0,~(\TT_{t+2}(0)-\TT_{t}(0))(\tzz, f_{\TT}(\tzz)) = 0
\end{equation}
where $\tzz_{t+2}-\tzz_{t}$ and $\TT_{t+2}(0)-\TT_{t}(0)$ are the aforementioned high order
polynomials in $\tzz$ and $\TT(0)$. Since these polynomials are cubic in $\TT(0)$, there
are three possible solutions as $\tzz$ goes to $0$. We are particularly interested
in the solution that goes through $\tzz = 0$, $\TT(0) = 2$ - that is, the critical point
corresponding to EOS.

Calculations detailed in Appendix \ref{app:two_step} show that the
distance between the two nullclines is linear in $\eps$, so they become close
as $\eps$ goes to $0$.
(Figure \ref{fig:two_param_asym_two_step}, middle).
In addition, the trajectories stay near $f_{\tzz}$ - which gives rise to EOS behavior.
This suggests that the dynamics are slow near the nullclines, and trajectories appear to be
approaching an attractor.
We can find the structure of the attractor by changing variables to
$y_{t} \equiv \TT_{t}(0)-f_{\tzz}(\tzz_{t})$ - the distance from the $\tzz$ nullcline.
To lowest order in $\tzz$ and $y$, the two-step dynamical equations become (Appendix
\ref{app:two_step_approx}):
\begin{align}
\label{eq:two_step_z_big_O}
\tzz_{t+2}-\tzz_{t} &= 2y_t\tzz_t+O(y_{t}^2\tzz_{t})+O(y_{t}\tzz_{t}^{2})\\
y_{t+2}-y_{t} &= -2(4-3\eps+4\eps^2)y_t \tzz_{t}^{2}-4\eps \tzz_{t}^{2}+\eps O(\tzz_{t}^{3})+O(y^2\tzz_{t}^2)
\label{eq:two_step_y_big_O}
\end{align}
We immediately see that $\tzz$ changes slowly for small $y$ - since we chose
coordinates where $\tzz_{t+2}-\tzz_{t} = 0$ when $y=0$. We can also see that
$y_{t+2}-y_{t}$ is $O(\eps)$ for $ y_{t} = 0$ - so for small $\eps$, the $y$ dynamics is
slow too. Moreover, we see that the coefficient of the $\eps\tzz_{t}^{2}$ term is negative -
the changes in $\tzz$ tend to drive $y$ (and therefore $\TT(0)$) to decrease.
The coefficient of the $y_{t}$ term is negative as well; the dynamics of $y$ tends to be
contractive.
The key is that the contractive behavior takes $y$ to an $O(\eps)$ fixed point
at a rate proportional to $\tzz^2$, while the dynamics of $\tzz$ are proportional
to $\eps$. This suggests a separation of timescales if $\tzz^2\gg\eps$, where
$y$ first equilibrates to a fixed value, and then $\tzz$ converges to $0$ (Figure \ref{fig:two_param_asym_two_step}, right).
This intuition for the lowest order terms can be formalized,
and gives us a prediction of $\lim_{t\to\infty}y_{t} = -\eps/2$, confirmed
numerically in the full model (Appendix \ref{app:low_order_dyn}).

\begin{figure}[h]
\centering
\begin{tabular}{ccc}
\includegraphics[height=0.25\linewidth]{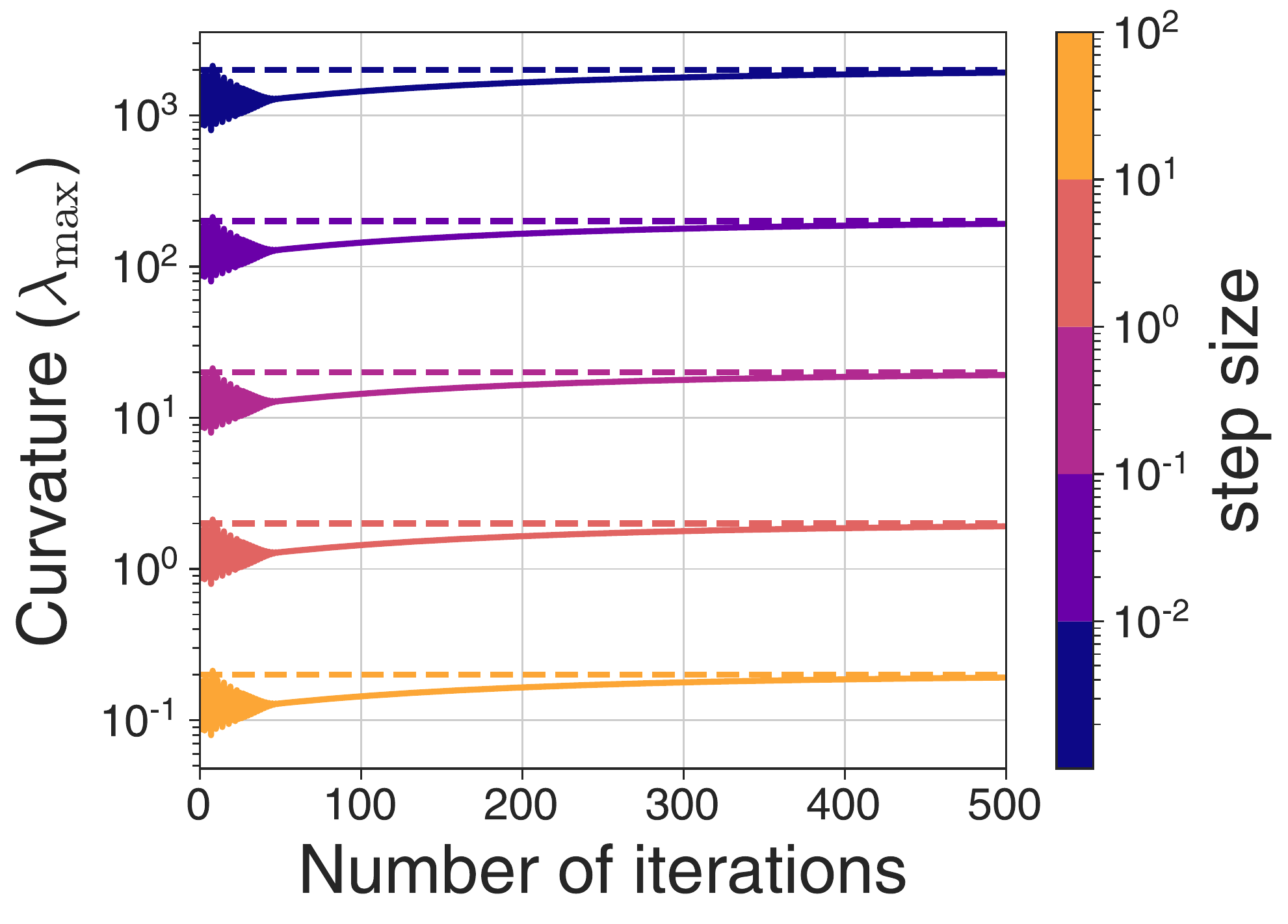} &
\includegraphics[height=0.25\linewidth]{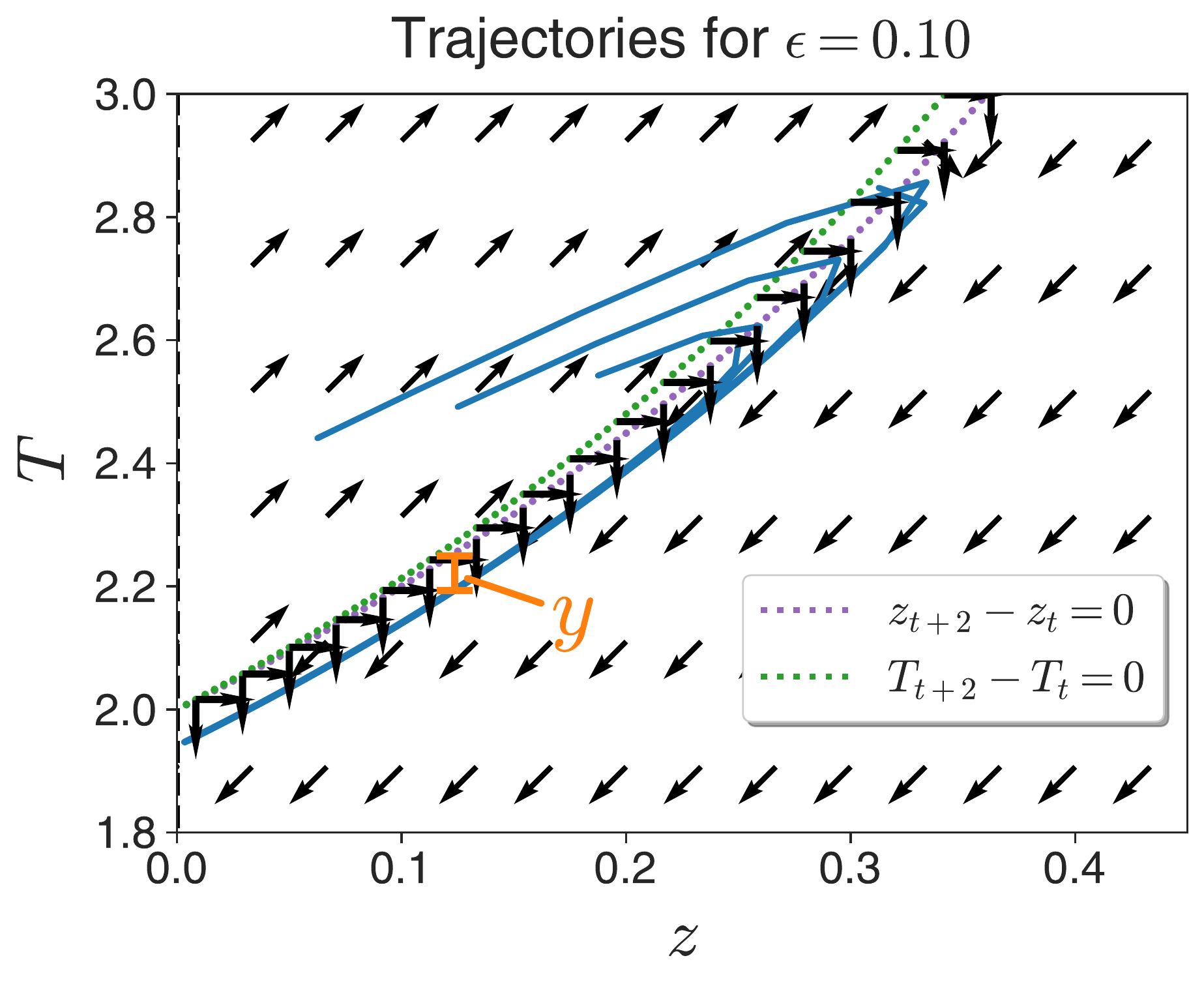}
\includegraphics[height=0.25\linewidth]{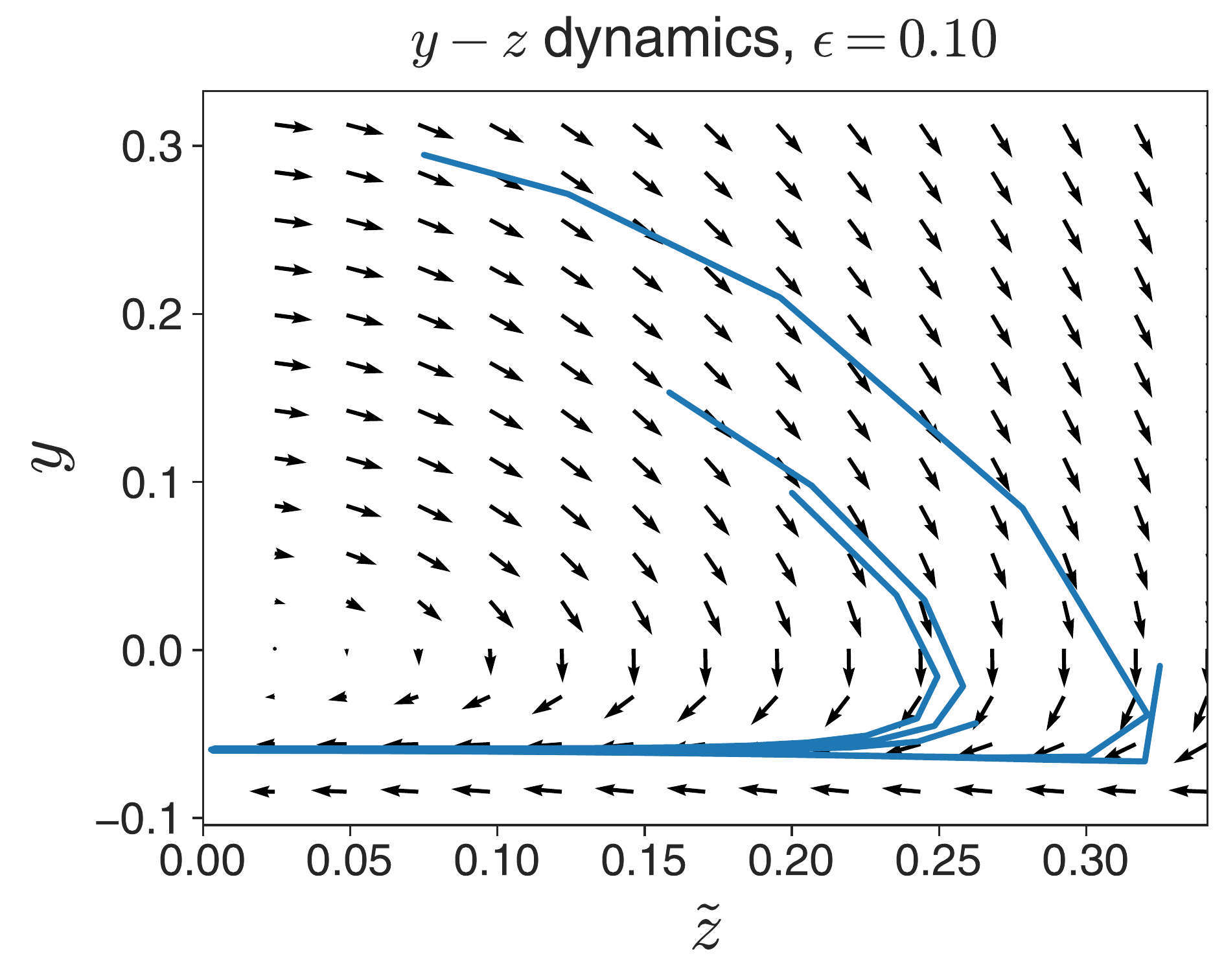}
\end{tabular}
\caption{For small $\eps$, two-eigenvalue model shows EOS behavior for various
step sizes ($\eps = 5\cdot10^{-3}$, left).
Trajectories are the same up to scaling because corresponding rescaled coordinates
$\tzz$ and $\TT(0)$ are the same at initialization.
Plotting every other iterate, we see that trajectories in
$\tzz-\TT(0)$ space stay near the
nullcline $(\tzz, f_{\tzz}(\tzz))$ - the
curve where $\tzz_{t+2}-\tzz_{t} = 0$ (middle). Changing variables to $y=\TT(0)-f_{\tzz}(\tzz)$
shows quick concentration to a curve of near-constant, small, negative $y$ (right).}
\label{fig:two_param_asym_two_step}
\end{figure}

We can prove the following theorem about the long-time dynamics of
$\tzz$ and $y$ when the higher order terms are included (Appendix \ref{app:eps_proof}):
\begin{theorem}
\label{thm:two_step_ode_approx}
There exists an $\eps_{c}>0$ such that for
a quadratic regression model with $\E = 0$ and eigenvalues $\{-\eps, 1\}$, $\eps\leq \eps_{c}$.
there exists a neighborhood $U\subset \mathbb{R}^{2}$ and interval
$[\lr_{1}, \lr_{2}]$ such that for initial $\th\in U$ and learning rate $\lr\in[\lr_{1}, \lr_{2}]$,
the model displays edge-of-stability behavior:
\begin{equation}
2/\lr-\dl_{\lam}\leq \lim_{t\to\infty}\lam_{\max}\leq 2/\lr
\end{equation}
for $\dl_{\lam}$ of $O(\eps)$.
\end{theorem}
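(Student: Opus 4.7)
The plan is to work in the transformed coordinates $(\tzz, y)$ with $y_t = \TT_t(0) - f_{\tzz}(\tzz_t)$ and to establish edge-of-stability behavior by constructing a forward-invariant trapping set around the near-attractor line $\{y \approx -\eps/2\}$ on which the $\tzz$-dynamics is a slow geometric contraction to $0$. A preparatory step is to quantify the $O(\cdot)$ remainders in the two-step equations (\ref{eq:two_step_z_big_O})--(\ref{eq:two_step_y_big_O}): expanding the implicit nullcline as $f_{\tzz}(\tzz) = 2 + c_1 \tzz + O(\tzz^2)$ by Taylor-matching against (\ref{eq:z_2_red_eps}) gives explicit polynomial bounds on the remainders on a compact neighborhood of $(0,0)$, and rewriting the $y$-equation in the form
\begin{equation*}
y_{t+2} - y_t = -8\bigl(y_t + \tfrac{\eps}{2}\bigr)\tzz_t^{\,2} + R_y(\tzz_t, y_t; \eps),
\end{equation*}
with $R_y = O(\eps\, y_t \tzz_t^{\,2}) + \eps\, O(\tzz_t^{\,3}) + O(y_t^{\,2}\tzz_t^{\,2})$, identifies the true approximate fixed line as $y^{*}(\tzz;\eps) = -\eps/2 + O(\eps^2) + O(\eps\,\tzz)$.

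The core of the proof is forward invariance under the two-step map of a set of the form
\begin{equation*}
\gR_{\eps} = \bigl\{(\tzz, y) : 0 < \tzz \le \dl,\ |y - y^{*}(\tzz;\eps)| \le \rho(\tzz,\eps)\bigr\},
\end{equation*}
where $\dl$ is a small constant independent of $\eps$ and $\rho$ is a carefully chosen envelope (a scaling of type $\rho = C \eps^{3/2}$ for moderate $\tzz$, tapering to $O(\eps^2)$ as $\tzz\to 0$, looks natural). Two estimates drive the argument. First, on $\gR_{\eps}$ the $y$-map contracts the distance to $y^{*}$ by a factor $1 - 8\tzz_t^{\,2} + O(\eps\tzz_t^{\,2})$, while the drift of the basepoint $y^{*}$ induced by the $\tzz$-step is $(dy^{*}/d\tzz)(\tzz_{t+2}-\tzz_t) = O(\eps^2\tzz_t)$, which can be made small relative to the contractive margin by choice of $\rho$. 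Second, since $y^{*}<0$ and $\rho$ is subleading in $\eps$, one has $y_t<0$ throughout $\gR_{\eps}$, so the $\tzz$-map gives $\tzz_{t+2}=(1+2y_t)\tzz_t+O(\tzz_t^{\,3})$, monotonically contracting $\tzz_t$ to $0$ at rate $1-\eps+O(\eps^2)$. Combined, these show $\gR_{\eps}$ is forward-invariant and that any trajectory inside it obeys $\TT_t(0) = y_t + f_{\tzz}(\tzz_t) \to 2 - \eps/2 + O(\eps^2)$.

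To close the argument I would take $U$ to be any small open ball in parameter space whose rescaled image $(\lr z_0, \lr\, J_0 J_0^{\top})$ lies in $\gR_{\eps}$ --- for concreteness, one centered so that $\tzz_0 \in (0,\dl/2)$ and $y_0 = -\eps/2$ --- and take $[\lr_1,\lr_2]$ a short learning-rate interval around the value that realizes it; continuity of the map $(\theta,\lr)\mapsto(\lr z(\theta), \lr J(\theta)J(\theta)^{\top})$ guarantees such $U$ exists. Feeding the limit $\TT_t(0)\to 2-\eps/2+O(\eps^2)$ into $\lam_{\max}=\TT(0)/\lr$ yields $\lim_{t\to\infty}\lam_{\max} = 2/\lr - \eps/(2\lr) + O(\eps^2/\lr)$, giving the upper bound $\lam_{\max}\le 2/\lr$ for free (since $y^{*}<0$) and the lower bound with $\dl_{\lam}=O(\eps)$ as claimed.

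The main obstacle I expect is the bookkeeping in the slow-fast invariance step: the fast-variable contraction rate $8\tzz_t^{\,2}$ vanishes precisely where the slow-variable rate $\eps$ also takes over, so the envelope $\rho$, the drift $O(\eps^2\tzz_t)$, and the remainder $R_y$ must be balanced consistently in both $\eps$ and $\tzz$ across the full range $\tzz\in(0,\dl]$, and a constant-width tube will not suffice. A secondary point needing care is that $U$ is specified in parameter space rather than rescaled coordinates, so one must verify openness of the preimage of $\gR_{\eps}$ under the smooth rescaling --- routine but worth a sentence. Extending $U$ beyond a narrow neighborhood of the attractor, e.g., to include initial conditions on the progressive-sharpening side with $\TT_0(0)$ far from $2$, would require a separate transient analysis showing that such trajectories enter $\gR_{\eps}$ in finitely many steps and lies outside the local slow-fast picture developed here.
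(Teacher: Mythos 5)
Your proposal works in the same $(\tzz, y)$ coordinates and slow--fast attractor framework as the paper's proof (Appendix~\ref{app:eps_proof}), but the envelope you propose for the invariant tube has the wrong scaling and the construction would fail as written. You center the tube at the $\tzz$-dependent fixed line $y^*(\tzz;\eps)$ and propose a half-width $\rho \sim C\eps^{3/2}$ for moderate $\tzz$ tapering to $O(\eps^2)$ as $\tzz\to 0$. Forward invariance of $\{|y - y^*(\tzz)| \le \rho\}$ requires the per-iterate contraction margin $8\tzz^2\rho$ to dominate the drift of the basepoint, $|(dy^*/d\tzz)(\tzz_{t+2}-\tzz_t)| = O(\eps^2\tzz)$, plus the remainder $O(\eps^2\tzz^2)+O(\eps\tzz^3)$. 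Dividing through by $\eps\tzz^2$ gives the requirement $\rho/\eps \gtrsim \eps/\tzz + \eps + \tzz$; the $\eps/\tzz$ term blows up as $\tzz\to 0$, so the required $\rho$ must \emph{grow} as $\tzz$ shrinks, not taper. With $\rho \le C\eps^{3/2}$ the invariance inequality fails once $\tzz\lesssim\sqrt\eps$, well before the trajectory has converged. You correctly identify this balance as the main obstacle, but the scaling you propose aggravates it, and your assertion that ``a constant-width tube will not suffice'' is exactly backwards.

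The paper sidesteps the problem by centering the decomposition at the \emph{constant} value $y^* = -\eps/(2-\tfrac32\eps+2\eps^2)$, so there is no drift term at all, with the residual $\tzz$-dependence absorbed into the controlled higher-order remainders $y^2\tzz^2 g_{1,\eps}+\eps\tzz^3 g_{2,\eps}$ of Lemma~\ref{lem:zy_dyn_bounds}; it then shows the constant-width interval $[(1+3B/(8-B))y^*,\,0]$ --- of width $O(\eps)$, an order of magnitude wider than your tube --- is forward-invariant (Phase~2 of Appendix~\ref{app:eps_proof}). At either endpoint the contraction margin $\sim\eps\tzz^2$ dominates $O(\eps^2\tzz^2)+O(\eps\tzz^3)$ for all small $\tzz$ and $\eps$, with no drift to fight. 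The paper also proves a Phase~1 transient: initializations with $0<y_0\le\tzz_0^2$ (starting strictly above the attractor, in a progressive-sharpening configuration) reach the Phase-2 interval in finite time with $\tzz_t\le 2\tzz_0$ throughout, which enlarges $U$ beyond a thin shell around the attractor. You acknowledge omitting that transient; without it your $U$ is an unnaturally narrow neighborhood of the attractor curve, and with it you would essentially be reconstructing the paper's two-phase argument with a centering choice that makes the bookkeeping strictly harder.
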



Therefore, unlike the catapult phase model, the small $\eps$ provably has EOS behavior - whose
mechanism is well-understood by the $\tzz-y$ coordinate transformation.

\section{Quadratic regression model}

\label{sec:quad_reg_model}

\subsection{General model}

While the model defined in Equation \ref{eq:quad_model_one_data}
provable displays edge-of-stability behavior, it required tuning of
the eigenvalues of $\Qm$ to demonstrate it.
We can define a more general model which exhibits edge-of-stability behavior
with less tuning. We define the \emph{quadratic regression model} as follows.
Given a $\P$-dimensional parameter vector $\th$, the $\D$-dimensional
output vector $\z$ is given by
\begin{equation}
\z = \y+\G^\top\th+\frac{1}{2}\Q(\th, \th)
\end{equation}
Here $\y$ is a $\D$-dimensional vector,
$\G$ is a $\D\times\P$-dimensional matrix, and $\Q$ is a $\D\times\P\times\P$-
dimensional
tensor symmetric in the last two indices - that is, $\Q(\cdot, \cdot)$ takes two
$\P$-dimensional vectors as input, and outputs a $\D$-dimensional vector verifying $\Q(\th, \th)_\alpha = \th^\top \Q_{\alpha} \th$.
If $\Q = \boldsymbol{0}$, the model corresponds to linearized learning (as in the NTK regime).
When $\Q\neq \boldsymbol{0}$, we obtain the first correction to NTK regime. We note that:
\begin{equation}
\G_{\al i} = \left.\frac{\partial\z_{\al}}{\partial\th_{i}}\right|_{\th=0},~\Q_{\al ij} = \frac{\partial^{2}\z_{\al}}{\partial\th_{i}\partial\th_{j}},\to\J = \G+\Q(\th, \cdot)\,,
\end{equation}
for the $\D\times\P$ dimensional Jacobian $\J$. For $\D = 1$, we recover the model of Equation \ref{eq:quad_model_one_data}.
In the remainder of this section, we will study the limit as $\D$ and $\P$ 
increase with fixed ratio $\D/\P$.

The quadratic regression model corresponds to a model with a constant second 
derivative
with respect to parameter changes - or a second order expansion of a more complicated ML
model. Quadratic expansions of
shallow MLPs have been previously studied \citep{bai_linearization_2020, zhu_quadratic_2022},
but we will provide evidence that even random, unstructured quadratic regression models lead to
EOS behavior. We note that this model is related to, but not equivalent to
the second order expansion in the neural tangent hierarchy
\citep{huang_dynamics_2020} (see Appendix \ref{app:nth} for details).

\subsection{Gradient flow dynamics}

We will focus on training with squared loss
$\Lo(\z) = \frac{1}{2}\sum_{\al}\z_{\al}^{2}$.
We begin by considering the dynamics under gradient flow (GF):
\begin{equation}
\dot{\th} = -\frac{\partial \Lo(\z)}{\partial\th} = -\J^{\top}\z\,.
\end{equation}
We can write the dynamics in the output space $\z$ and the Jacobian $\J$ as
\begin{equation}
\dot{\z} = \J\dot{\th} = -\J\J^{\top}\z,~\dot{\J} = - \Q(\J^{\top}\z,\cdot)
\end{equation}
When $\Q = \boldsymbol{0}$ (linearized/NTK regime), $\J$ is constant,
the dynamics are then linear in $\z$, and are controlled by the eigenstructure of $\J\J^{\top}$,
the empirical NTK. In this regime there is no EOS behavior.

We are interested in settings where progressive sharpening occurs under GF.
We can study the dynamics of the maximum eigenvalue $\lam_{\max}$ of $\J\J^{\top}$
at early times for random initializations.
In Appendix \ref{app:gf_dynamics}, we prove the following theorem:
\begin{theorem}
\label{thm:ave_curv_deriv}
Let $\z$, $\J$, and $\Q$ be initialized with i.i.d. elements with zero mean and
variances $\sgz^2$, $\sgJ^2$, and $1$ respectively, with distributions
invariant to rotation in data and parameter space, and have finite
fourth moments. Let $\lam_{\max}$ be the largest
eigenvalue of $\J\J^{\top}$. In the limit of large $\D$ and $\P$, with fixed ratio
$\D/\P$, at initialization we have
\begin{equation}
\expect[\dot{\lam}_{\max}(0)] = 0,~\expect[\ddot{\lam}_{\max}(0)]/\expect[\lam_{\max}(0)] = \sgz^{2}
\end{equation}
where $\expect$ denotes the expectation over $\z$, $\J$, and $\Q$ at initialization.
\end{theorem}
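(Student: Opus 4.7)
My plan is to apply eigenvalue perturbation theory to $K = \J\J^\top$ and then take expectations in the order $\expect_{\Q} \circ \expect_{\z} \circ \expect_{\J}$, exploiting independence and rotational invariance at each step. For the first derivative, the Hellmann--Feynman identity gives $\dot{\lam}_{\max} = v^\top \dot{K}\,v$, where $v$ is the unit top eigenvector of $K$. Since $\dot{K} = \dot{\J}\J^\top + \J\dot{\J}^\top$ with $\dot{\J} = -\Q(\J^\top\z,\cdot)$ linear in the entries of $\Q$, and since $v$ at $t=0$ depends only on $\J=\G$, the zero-mean property $\expect_{\Q}[\Q]=0$ together with independence of $\Q$ from $(\J,\z)$ annihilates $\expect[\dot{\lam}_{\max}(0)]$.

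For the second derivative I use the standard perturbation formula
\[
\ddot{\lam}_{\max} \;=\; v^\top \ddot{K}\,v \;+\; 2\sum_{i\neq\max}\frac{(v_i^\top\,\dot{K}\,v)^2}{\lam_{\max}-\lam_i},
\]
where $\{v_i\}$ are the remaining orthonormal eigenvectors of $K$. I expand $\ddot{K}=\ddot{\J}\J^\top+2\dot{\J}\dot{\J}^\top+\J\ddot{\J}^\top$, with $\ddot{\J}_{\alpha i}=\sum_j Q_{\alpha ij}\ddot{\th}_j$ and $\ddot{\th}=-d(\J^\top\z)/dt$ computed from the GF dynamics. The resulting expression is a polynomial in $\Q$: the degree-one pieces vanish under $\expect_{\Q}$, while the degree-two pieces are evaluated using the symmetric-tensor Wick identity $\expect[Q_{\alpha ij}Q_{\beta kl}] = \delta_{\alpha\beta}(\delta_{ik}\delta_{jl}+\delta_{il}\delta_{jk})$, valid to leading order in dimension. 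The central identities that emerge are
\[
\expect_{\Q}[\dot{\J}\dot{\J}^\top]_{\alpha\beta} \;\propto\; \delta_{\alpha\beta}\,\z^\top K\,\z, \qquad \expect_{\Q}[\ddot{\J}]_{\alpha i} \;\propto\; \z_\alpha (\J^\top\z)_i,
\]
and an analogous expression for $\expect_{\Q}[(v_i^\top\,\dot{K}\,v)^2]$ in terms of $\z^\top K\,\z$ and $\lam_i(v_i^\top\z)^2$.

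Averaging next over $\z$ (independent of $\J$ at $t=0$, with isotropic covariance $\sigma_z^2 I$) extracts the $\sigma_z^2$ factor via $\expect_{\z}[\z^\top K\,\z]=\sgz^2\Tr K$ and $\expect_{\z}[(u^\top\z)^2]=\sgz^2$ for any unit $u$ independent of $\z$. Rotational invariance of $\J$ and its Marchenko--Pastur limiting spectrum in the regime $\D,\P\to\infty$ with $\D/\P$ fixed then reduce the eigenvalue sum in the perturbation term to a convergent integral against the limiting spectral density. All contributions are homogeneous of degree one in $\sgJ^2$ (through $\lam_{\max}\propto\sgJ^2$ and $\Tr K\propto \D\sgJ^2$), which cancels against $\expect[\lam_{\max}(0)]\propto\sgJ^2$ when forming the ratio, leaving the predicted $\sgz^2$.

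The main obstacle is controlling the perturbation sum, whose denominators $\lam_{\max}-\lam_i$ vanish at the spectral edge. The Marchenko--Pastur density decays as $\sqrt{\lam_+-\lam}$ there, so the relevant integrals are convergent in the limit; however, careful dimensional bookkeeping is required to verify that this piece combines with the direct contribution $v^\top\ddot{K}\,v$ precisely into $\sgz^2\,\expect[\lam_{\max}(0)]$, with all dimension factors cancelling. A secondary technical point is that $\expect_{\Q}[Q_{\alpha ij}Q_{\beta kl}]$ has $O(1/\P)$ diagonal corrections from the symmetry $Q_{\alpha ij}=Q_{\alpha ji}$, which must be shown to be subleading.
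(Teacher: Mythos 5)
Your proposal is broadly in the same spirit as the paper's proof (Wick calculus for $\Q$, independence of $\z$, Marchenko--Pastur asymptotics for $\J$), but it takes a genuinely more careful route on the second derivative. The paper rotates to the singular-value basis of $\J$ at $t=0$, writes $\lam_\al = \sigma_\al^2$, and then computes $\ddot\lam_\al = 2(\dot\sigma_\al^2 + \sigma_\al\ddot\sigma_\al)$ with $\dot\sigma_\al$ and $\ddot\sigma_\al$ identified with the \emph{diagonal} entries $\dot{\J}_{\al\al}$ and $\ddot{\J}_{\al\al}$ of the Jacobian derivatives in that frozen frame. You instead invoke the full second-order eigenvalue perturbation formula
\begin{equation*}
\ddot\lam_{\max} \;=\; v^\top\ddot{K}\,v \;+\; 2\sum_{i\neq\max}\frac{(v_i^\top\dot{K}\,v)^2}{\lam_{\max}-\lam_i}\,,
\end{equation*}
and evaluate $v^\top\ddot{K}v$ via $\ddot{K} = \ddot{\J}\J^\top + 2\dot{\J}\dot{\J}^\top + \J\ddot{\J}^\top$. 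This is a more faithful decomposition: the paper's version drops both the off-diagonal piece $\sum_{i\neq\al}\dot{\J}_{\al i}^2$ of $v^\top\ddot{K}v$ and the entire perturbation sum. You correctly identify the perturbation sum as the main technical obstacle, and you are right that it is delicate: near the spectral edge the gaps $\lam_{\max}-\lam_i$ shrink on the Tracy--Widom scale $\sim \P^{-2/3}$, and whether the sum is subleading depends on a delocalization estimate for $v_i^\top\dot{K}v$ across the eigenbasis combined with the edge behavior of the spectral density. Neither the paper's appendix nor your sketch supplies that estimate, so the obligation you flag at the end of your proposal is real and left open on both sides. Your intermediate Wick identities ($\expect_\Q[\dot{\J}\dot{\J}^\top]_{\al\bt}\propto\delta_{\al\bt}\,\z^\top K\z$, $\expect_\Q[\ddot{\J}]_{\al i}\propto\z_\al(\J^\top\z)_i$, up to $(\P+1)$ prefactors), the isotropic $\z$-average, and the MP step for $\J$ are all sound, so modulo the spectral-edge control your plan is correct and, if completed, would be a cleaner derivation than the paper's.
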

Much like in the $\D = 1$ case, Theorem \ref{thm:ave_curv_deriv} suggests
that it is easy to find initializations that show progressive sharpening - 
and increasing $\sgz$ makes sharpening more prominent.

\subsection{Gradient descent dynamics}

We now consider finite-step size gradient descent (GD) dynamics. The dynamics for $\th$ are
given by:
\begin{equation}
\th_{t+1} = \th_{t}-\lr\J^{\top}_{t}\z_{t}\,.
\end{equation}
In this setting, the dynamic equations can be written as
\begin{equation}
\z_{t+1}-\z_{t} = -\lr\J_{t} \J_{t}^{\top}\z_{t}  +\frac{1}{2}\lr^2 \Q(\J_{t}^{\top}\z_{t},\J_{t}^{\top}\z_{t})
\label{eq:GD_in_z_general}
\end{equation}
\begin{equation}
\J_{t+1} -\J_{t} = -\lr \Q(\J_{t}^{\top}\z_{t}, \cdot)\,.
\label{eq:GD_in_J_general}
\end{equation}
If $\Q = \boldsymbol{0}$, the dynamics reduce to discrete gradient
descent in a quadratic potential - which converges iff $\lam_{\max}<2/\lr$.

One immediate question is: when does the $\lr^2$ in Equation \ref{eq:GD_in_z_general} affect
the dynamics? Given that it scales with higher powers of $\lr$ and $\z$ than the first term,
we can conjecture that the ratio of the magnitudes of the terms, $r_{NL}$, is proportional
to $||\z||_{2}$ and $\lr$. A calculation in Appendix \ref{app:gd_timescales} shows that,
for the random rotationally invariant initialization, we have:
\begin{equation}
r_{NL} \equiv \left(\frac{\expect[|| \frac{1}{2}\lr^2 \Q(\J_{0}^{\top}\z_{0},\J_{0}^{\top}\z_{0})||_{2}^{2}]}{\expect[||\lr\J_{0} \J_{0}^{\top}\z_{0} ||_{2}^{2}]}\right)^{1/2} = \frac{1}{2}\lr\sgz\D\,,
\end{equation}
where as before the expectation is taken over the initialization of $\z$, $\J$, and $\Q$.
This suggests that increasing the learning rate increases the deviation of the dynamics
from GF (which is obvious), but increasing $||\z||$ \emph{also}
increases the deviation from GF.

We can see this phenomenology in the dynamics of the GD equations (Figure \ref{fig:gd_traj}).
Here we plot different trajectories for random initializations of the type in
Theorem \ref{thm:ave_curv_deriv} with $\D = 60$, $\P = 120$, and $\lr = 1$. As $\sgz$ increases,
so does the curvature $\lambda_{\max}$  (as suggested by Theorem \ref{thm:ave_curv_deriv}),
and when $\sgz$ is $O(1)$, the dynamics is non-linear (as predicted by $r_{NL}$) and
EOS behavior emerges. This suggests that the second term in
Equation \ref{eq:GD_in_z_general} is crucial for the stabilization of $\lam_{max}$.

\begin{figure}
    \centering
    \begin{tabular}{cc}
    \includegraphics[height=0.28\linewidth]{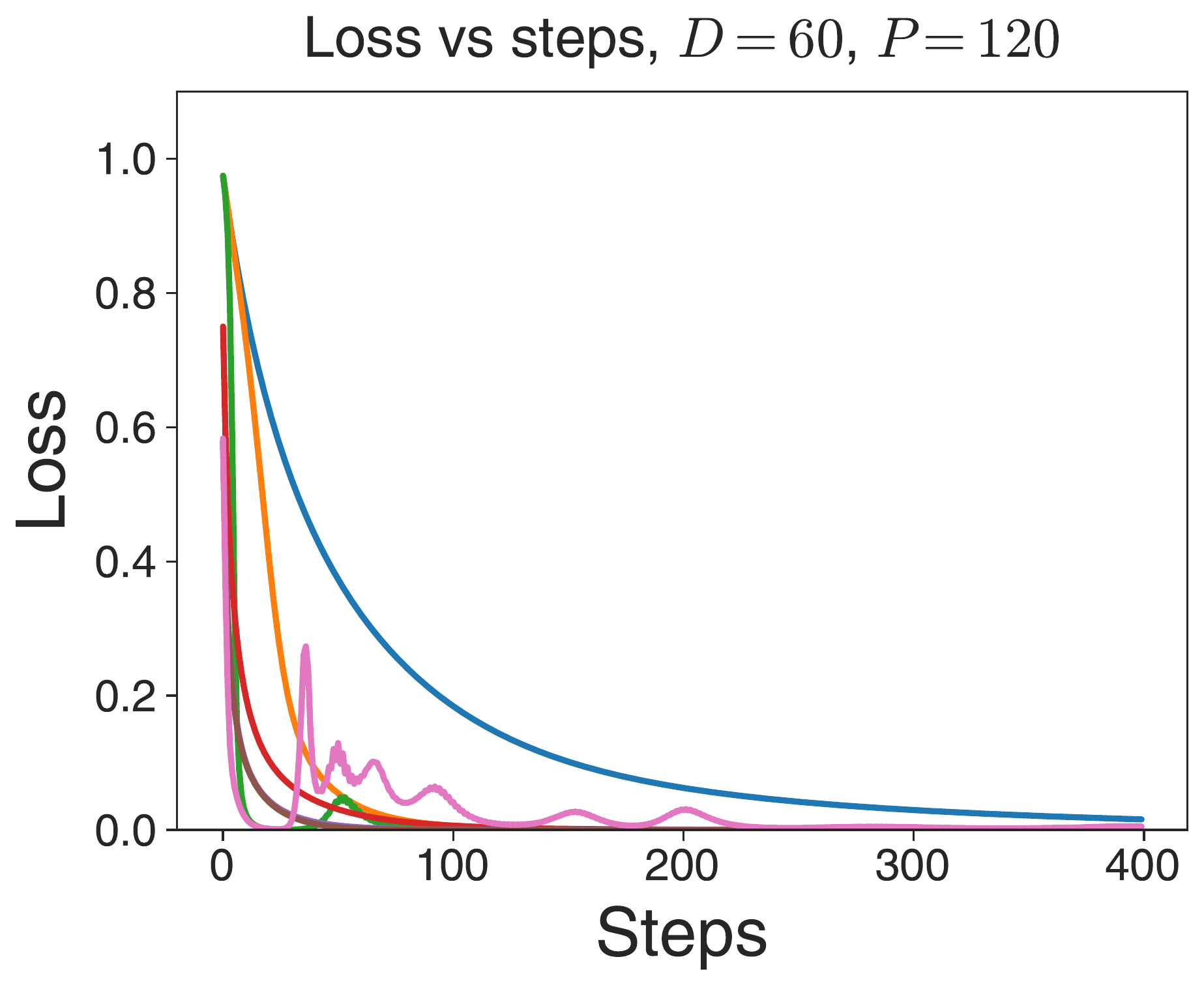} &
    \includegraphics[height=0.28\linewidth]{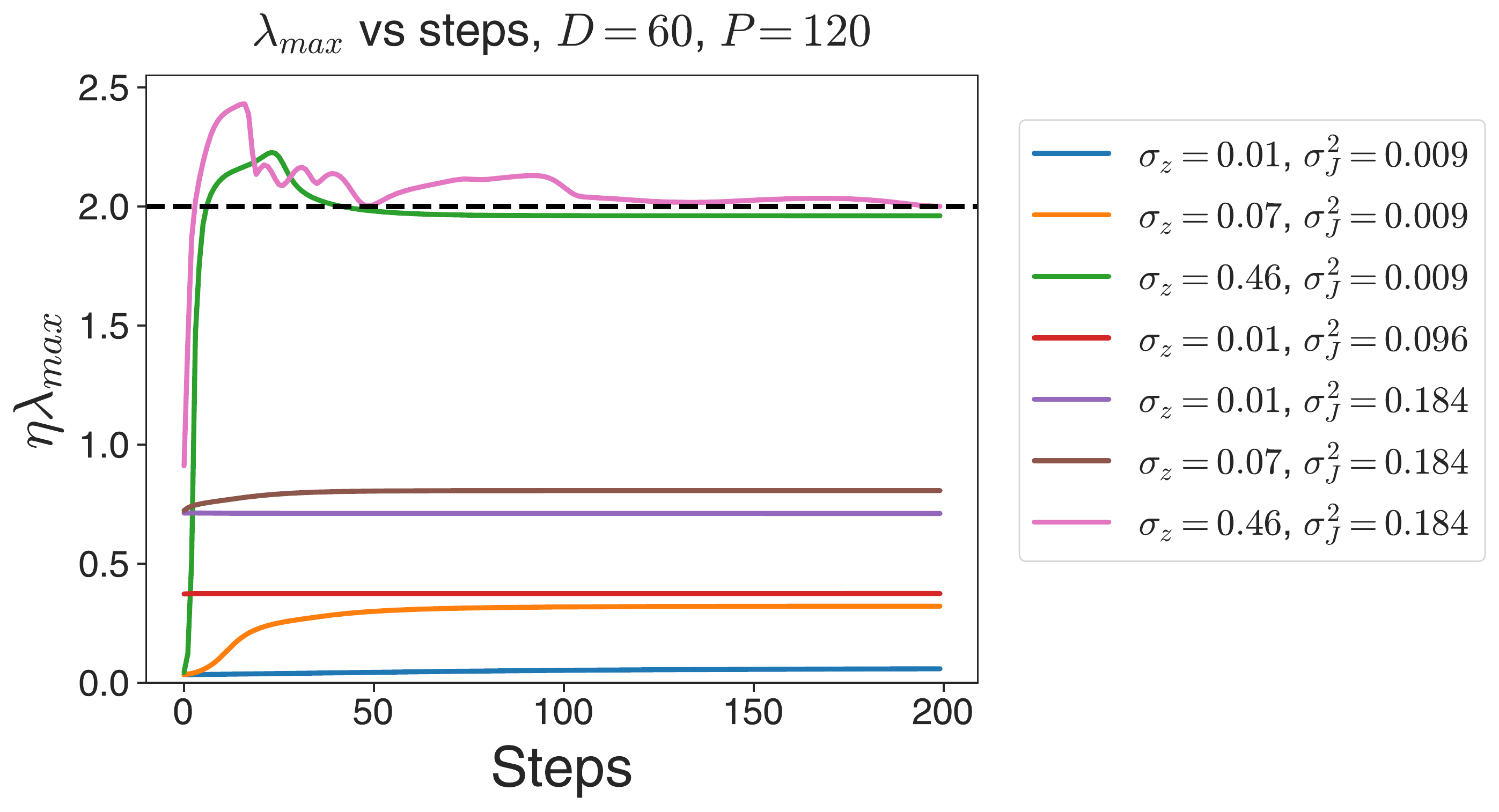}
    \end{tabular}
    
    \caption{Gradient descent dynamics in the quadratic regression model. As $\z$ initialization
    variance $\sgz^2$ increases, so does the curvature $\lambda_{\max}$ upon convergence. As sharpening drives $\lr\lam_{\max}$ near $2$, larger
    $\sgz$ allows for non-linear effects to induce edge-of-stability behavior (right).
    Resulting loss trajectories are non-monotonic but still converge to $0$ (left).}
    \label{fig:gd_traj}
\end{figure}

We can confirm this more generally by initializing over various $\lr$,
$\D$, $\P$,
$\sgz$, and $\sgJ$ over multiple seeds, and plotting the resulting phase diagram of
the final $\lam_{\max}$ reached. We can simplify the plotting with some rescaling
of parameters and initializations. For example, in the rescaled variables
\begin{equation}
\tl{\z} = \lr\z,~\tl{\J} = \lr^{1/2}\J\,,
\end{equation}
the dynamics are equivalent to Equations \ref{eq:GD_in_z_general} and 
\ref{eq:GD_in_J_general} with $\lr = 1$. As in the $\tzz-\TT(0)$ model of Equations \ref{eq:Tzmodel}--\ref{eq:Tzmodel2},
$\lam_{\max}$ in the rescaled coordinates is equivalent to
$\lr\lam_{\max}$ in the unscaled coordinates.
We can also define rescaled initializations for $\z$ and $\J$. If we set
\begin{equation}
\sigma_{z} = \tilde{\sigma}_{z}/\D,~\sigma_{J} = \tilde{\sigma}_{J}/\left(\D\P\right)^{1/4}\,,
\end{equation}
then we have $r_{NL} = \tsgz$
which allows for easier comparison across $(\D, \P)$ pairs.

Using this initialization scheme, we can plot the final value of $\lam_{\max}$ reached
as a function of $\tsgz$ and $\tsgJ$ for $100$ independent
random initializations for each $\tsgz$, $\tsgJ$ pair
(Figure \ref{fig:phase_plane_GD}). We see that
the key is for $r_{NL} = \tsgz$ to be $O(1)$ - corresponding to both progressive sharpening
and non-linear dynamics near initialization. In particular, initializations with
small $\tsgJ$ values which converge at the EOS correspond to trajectories which
first sharpen, and then settle near $\lam_{\max} = 2/\lr$.
Large $\tsgz$ and large $\tsgJ$ dynamics diverge. There
is a small band of initial $\tsgJ$ over a wide range of $\tsgz$ which have final
$\lam_{\max} \approx 2/\lr$; these correspond to models initialized near the EOS,
which stay near it.

\begin{figure}[h]
    \centering
    \begin{tabular}{ccc}
    \includegraphics[width=0.35\linewidth]{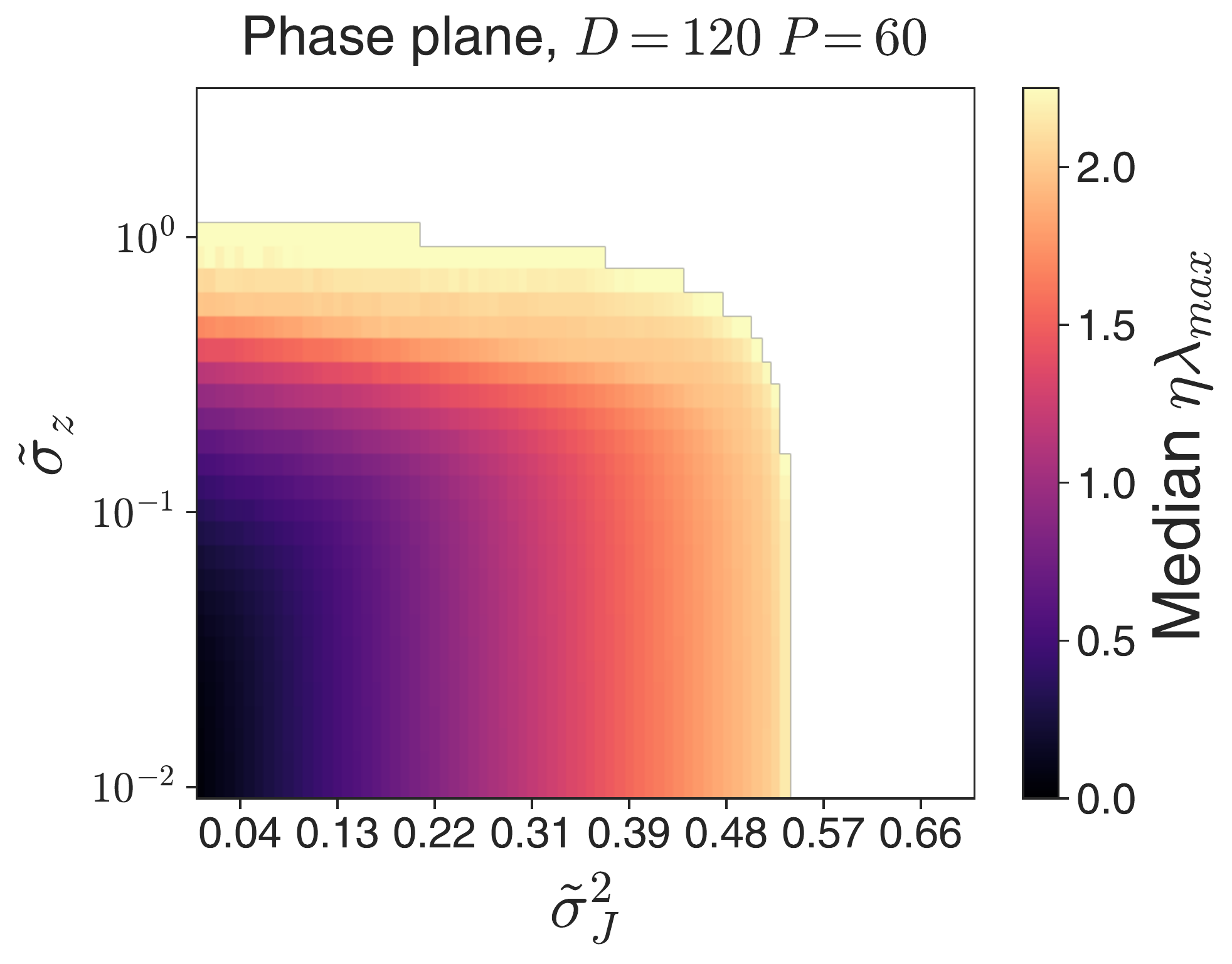} & 
    \includegraphics[width=0.35\linewidth]{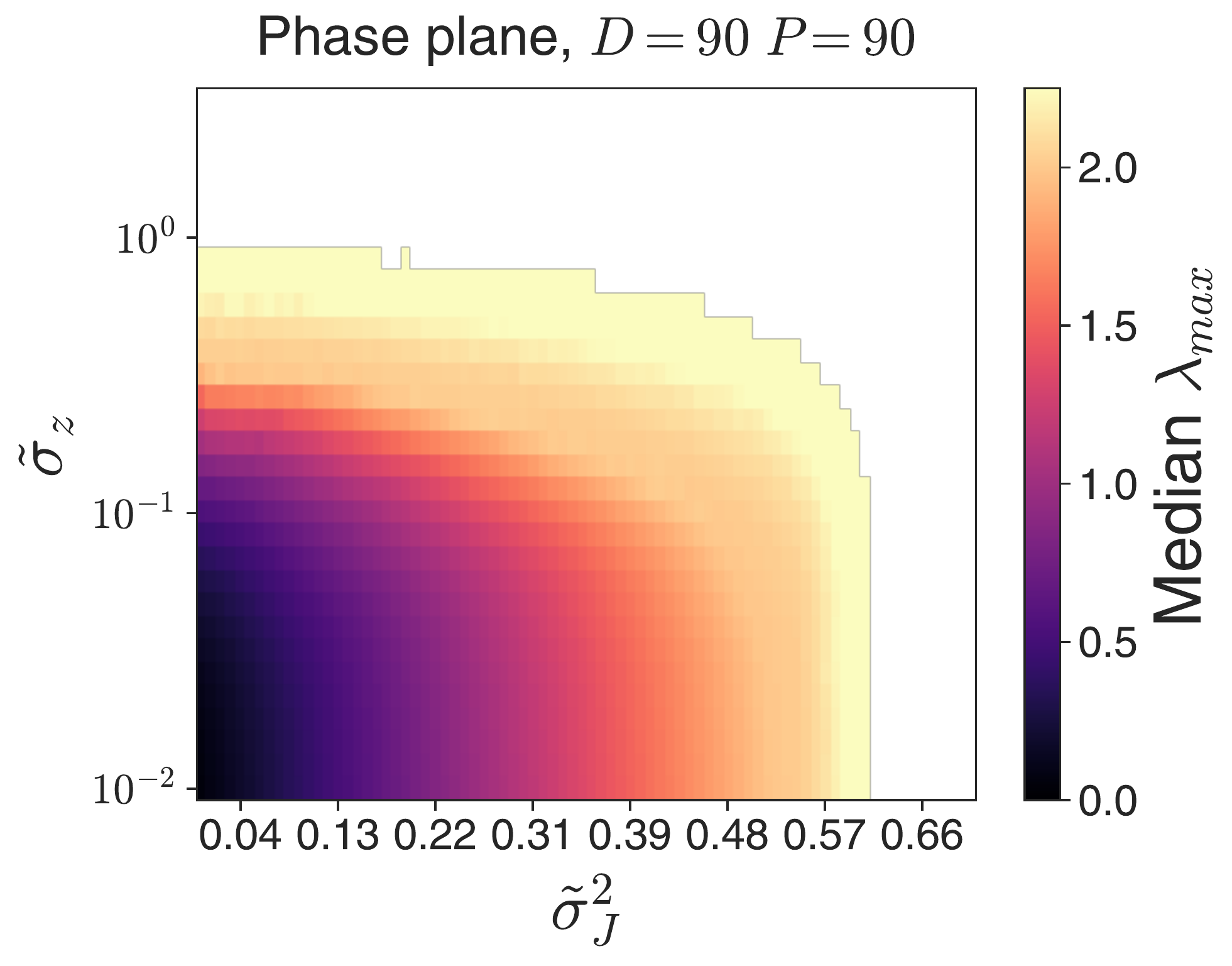} &
    \includegraphics[width=0.35\linewidth]{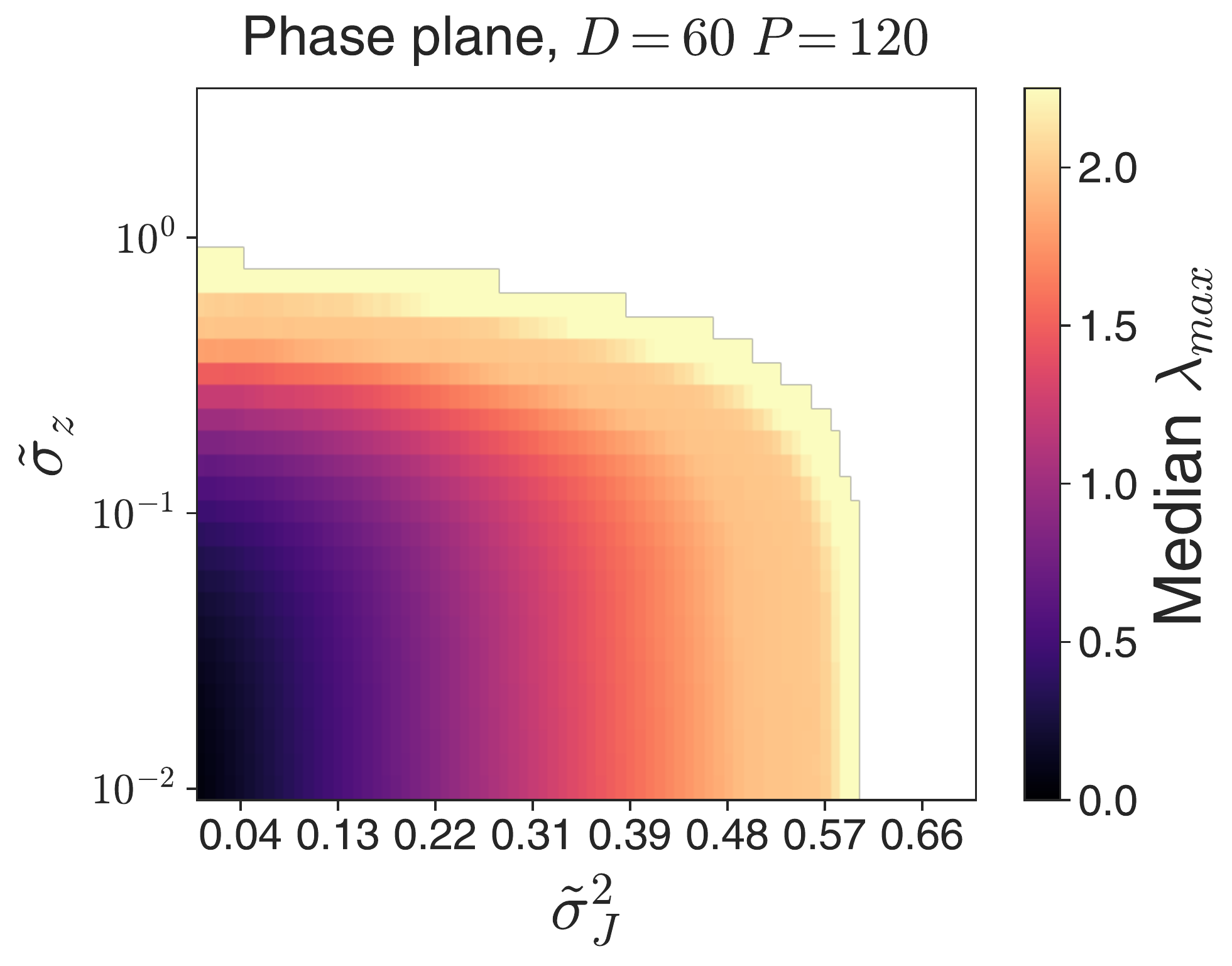} 
    \end{tabular}
    \caption{$\tsgz$/$\tsgJ^2$ phase planes for quadratic regression models, for various $\D$ and $\P$.
    Models were initialized with $100$ random seeds for each $\tsgz$, $\tsgJ$ pair and
    iterated until convergence. For each pair $\tsgz, \tsgJ^2$ we plot the median $\lam_{\max}$ of the NTK $\J^{\top}\J$.
    For intermediate $\tsgz$, where both sharpening and non-linear $\z$ dynamics occur,
    trajectories tend to converge so $\lam_{\max}$ of the NTK
    is near $2/\lr$ (EOS).}
    \label{fig:phase_plane_GD}
\end{figure}

This suggests that progressive sharpening and edge of stability aren't uniquely
features of neural network models, and could be a more general property of learning in high-dimensional, 
non-linear models.

\section{Connection to real world models}

\label{sec:real_world_model}

In this section we examine how representative is the proposed model and the developed theory to the behavior of ``real world''
models. Following \cite{cohen_gradient_2022}, we trained a $2$-hidden layer $\tanh$ 
network using the squared loss on
$5000$ examples from CIFAR10 with learning rate $10^{-2}$
- a setting which shows edge of
stability behavior. Close to the onset of EOS, we approximately
computed $\lam_{1}$, the largest eigenvalue of $\J\J^{\top}$, and its corresponding eigenvector
$\v_{1}$
using a Lanczos method \citep{ghorbani_investigation_2019, novak_neural_2019}. We use
$\v_{1}$ to compute $\zz_{1} = \v_{1}^\top\z$, where $\z$ is the vector of residuals
$f(\X, \th)-\Y$ for neural network function $f$, training inputs $\X$, labels $\Y$, and
parameters $\th$.
The EOS behavior in the NTK is similar to the EOS behavior defined with respect to the full Hessian in \cite{cohen_gradient_2022}
(Figure \ref{fig:cifar_eos}, left and right).
Once again, plotting the trajectories at every other step gets rid of the high frequency oscillations (Figure \ref{fig:cifar_eos}, middle). Unlike the $\D = 1$, $\P = 2$ model,
there are multiple crossings of the critical line
$\lam_{\max} = 2/\lr$ line.
\begin{figure}[h]
    \centering
    \begin{tabular}{ccc}
    \includegraphics[width=0.3\linewidth]{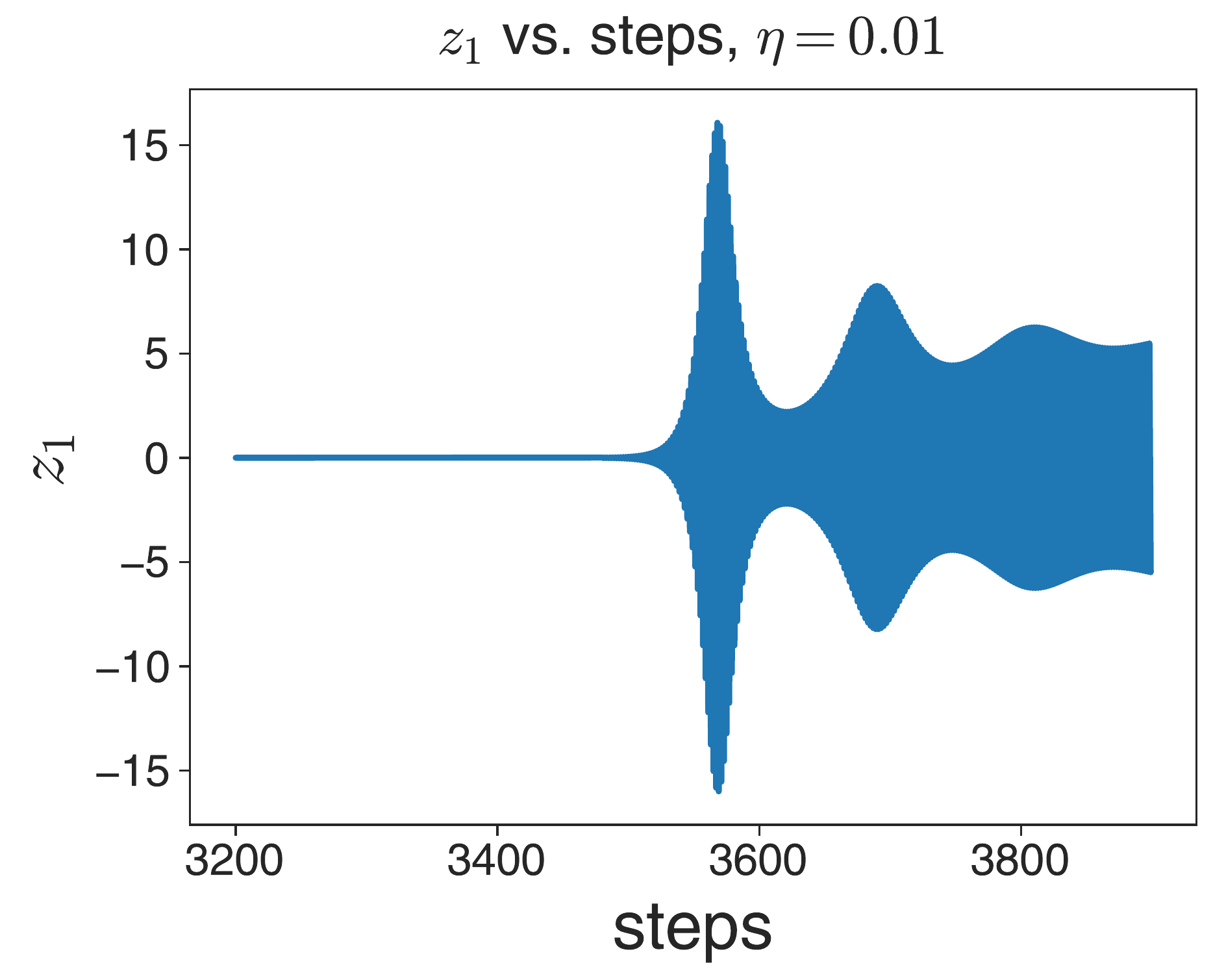} &
 \includegraphics[width=0.3\linewidth]{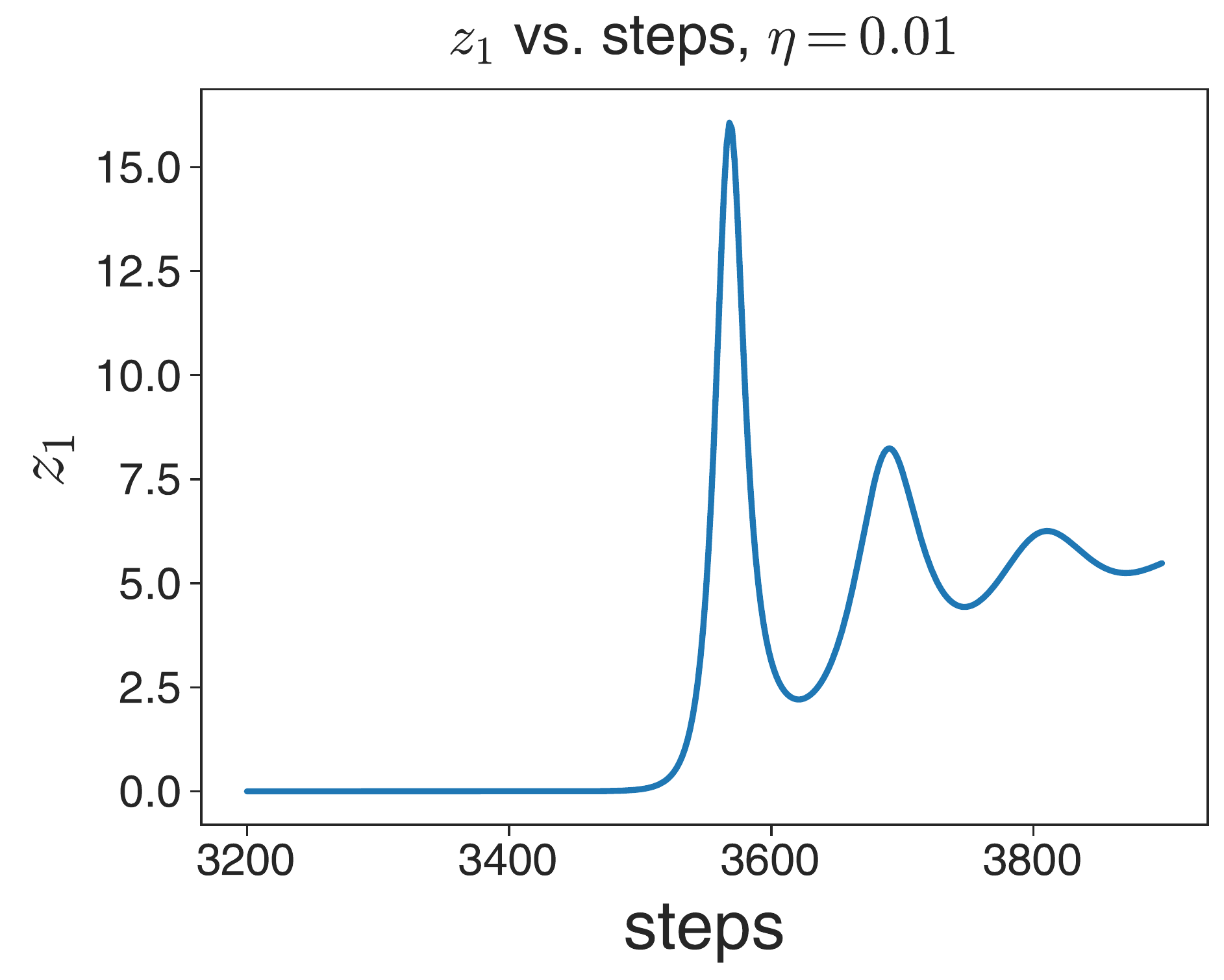} & \includegraphics[width=0.3\linewidth]{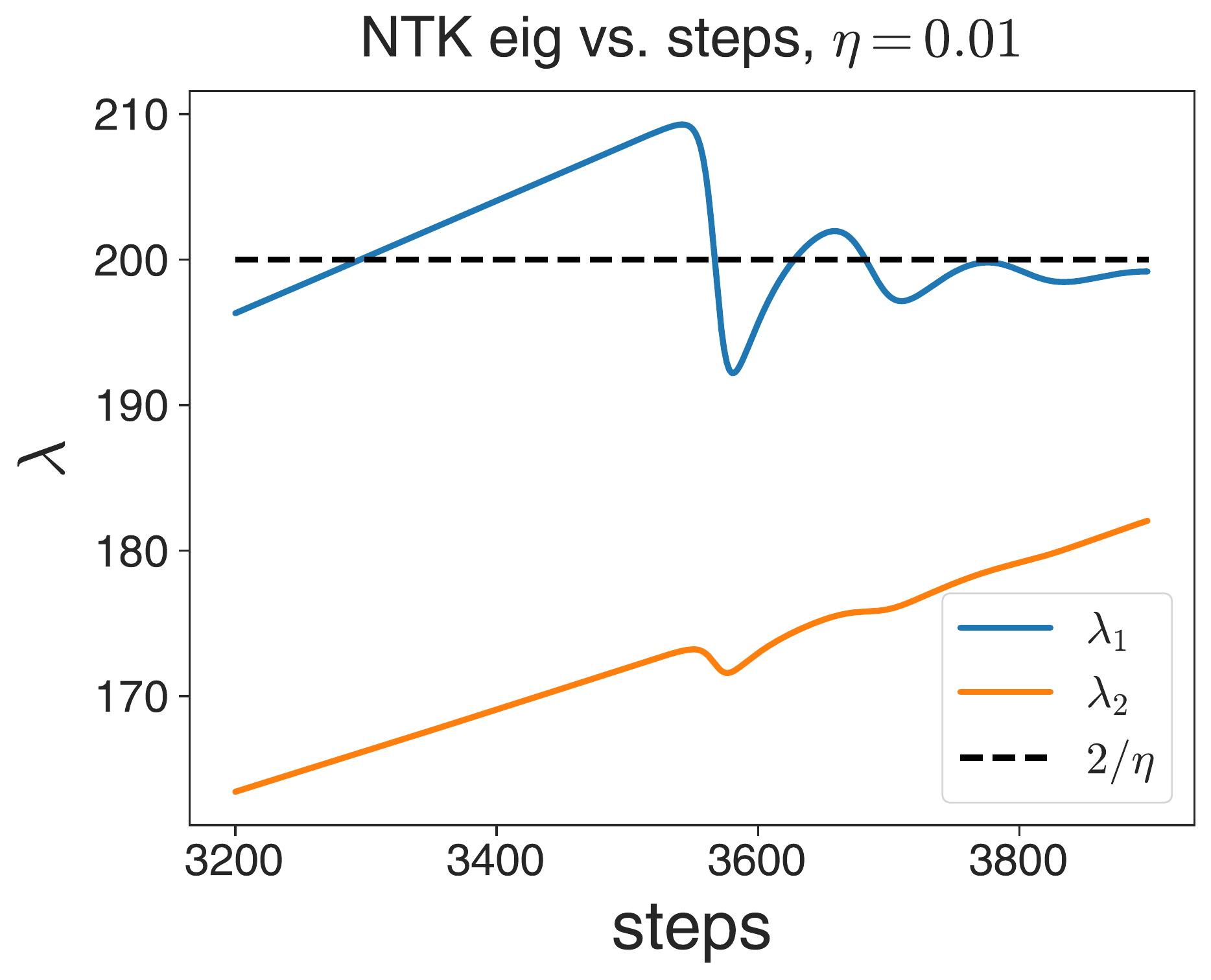}
    \end{tabular}
    \caption{A FCN trained on CIFAR shows multiple cycles of sharpening and edge-of-stability
    behavior. $\zz_{1}$, the projection of the training set residuals $f(\X,\th)-\Y$
    onto the top NTK eigenmode $\v_{1}$, increases in magnitude and
    oscillates around $0$ (left). Plotting dynamics every two steps removes high
    frequency oscillations
    (middle). The largest eigenvalue $\lam_{1}$ crosses the edge of stability multiple times,
    but the second largest eigenvalue $\lam_{2}$ remains below the edge of stability.}
    \label{fig:cifar_eos}
\end{figure}

There is evidence that low-dimensional features of a quadratic regression model
could be used to explain some aspects of EOS behavior. We empirically
compute the the second derivative of the output $f(\x,\th)$ by automatic differentiation. We denote by $\Q(\cdot, \cdot)$ the resulting tensor. We can use matrix-vector products to compute the
spectrum of the matrix
$\Qm_{1}\equiv \v_{1}\cdot\Q(\cdot, \cdot)$, which is projection of the output of $\Q$ in the $\v_{1}$
direction, without instantiating $\Q$ in memory (Figure \ref{fig:cifar_comparison}, left). This figure reveals that the spectrum does not shift much from
step $3200$ to $3900$ (the range of our plots). This suggests that $\Q$ doesn't change much
as these EOS dynamics are displayed.
We can also see that $\Q$ is much larger in the $\v_{1}$ direction than
a random direction.

Let $y$ be defined as $y = \lam_{1}\lr-2$. Plotting the two-step dynamics of $\zz_{1}$ versus $2yz$ we see a remarkable agreement (Figure \ref{fig:cifar_comparison}, middle). This is the same form
that the dynamics of $\tzz$ takes in our simplified model. It can also be found by iterating
Equation \ref{eq:GD_in_z_general} twice with fixed Jacobian for $y = \lam_{1}\lr-2$ and
discarding terms higher order in $\lr$. This suggests that during this particular EOS behavior,
much like in our simplified model the dynamics of the eigenvalue is more important
than any rotation in the eigenbasis.

The dynamics of $y$ is more complicated; $y_{t+2}-y_{t}$ is anticorrelated with
$\zz_{1}^{2}$ but there is no low-order functional form in terms of $y$ and $\zz_{1}$
(Appendix \ref{app:y_dyn}).
We can get some insight into the stabilization by plotting the ratio of
$\lr^2\Qm_{1}(\J\zz_{1}\v_{1},\J\zz_{1}\v_{1})$
(the non-linear contribution to the $\zz_{1}$ dynamics from the $\v_{1}$
direction) and $\lam_{1}\zz_{1}$
(the linearized contribution), and compare it to the dynamics of $y$
(Figure \ref{fig:cifar_comparison}, right).
The ratio is small during the initial sharpening,
but becomes $O(1)$ shortly before the curvature decreases for the first time. It remains
$O(1)$ through the rest of the dynamics. This suggests that the non-linear feedback from the
dynamics of the top eigenmode onto itself is crucial to understanding
the EOS dynamics.

\begin{figure}[h]
    \centering
    \begin{tabular}{ccc}
    \includegraphics[height=0.22\linewidth]{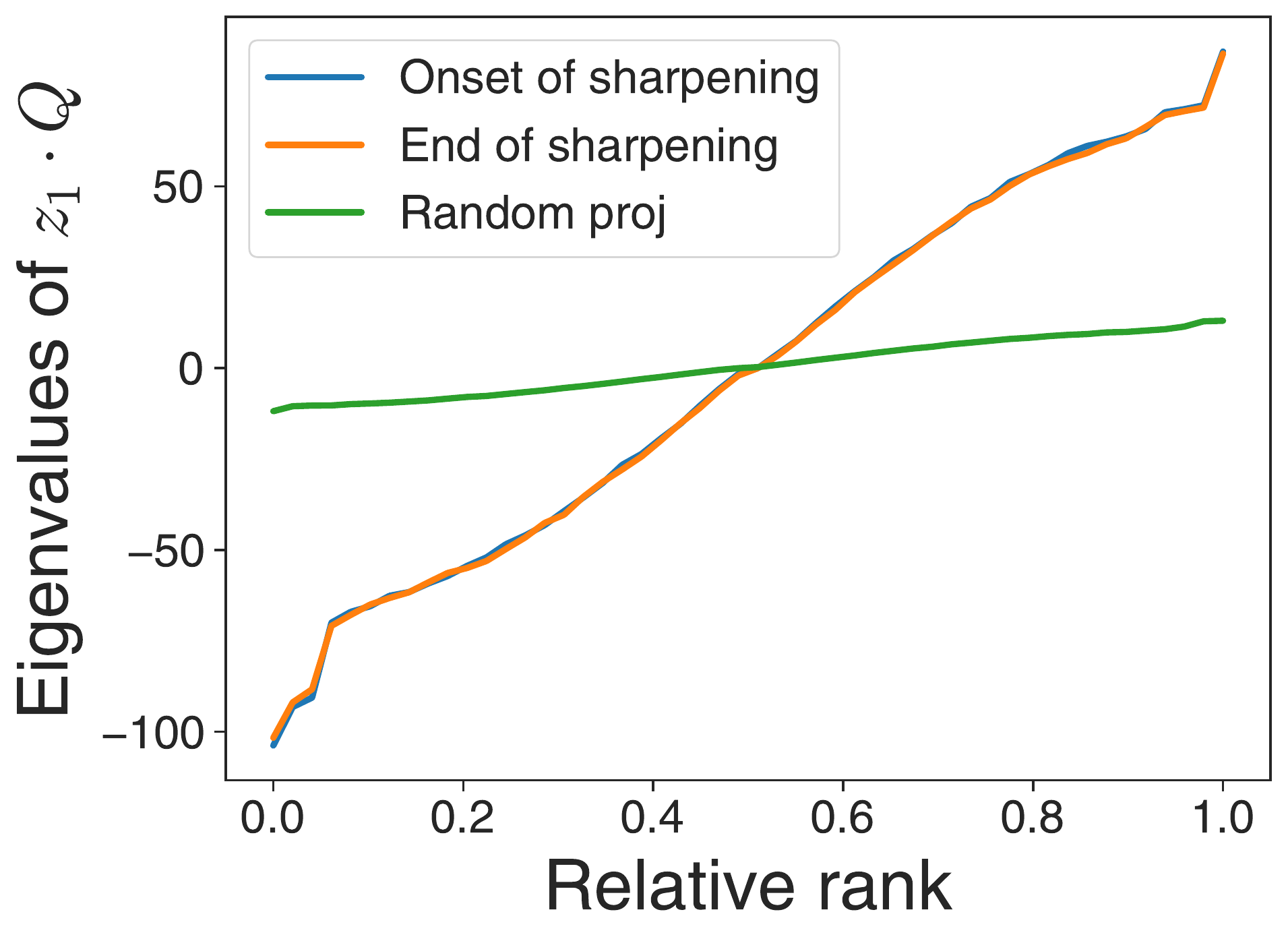} &
    \includegraphics[height=0.22\linewidth]{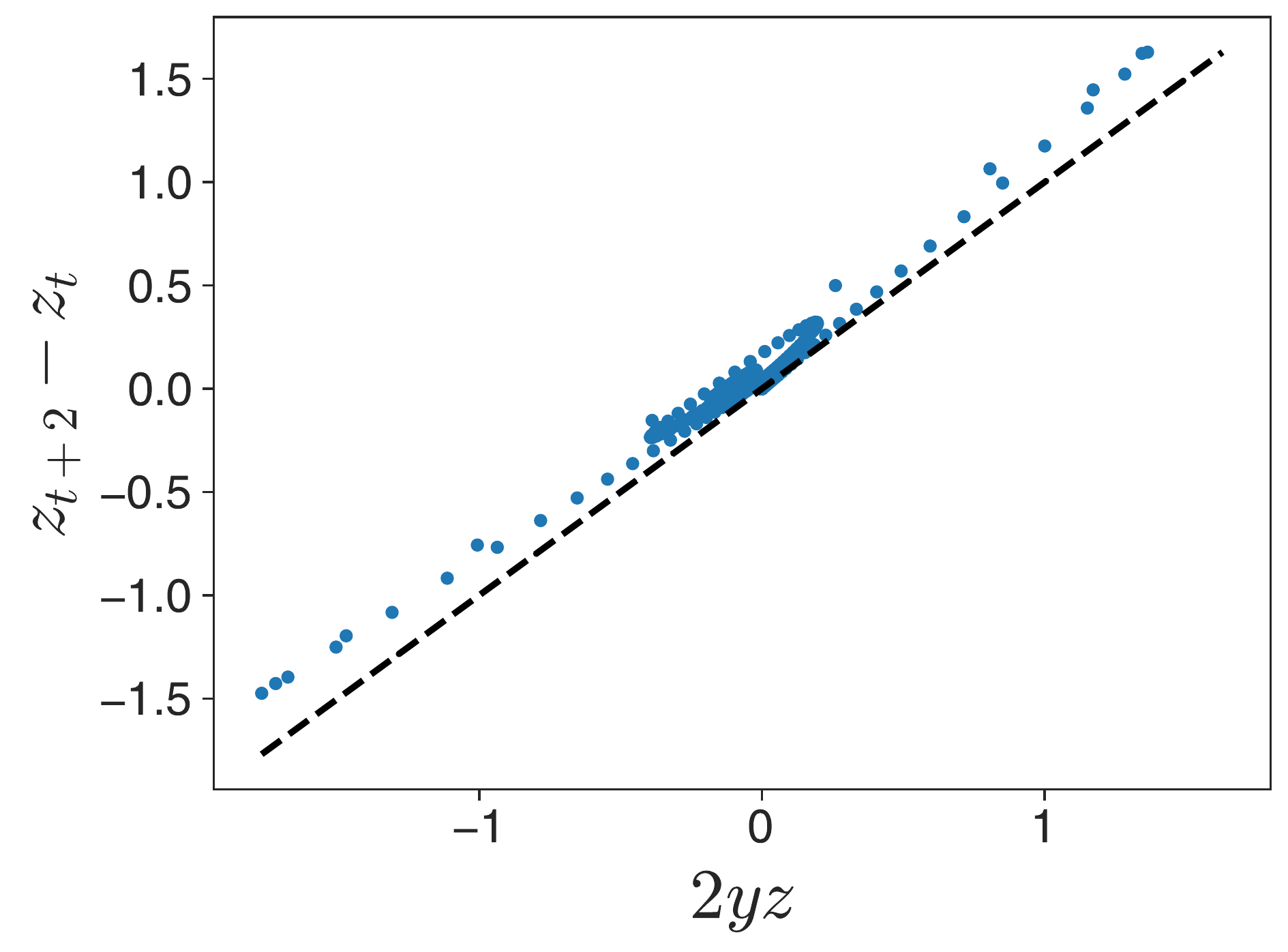} &
  \includegraphics[height=0.22\linewidth]{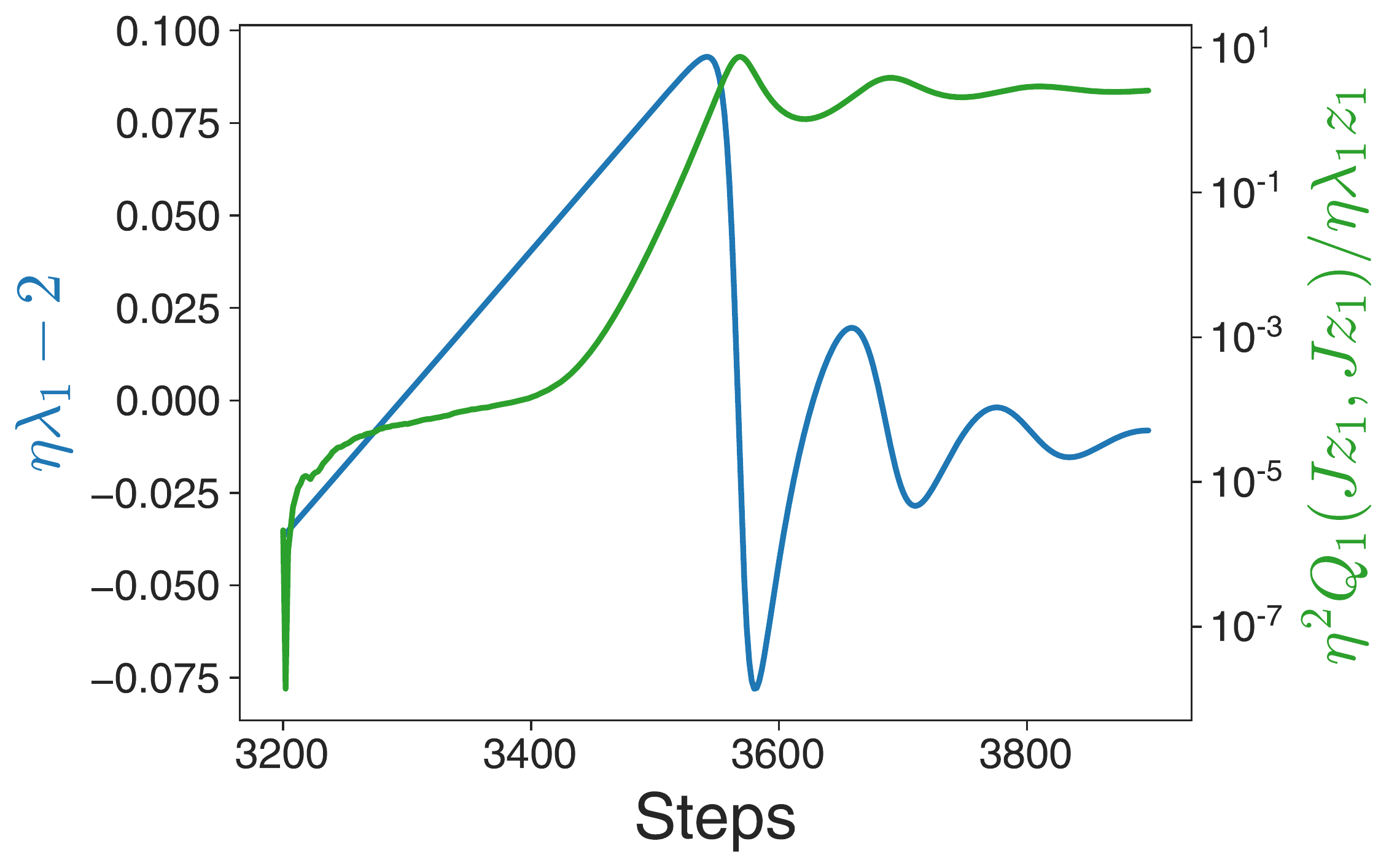}
    \end{tabular}
    \caption{$\Q$ is approximately constant during edge-of-stability dynamics for FCN trained
    on CIFAR10 (left). Projection onto largest eigendirection $\v_{1}$ (blue and orange) is larger than projection
    onto random direction (green).
    Two step difference $(\zz_{1})_{t+2}-(\zz_{1})_{t}$ is well approximated by $2\zz_{1} y$ (middle),
    leading order term of models with fixed eigenbasis.  Non-linear dynamical contribution $\lr^2\Qm_{1}(\J\zz_{1}\v_{1}, \J\zz_{1}\v_{1})$ is small during
    sharpening, but becomes large immediately preceding decrease in top eigenvalue (right) -
    as is the case in the simple model.}
    \label{fig:cifar_comparison}
\end{figure}

For smaller models, we can compute the full $\Q$ and then numerically integrate
Equations \ref{eq:GD_in_z_general} and \ref{eq:GD_in_J_general} directly. This is equivalent
to training a quadratic Taylor expansion of the full model.
In Appendix \ref{app:quad_expo}, we perform such a quadratic expansion of a
fully connected model on a two-class CIFAR dataset. Expanding at initialization, we see that the
maximum eigenvalue is well approximated by the quadratic model for early times, but misses
the sharpening regime (Figure \ref{fig:quad_expo}, left). Expanding closer to the sharpening
regime, we see that the quadratic model captures some features of the EOS, especially the
first crossing (Figure \ref{fig:quad_expo}, middle), but
the period and magnitude of the 
oscillations around $y = 0$ are not correctly captured by the quadratic expansion.
Nonetheless, quadratic models show the convergence to a stable two-cycle above
and below $y = 0$,
with a negative mean value (Figure \ref{fig:quad_expo}, right) - as seen in both
the full model and the simpler two-parameter model.

\begin{figure}[h]
 \centering
    \begin{tabular}{ccc}
    \includegraphics[height=0.23\linewidth]{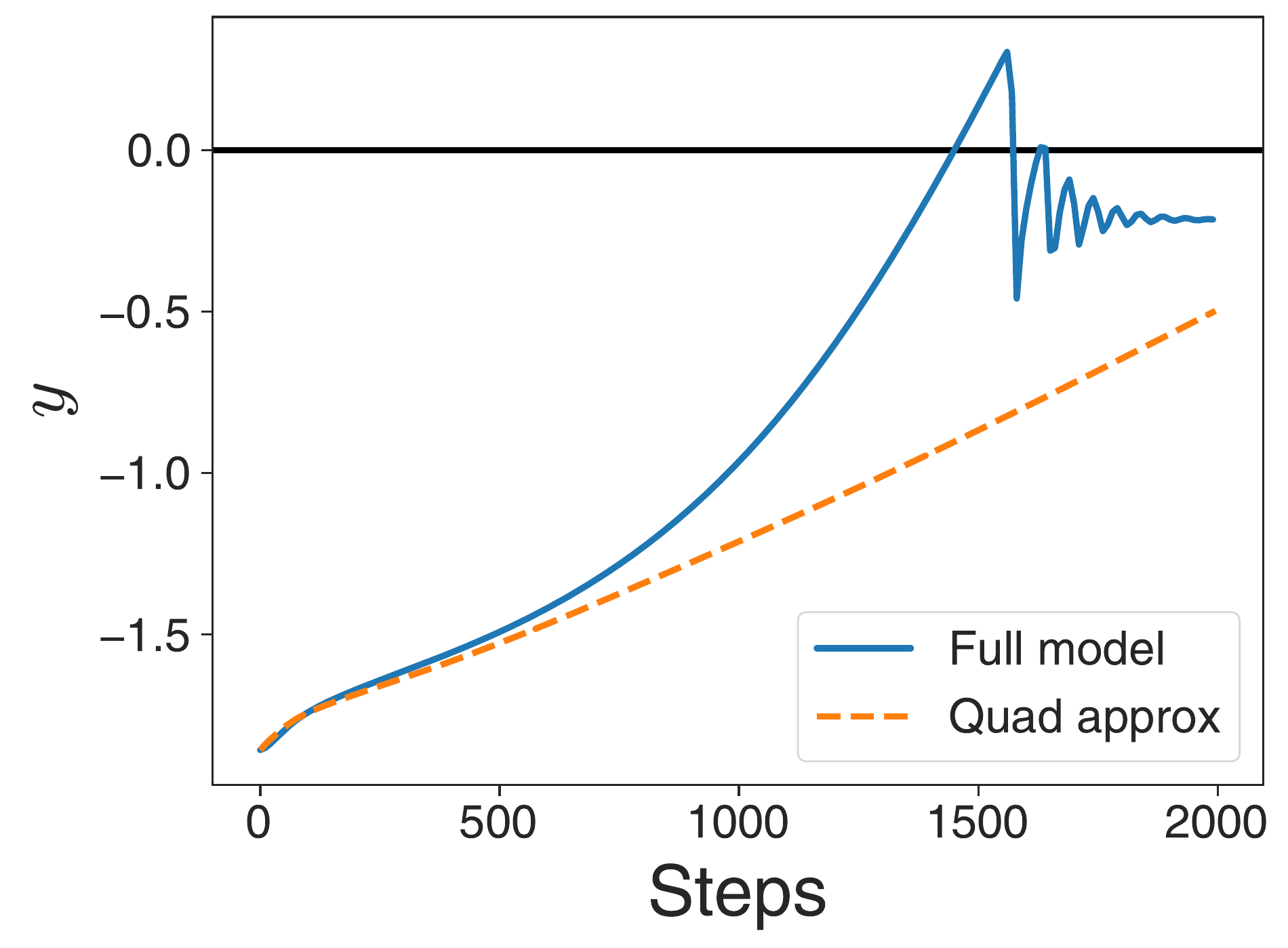} & \includegraphics[height=0.23\linewidth]{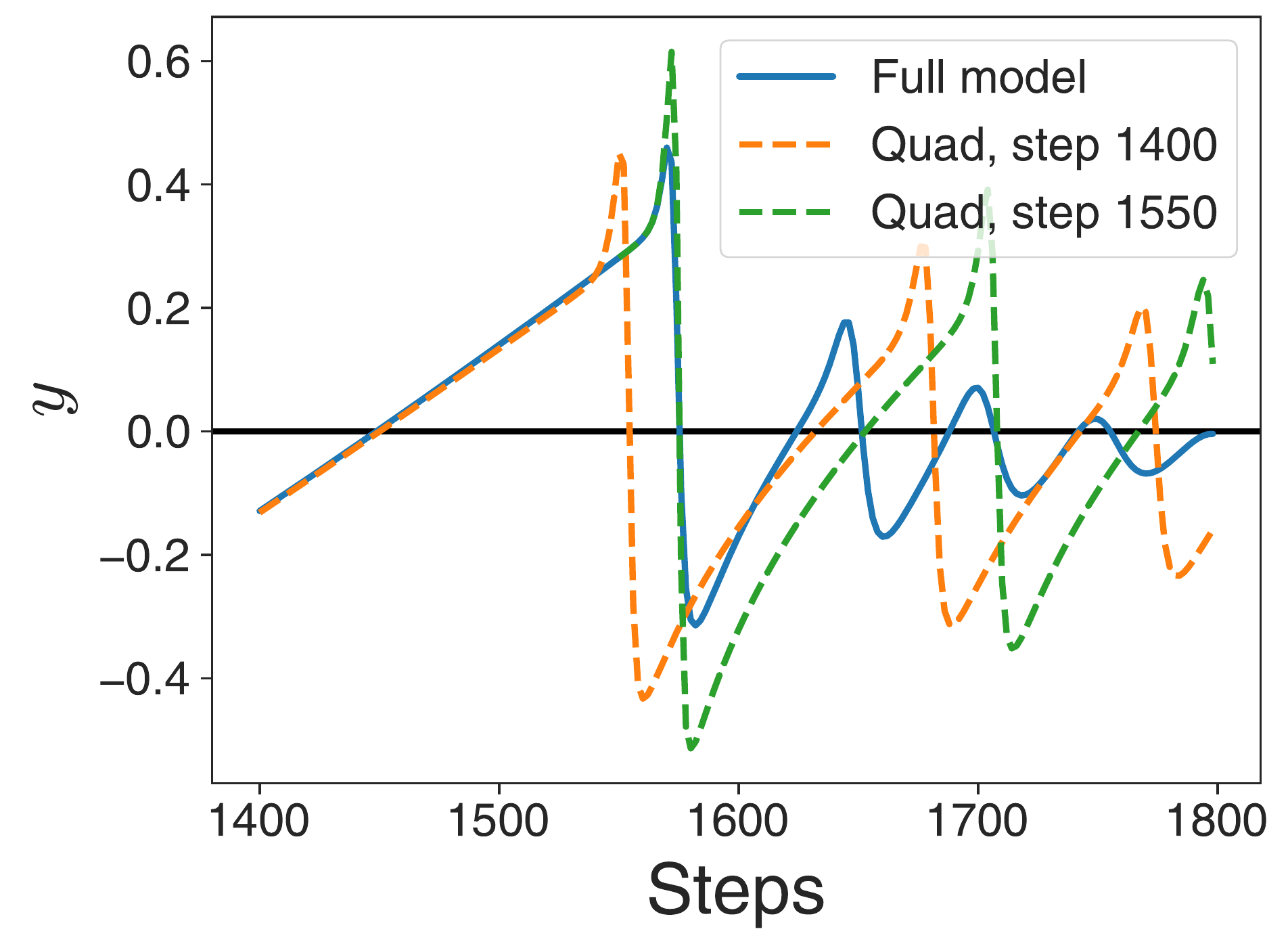} &
    \includegraphics[height=0.23\linewidth]{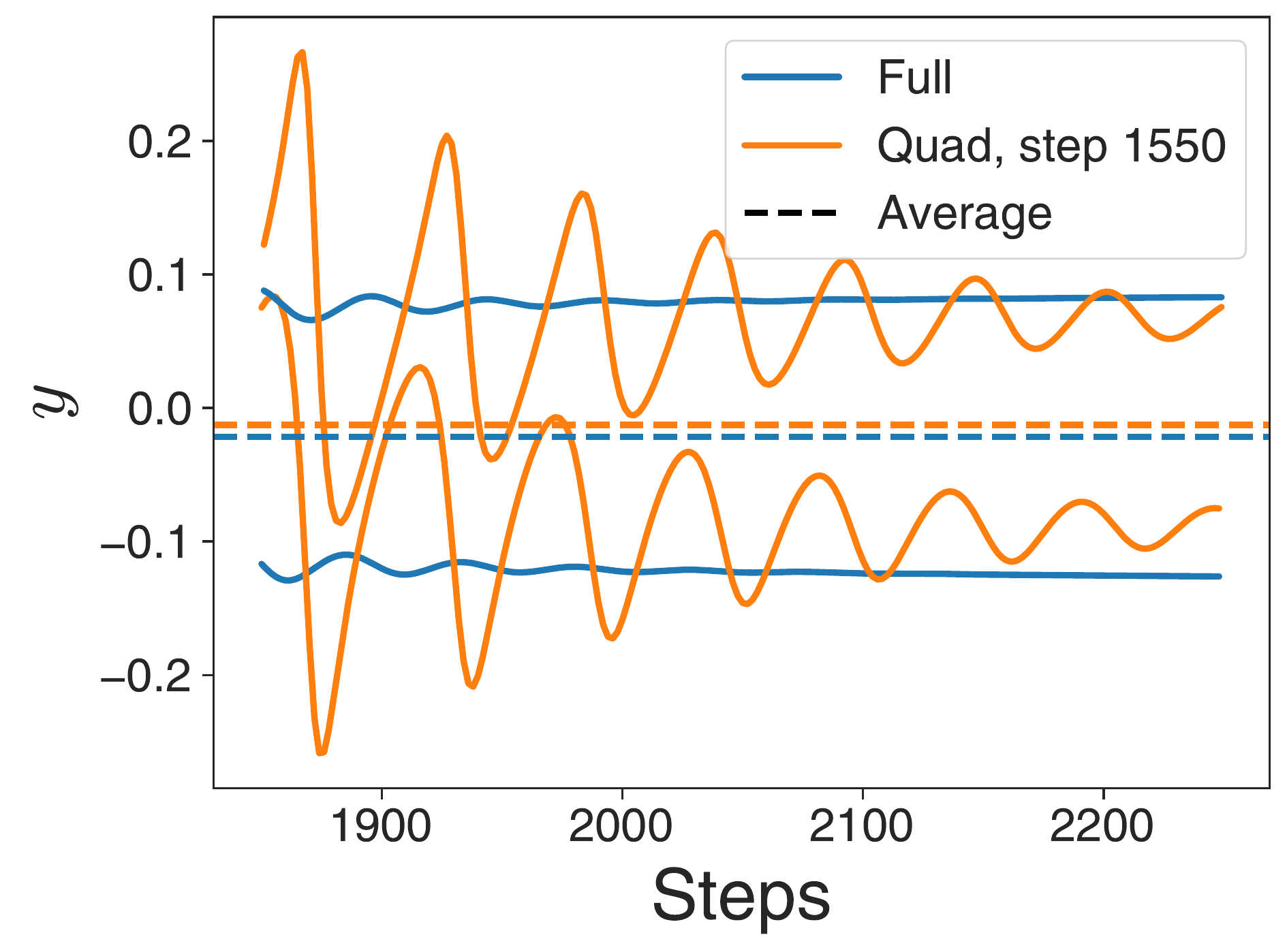}
    \end{tabular}
\caption{Quadratic expansion at initialization of a FCN model trained on $2$-class CIFAR captures 
early curvature dynamics of full model (left). Expanding closer to the first $y = 0$
crossing shows multiple oscillations in two-step dynamics, but period and magnitude quickly become 
different
from full dynamics (middle). Trajectories of even steps (top curves) and odd steps (bottom curves)
eventually stabilize, and average $y$ is non-zero for both models (right).}
\label{fig:quad_expo}
\end{figure}

\section{Discussion}

\subsection{Lessons learned from quadratic regression models}

The main lesson to be learned from the quadratic regression models is that behavior like progressive
sharpening (for both GF and GD) and edge-of-stability behavior (for GD) may be common
features of high-dimensional gradient-based training of non-linear models. Indeed, these phenomena can be revealed in simple settings without any connection to
deep learning models: with mild tuning our simplified model, which corresponds to $1$
datapoint and $2$ parameters can provably show EOS behavior. This combined with the
analysis of the CIFAR model suggest that the general mechanism may have a low-dimensional
description.

Quadratic approximations of real models quantitatively can capture the early features of
EOS behavior (the initial return to $\lam_{max} < 2/\lr$), but do not necessarily capture the magnitude and period of subsequent oscillations -- these require
higher order terms (Appendix \ref{app:quad_expo}). Nevertheless, the quadratic approximation
does correctly describe much of the qualitative behavior, including the convergence of $\lam_{max}$ to a limiting two-cycle that oscillates around $2/\lr$, with an average value \emph{below}
$2/\lr$. In the simplified two-parameter model, it is possible to analytically predict the final value at convergence, and indeed we find that it deviates slightly from the value $2/\lr$.

A key feature of all the models studied in this work is that looking at every-other iterate
(the two-step dynamics) greatly aids in understanding the models theoretically
and empirically. Near the edge of stability, this makes the changes in the top eigenmode
small.
In the simplified model, the slow $\tzz$ dynamics (and related slow $\TT(0)$ dynamics)
allowed for the detailed theoretical analysis; in the CIFAR model, the two-step dynamics
is slowly varying in both $\zz_{1}$ and $\lam_{\max}$. The quantitative comparisons of these
small changes may help uncover any universal mechanisms/canonical forms that
explain EOS behavior in other systems and scenarios.

\subsection{Future work}

One avenue for future work is to quantitatively understand progressive sharpening and
EOS behavior
in the quadratic regression model for large $\D$ and $\P$. In particular, it may be possible to predict
the final deviation $2-\lr\lam_{\max}$ in the edge-of-stability regime as a function of
$\sgz$, $\sgJ$, and $\D/\P$.
It would also be useful to understand how higher order terms affect the training dynamics.
One possibility is that a small number of statistics of the higher order derivatives
of the loss function are sufficient to obtain a better quantitative understanding of the
oscillations around $y = 2$.

Finally, our analysis has not touched on the feature learning aspects of the model.
In the quadratic regression model, feature learning is encoded in the relationship between $\J$
and $\z$, and in particular the relationship between $\z$ and the eigenstructure of
$\J\J^{\top}$. Understanding how $\Q$ mediates the dynamics of these two quantities
may provide a quantitative basis for understanding feature learning which is complementary
to existing theoretical approaches
\citep{roberts_principles_2022, bordelon_selfconsistent_2022, yang_tensor_2022}.

\bibliography{quad_learning_bibs}
\bibliographystyle{iclr2023_conference}

\appendix

\section{Connection to other models}

\subsection{One-hidden layer linear network}

\label{app:one_hidden_layer}

Consider a one hidden layer network with a scalar output:
\begin{equation}
f(\x) = \v^{\top}\U\x
\end{equation}
where $\x$ is an input vector of length $\N$, $\U$ is a $\K\times\N$ dimensional
matrix, and $\v$ is a $\K$ dimensional vector. We note that
\begin{equation}
\frac{\partial^{2} f(\x)}{\partial\v_{i}\partial\v_{j}} = \frac{\partial^{2} f(\x)}{\partial\U_{ij}\partial\U_{kl}} = 0,~\frac{\partial^{2} f(\x)}{\partial\v_{i}\partial\U_{jk}} = \dl_{ij}\x_{k}
\end{equation}
where $\dl_{ij}$ is the Kroenecker delta. For a fixed training set, this second derivative
is constant; therefore, the one-hidden layer linear network is a quadratic regression model
of the type studied in Section \ref{sec:quad_reg_model}.

In the particular case of a single datapoint $\x$, we can compute the eigenvectors of
the $\Q$ matrix. Let $(\w, \W)$ be an eigenvector of $\Q$, representing
the $\v$ and $\U$ components respectively. The eigenvector equations are
\begin{equation}
\qlam\w_{i} = \x_{m}\delta_{ij}\W_{jm}
\end{equation}
\begin{equation}
\qlam \W_{jm} = \x_{m}\delta_{ij}\w_{i}
\end{equation}
Simplifying, we have:
\begin{equation}
\qlam \w = \W\x
\end{equation}
\begin{equation}
\qlam \W = \w\x^{\top}
\label{eq:W_eig_eq}
\end{equation}
We have two scenarios. The first is that $\qlam = 0$.
In this case, we have
$\w = 0$, and $\W$ is a matrix with $\x$ in its nullspace. The latter condition gives us
$\M$ constraints on $\M\times\N$ equations - for a total of $\M(\N-1)$ of our $\M(\N+1)$ total
eigenmodes.

If $\qlam\neq 0$, then combining the equations we have the conditions:
\begin{equation}
\qlam^2 \w = (\x\cdot\x) \w
\end{equation}
\begin{equation}
\qlam^2 \W = \W\x\x^{\top}
\end{equation}
This gives us $\qlam = \pm \sqrt{ \x\cdot\x}$.
We know from Equation \ref{eq:W_eig_eq} that $\W$ is low rank. Therefore, we can guess a solution
of the form
\begin{equation}
\W_{\pm, i} = \pm\m{e}_{i}\x^{\top}
\end{equation}
where the $\m{e}_{i}$ are the $\M$ coordinate vectors. This suggests that we have
\begin{equation}
\w_{\pm, i} = (\sqrt{\x\cdot\x}) \m{e}_{i}
\end{equation}
This gives us our final $2\M$ eigenmodes.

We can analyze the initial values of of the $\tjjl$ as well. The components of the Jacobian
can be written as:
\begin{equation}
(\J_{v})_{i}\equiv \frac{\partial f(\x)}{\partial \v_{i}} = \U_{i m}\x_{m}
\end{equation}
\begin{equation}
(\J_{U})_{jm}\equiv \frac{\partial f(\x)}{\partial \U_{jm}} = \v_{j}\x_{m}
\end{equation}
From this form, we can deduce that $\J$ is orthogonal to the $0$ modes.
We can also compute the conserved quantity. Let $\jj_{+}^{2}$ be the total weight
in the positive eigenmodes, and $\jj_{-}^{2}$ be the total weight in the negative
eigenmodes. A direct calculation shows that
\begin{equation}
\qlam^{-1}(\jj_{+}^{2}-\jj_{-}^{2}) = 2f(\x)
\end{equation}
which implies that $E = 0$. 

Therefore, the single-hidden layer linear model on one datapoint is equivalent to the
quartic loss model with $E = 0$ and eigenvalues $\pm\sqrt{\x\cdot\x}$.

\subsection{Connection to \cite{bordelon_selfconsistent_2022}}

\jp{
The quadratic regression model should be related to the dynamics in Section F.1.1. Something like: $y = -E/2$, $P=2$, $\lambda_0 = \lambda_1 = 4 \gamma_0$, $H_y = J_0^2$, $G = J_1^2$, $\Delta = z$ gives the agreement I think?
}
\aga{Fixed, I think. There may be a way to make F.1 correspond but I can't immediately see
it. You need to factorize $\m{H}+G\m{K}$ into $\J\J^{\top}$ somehow. If you can do that,
you can infer $\Q$. Alternatively, you might be able to solve for $\Q$ using the structure of
the $1$-hidden layer NN, and then use that to work out $\J$?}

Since the one-hidden layer linear model has constant $\Q$, the models in Section
F.1 of \cite{bordelon_selfconsistent_2022} fall into the quadratic regression class.
In the case of Section F.1.1, Equation 67, we can make the mapping to a
$\D = 1$ model explicit. The dynamics are equivalent to said model with
a single eigenvalue $\qlam_{0}$ if we make the identifications
\begin{equation}
\Delta = \tzz,~H_y = J_{0}^2,~\gamma_{0} = \sqrt{2\qlam},~y=-\E/2
\end{equation}

\subsection{Connection to NTH}

\label{app:nth}

The Neural Tangent Hierarchy (NTH) equations extend the NTK dynamics to account for changes
in the tangent kernel by constructing an infinite sequence of higher order tensors
which control the non-linear dynamics of learning \cite{huang_dynamics_2020}. Truncation
of the NTH equations at $3$rd order is related to, but not the same as the quadratic
regression model, as we will show here.

The $3$rd order NTH equation describes the change in the tangent kernel $\J\J^{\top}$.
Consider the $\D\times\D\times\D$-dimensional kernel $\tens{K}_{3}$ whose elements are given by
\begin{equation}
(\tens{K}_{3})_{\al\bt\gm} = \frac{\partial^2\z_{\al}}{\partial\th_{i}\partial\th_{j}}\J_{i\gm}\J_{j\bt}
+\frac{\partial^2\z_{\bt}}{\partial\th_{i}\partial\th_{j}}\J_{i\gm}\J_{j\al}
\end{equation}
where repeated indices are summed over. In the NTH, for
squared loss the change in the NTK $\J\J^{\top}$
is given by
\begin{equation}
\frac{d}{dt}\left(\J\J^{\top}\right)_{\al\bt} = -\lr(\tens{K}_{3})_{\al\bt\gm}\z_{\gm}
\end{equation}
For fixed $\Q = \frac{\partial^2\z}{\partial\th\partial\th'}$, this equation is identical
to the GF equations for the NTK in the quadratic regression model. We
note that $\tens{K}_{3}$ is not constant under the quadratic regression model. Conversely,
for fixed $\tens{K}_{3}$, $\frac{\partial^2\z}{\partial\th\partial\th'}$ is not constant
either. Therefore, the two methods can be used to construct different low-order expansions
of the dynamics.

\section{2 parameter model}

\subsection{Derivation of $\tzz$-$\TT(0)$ equations}

\label{app:two_param_basics}

We can use the conserved quantity $\E$ to write the dynamics in terms of $\tzz$ and $\TT(0)$ only.
Without loss of generality, let the eigenvalues are $1$ and $\lam$, with $-1\leq \lam \leq 1$.
(We can achieve this by rescaling $\tzz$.)
Recall the dynamical equations
\begin{equation}
\tzz_{t+1}-\tzz_{t} = - \tzz_{t}\TT_{t}(0) + \frac{1}{2}(\tzz_{t}^2)\TT_{t}(1)
\end{equation}
\begin{equation}
\TT_{t+1}(0)-\TT_{t}(0) = -\tzz_{t}(2\TT_{t}(1)-\tzz_{t}\TT_{t}(2))
\end{equation}
We will find substitutions for $\TT(1)$ and $\TT(2)$ in terms of $\tzz$ and $\TT(0)$.
Recall that we have
\begin{equation}
\TT(-1) = \E+2\tzz
\end{equation}
where $\E$ is conserved throughout the dynamics (and indeed is a property of the landscape).
We will use this definition to solve for $\TT(1)$ and $\TT(2)$.

Since $\P =2$,
we can write $\TT(-1) = b\TT(0)+\aa\TT(1)$, for coefficients $\aa$ and $b$ which
are valid for all combinations of $\tjj$. If $\tjj(\lam) = 0$, we have $b = 1-\aa$. If $\tjj(1) = 0$, we have
$1 = \lam(1-\aa)+\lam^{2}\aa$. Solving, we have:
\begin{equation}
\TT(-1) = (1-\aa)\TT(0)+\aa\TT(1)~{\rm for~} \aa = -\frac{1}{\lam}
\end{equation}
The restrictions on $\lam$ translate to $\aa\notin(-1,1)$. In terms of the conserved quantity
$\E = \TT(-1)-2\tzz$, we have:
\begin{equation}
\TT(-1) = \E+2\tzz
\end{equation}

In order to convert the dynamics, we need to solve for $\TT(1)$ and $\TT(2)$ in terms of
$\TT(0)$ and $\tzz$. We have:
\begin{equation}
\TT(1) = \frac{1}{\aa}\left(\TT(-1)+(\aa-1)\TT(0)\right) = \frac{1}{\aa}\left(\E+2\tzz+(\aa-1)\TT(0)\right)
\end{equation}
We also have
\begin{equation}
\TT(2) = \TT(0)+\left(\frac{1-\aa}{\aa^2}\right)(\TT(0)-\E-2\tzz)
\end{equation}
This gives us
\begin{equation}
\tzz_{t+1}-\tzz_{t} = - \tzz_{t}\TT_{t}(0) + \frac{1}{2\aa}(\tzz_{t}^2)((\aa-1)\TT_{t}(0)+2\tzz_{t}+\E)
\end{equation}
\begin{equation}
\TT_{t+1}(0)-\TT_{t}(0) = -\frac{2}{\aa}\tzz_{t}(2\tzz_{t}+\E+(\aa-1)\TT_{t}(0))+\zz_{t}^{2}\left[\TT_{t}(0)+\left(\frac{1-\aa}{\aa^2}\right)(\TT_{t}(0)-\E-2\tzz_{t})\right]
\end{equation}
If $\lam = -\eps$ (that is, $\aa = \eps^{-1}$) we recover the equations from the main text.

The non-negativity of $\tjj^2$ gives us constraints on the values of $\tzz$ and $\TT$.
For $\aa >1$ (small negative second eigenvalue), the constraints are:
\begin{equation}
\TT>2\tzz+\E,~\TT>-(2\tzz+\E)/\aa
\end{equation}
This is an upward-facing cone with vertex at $\tzz = -\E/2$ (Figure \ref{fig:one_step_lam_dep}, 
left).
For $\aa <-1$, the constraints are
\begin{equation}
-(2\tzz+E)/\aa<\TT<2\tzz+\E
\end{equation}
This is a sideways facing cone with vertex at $\tzz = -\E/2$ (Figure \ref{fig:one_step_lam_dep},
right). We see that in this case, there is a limited set of values of $\TT$ to converge to.
Indeed, for $\E = 0$, there is no convergence except at $\TT(0) = 0$.
This why we focus on the case of one positive and one negative eigenvalue.

We can also solve for the nullclines - the curves where either $\tzz_{t+1}-\tzz_{t} = 0$
(blue in Figure \ref{fig:one_step_lam_dep}),
or $\TT_{t+1}(0)-\TT_{t}(0) = 0$ (orange in Figure \ref{fig:one_step_lam_dep}).
The nullcline $(\tzz, f_{\tzz}(\tzz))$ for $\tzz$ is given by
\begin{equation}
f_{\tzz}(\tzz) = \frac{\tzz(2\tzz+\E)}{2\aa-(\aa-1)\tzz}
\end{equation}
The nullcline $(\tzz, f_{\TT}(\tzz))$ for $\TT(0)$ is given by
\begin{equation}
f_{\TT}(\tzz) = -\frac{(\aa-1)\tzz-2\aa}{(\aa^2-\aa+1)\tzz-2\aa(\aa-1)}(2\tzz+\E)
\end{equation}
The line $\tzz = 0$ is also a nullcline.

For the symmetric model $\eps = 1$, the structure of the nullclines determines the
presence or lack of progressive sharpening. For $\E = 0$, there is no sharpening;
the phase portrait (Figure \ref{fig:sym_phase_plane}, left) confirms this as the
nullcline in $\TT_{t}(0)$ divides the space into two halves, one which converges,
and the other which doesn't. However, when $\E\neq 0$, the nullclines split,
and there is a small region where progressive sharpening can occur
(Figure \ref{fig:sym_phase_plane}, middle). However, there is still no edge-of-stability
behavior in this case - there is no region where the trajectories cluster near
$\lam_{\max} = 2/\lr$ (Figure \ref{fig:sym_phase_plane}, right).

\begin{figure}
    \centering
    \begin{tabular}{ccc}
    \includegraphics[width=0.33\linewidth]{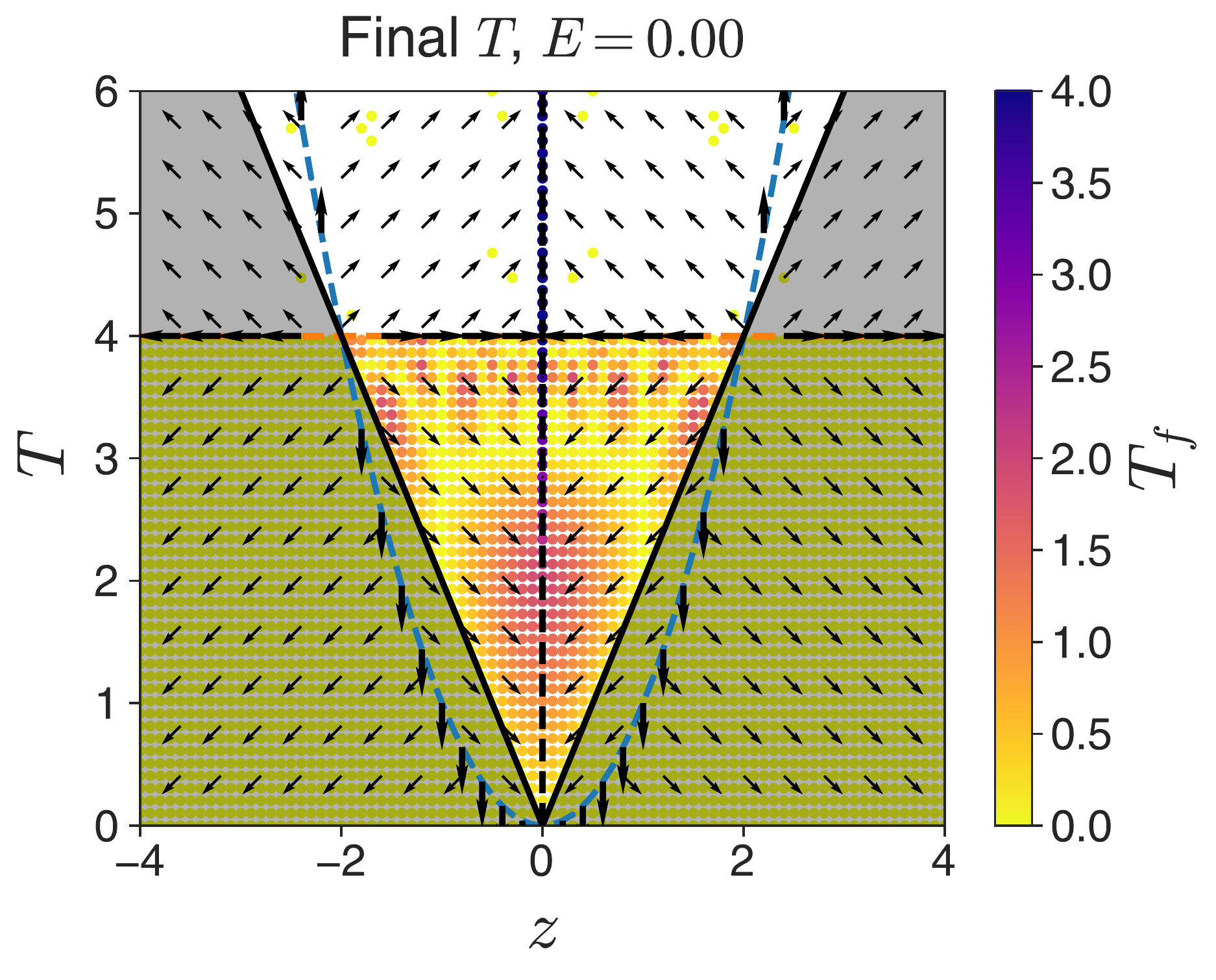} &
    \includegraphics[width=0.33\linewidth]{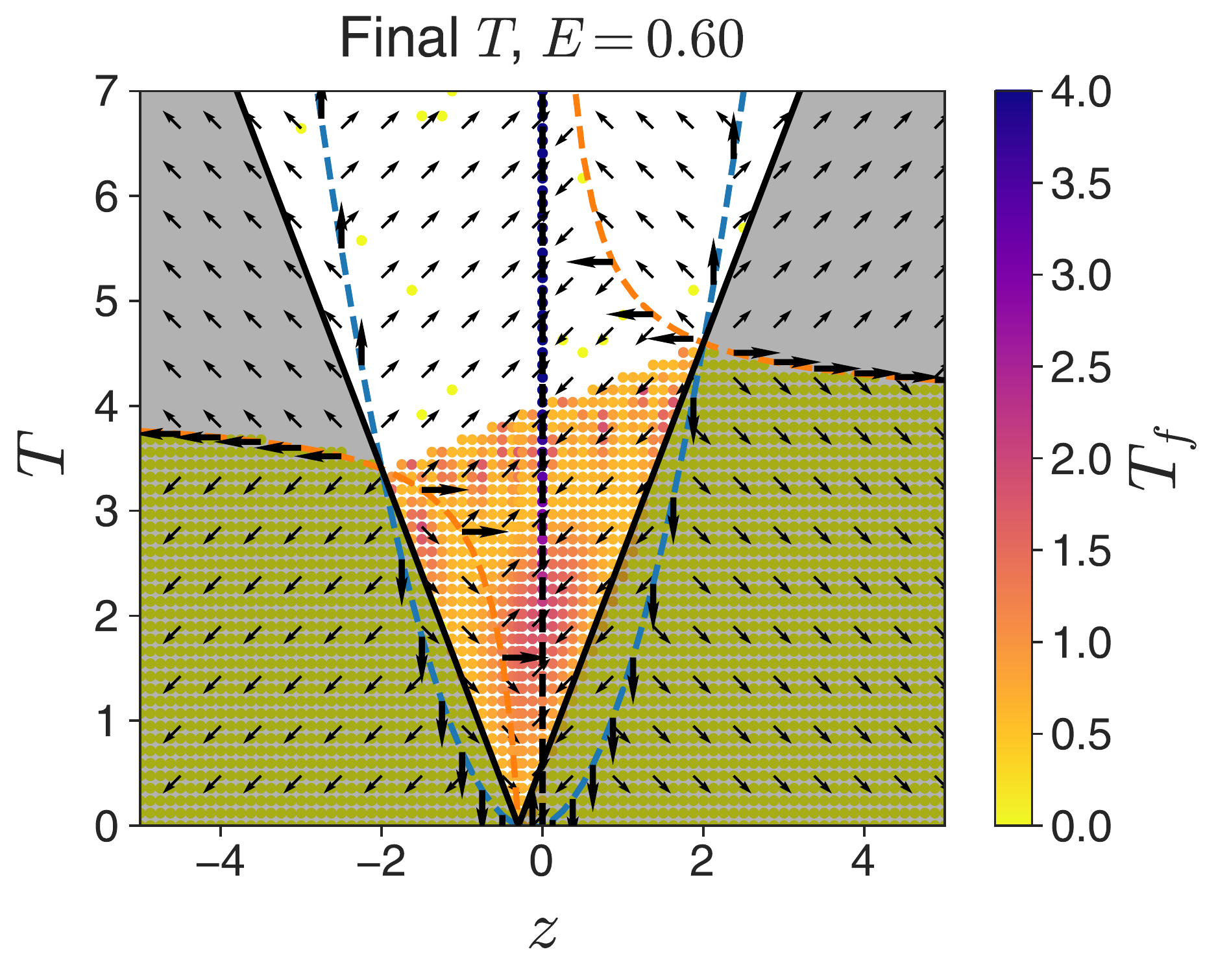} &
    \includegraphics[width=0.33\linewidth]{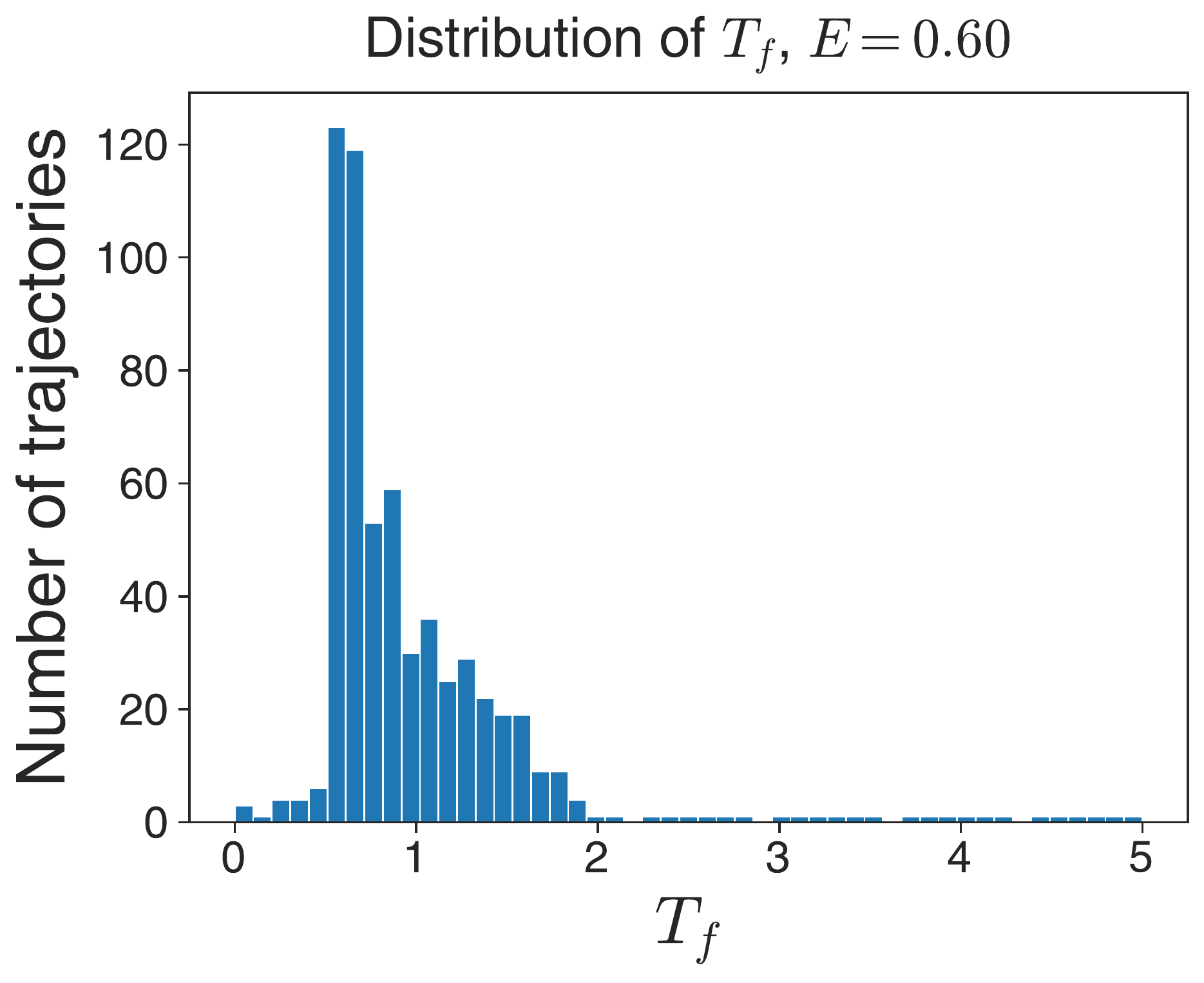}
    
    \end{tabular}
    \caption{Phase portraits for symmetric model. Arrows indicate signs of changes
    in $\z$ and $\TT$, and grey area represents disallowed coordinates. Dynamics are
    run from an evenly spaced grid of initializations, and the final value of the
    curvature $\TT(0)$ is recorded. Nullclines representing $\tzz_{t+1}-\tzz_{t} = 0$
    (blue) and $\TT_{t+1}(0)-\TT_{t}(0) = 0$ (orange) depend on $\E$. Trajectories
    show progressive sharpening but no edge-of-stability effect (right).}
    \label{fig:sym_phase_plane}
\end{figure}

\begin{figure}[h]
    \centering
    \begin{tabular}{cc}
    \includegraphics[width=0.43\linewidth]{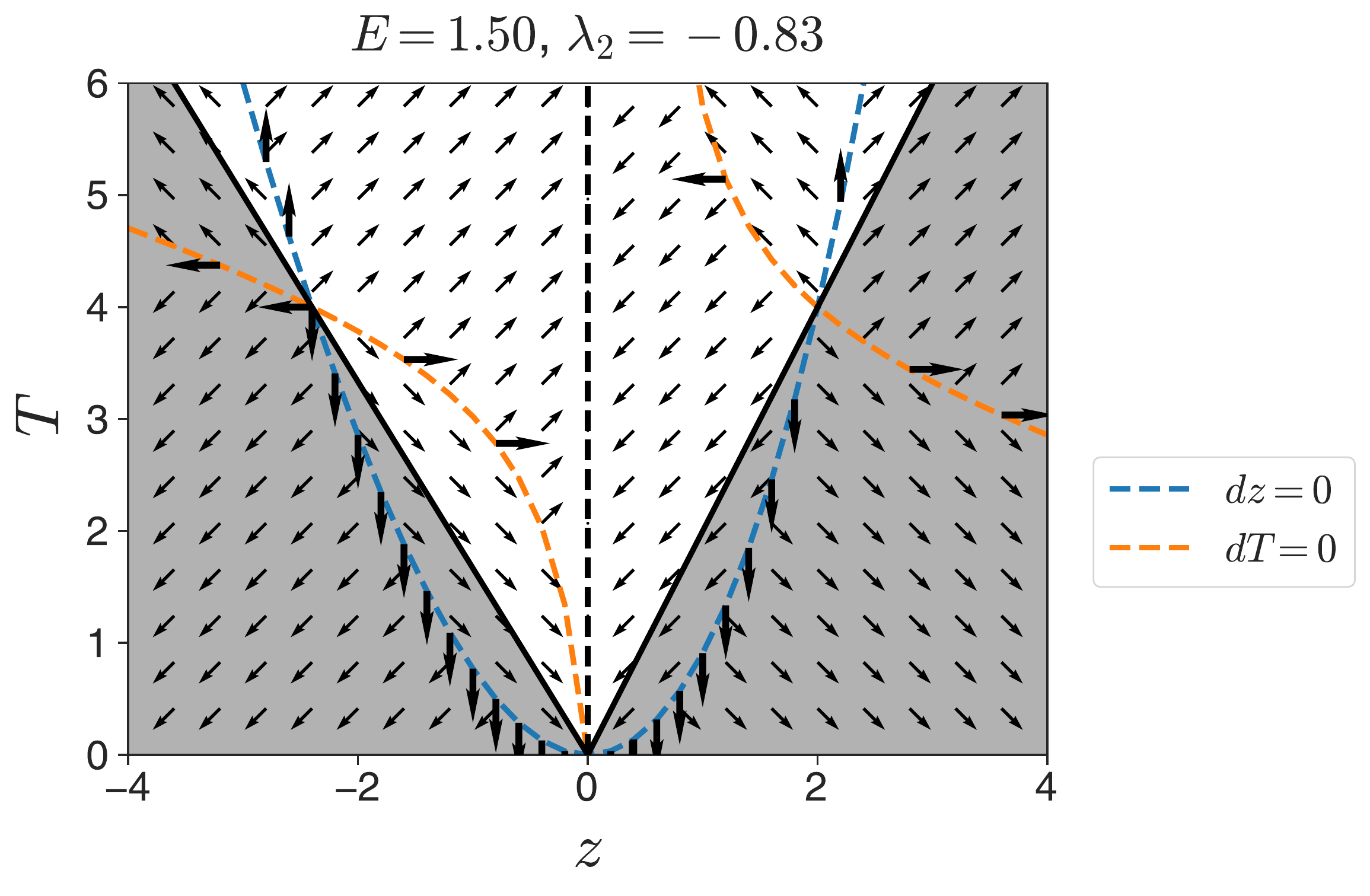} &
    \includegraphics[width=0.43\linewidth]{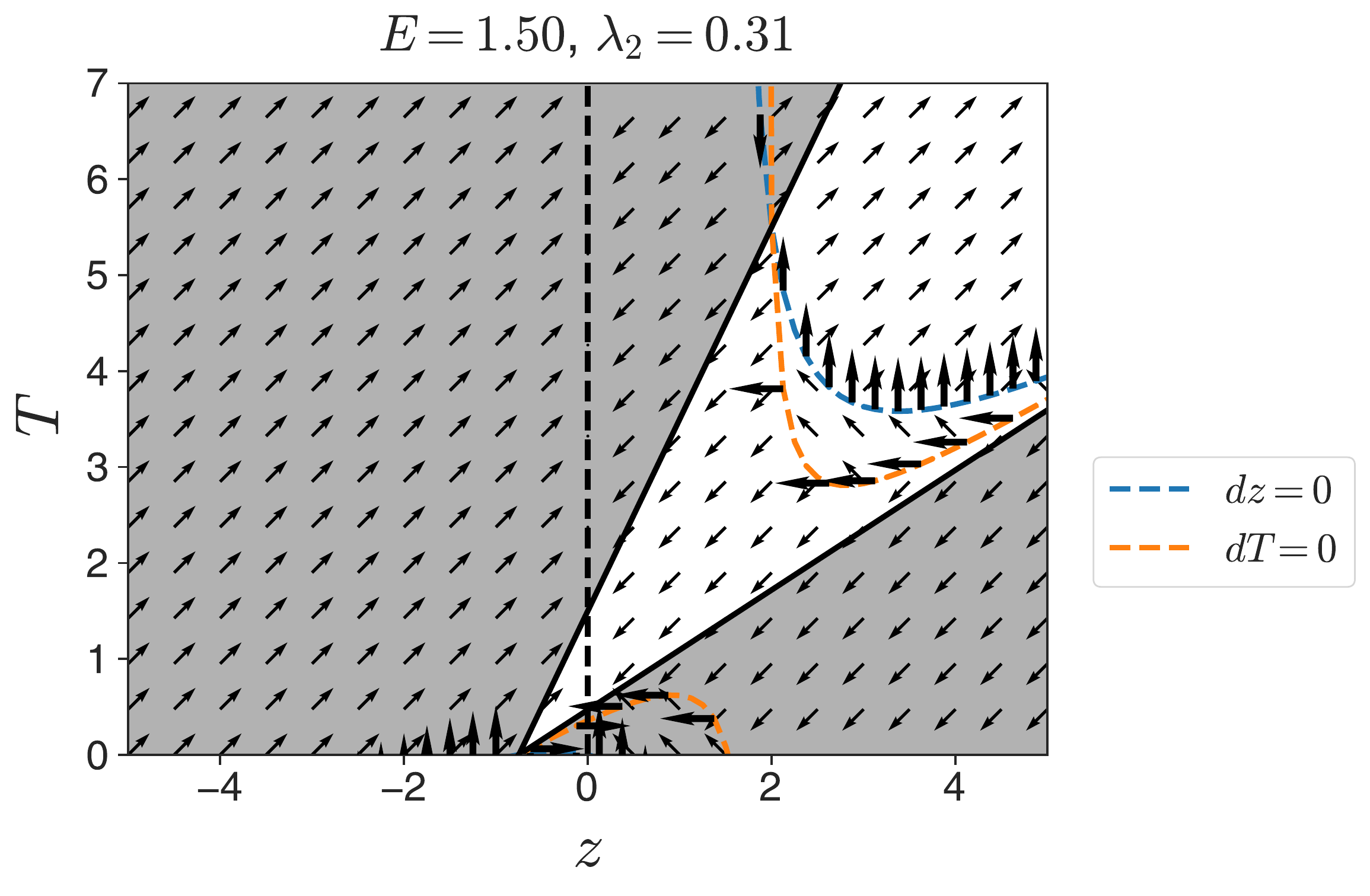}
    \end{tabular}
    \caption{Phase planes of $\D = 1$, $\P = 2$ model. Grey region corresponds to parameters
    forbidden by positivity constraints on $\tjjl^2$. For $\lam > 0$, allowed
    region is smaller and intersects $\tzz = 0$ at a small range only. Nullclines
    can be solved for analytically.}
    \label{fig:one_step_lam_dep}
\end{figure}

\subsection{Two-step dynamics}

\label{app:two_step}

The two-step difference equations can be derived by iterating Equations \ref{eq:z_2_red_eps}
and \ref{eq:T_2_red_eps}. We have
\begin{equation}
\tzz_{t+2}-\tzz_{t} = p_{0}(\tzz_{t},\eps)+ p_{1}(\tzz_{t}, \eps)\TT_{t}(0)+p_{2}(\tzz_{t}, \eps)\TT_{t}(0)^{2}
+p_{3}(\tzz_{t}, \eps)\TT_{t}(0)^{3}
\end{equation}
\begin{equation}
\TT(0)_{t+2}-\TT_{t}(0) = q_{0}(\tzz_{t}, \eps)+ q_{1}(\tzz_{t}, \eps)\TT_{t}(0)+q_{2}(\tzz_{t}, \eps)\TT_{t}(0)^{2}
+q_{3}(\tzz_{t}, \eps)\TT_{t}(0)^{3}
\end{equation}
Here the $p_{i}$ and $q_{i}$ are polynomials in $\tzz$, maximum $9$th order in
$\tzz$ and $6$th order in $\eps$. They can be computed
explicitly but we choose to omit the exact forms for now.

For fixed $\eps$, we can solve for the $\tzz$ two-step nullclines 
($\tzz_{t+2}-\tzz_{t} = 0$) and the $\TT$ nullclines ($\TT_{t+2}(0)-\TT_{t}(0) = 0$)
using Cardano's formula to solve for $\TT$ as a function of $\tzz$. In particular, each
nullcline equation has a solution that goes through $\tzz = 0$, $\TT(0) = 2$, independent of
$\eps$. This is the family of solutions that we will focus on.

Let $(\tzz, f_{\tzz, \eps}(\tzz))$ be the nullcline of $\tzz$, and let $(\tzz, f_{\TT, \eps}(\tzz))$
be the nullcline of $\TT(0)$. We will show that the $\TT$ values of the nullclines, as a function
of $\tzz$ and $\eps$, is differentiable around $\tzz = 0$, $\eps = 0$.

The nullclines are defined by the implicit equations
\begin{equation}
\begin{split}
0 & =  6\tzz^3 \eps-2 \TT \tzz-3 \TT \tzz^2 (\eps -1)-\TT \tzz^3 (\eps +2) (2 \eps +1)+\TT^2 \tzz+\frac{7}{2} \TT^2 \tzz^2 (\eps -1)\\
& +\frac{1}{2} \TT^2 \tzz^3 \left(9 \eps ^2-10 \eps +9\right)-\frac{1}{2} \TT^3 \tzz^2 (\eps -1)-\frac{1}{2} \TT^3 \tzz^3 \left(3 \eps ^2-4 \eps +3\right)+O(\tzz^{4})
\end{split}
\end{equation}
\begin{equation}
\begin{split}
0 & = -8 \tzz^2 \eps -12 \tzz^3 (\eps -1) \eps+4 \TT \tzz (\eps -1)+2 \TT \tzz^2 \left(3 \eps ^2-\eps +3\right)+4 \TT \tzz^3 (\eps -1) \left(\eps ^2+4 \eps +1\right)\\
& -2 \TT^2 \tzz (\eps -1)-\TT^2 \tzz^2 \left(7 \eps ^2-8 \eps +7\right)-\TT^2 \tzz^3 (\eps -1) \left(9 \eps ^2-\eps +9\right)+\TT^3 \tzz^2 \left(\eps ^2-\eps +1\right)\\
& +\TT^3 \tzz^3 (\eps -1) \left(3 \eps ^2-\eps +3\right)+O(\tzz^{4})
\end{split}
\end{equation}
We omit the higher order terms for now in anticipation of differentiating at $\tzz = 0$ to use
the implicit function theorem. Dividing by $\tzz$, we have the equations
\begin{equation}
\begin{split}
0 & =  6\tzz^2 \eps-2 \TT -3 \TT \tzz (\eps -1)-\TT \tzz^2 (\eps +2) (2 \eps +1)+\TT^2 +\frac{7}{2} \TT^2 \tzz (\eps -1)\\
& +\frac{1}{2} \TT^2 \tzz^2 \left(9 \eps ^2-10 \eps +9\right)-\frac{1}{2} \TT^3 \tzz (\eps -1)-\frac{1}{2} \TT^3 \tzz^2 \left(3 \eps ^2-4 \eps +3\right)+O(\tzz^{3})
\end{split}
\label{eq:dz2_null_approx}
\end{equation}
\begin{equation}
\begin{split}
0 & = -8 \tzz \eps -12 \tzz^2 (\eps -1) \eps+4 \TT  (\eps -1)+2 \TT \tzz \left(3 \eps ^2-\eps +3\right)+4 \TT \tzz^2 (\eps -1) \left(\eps ^2+4 \eps +1\right)\\
& -2 \TT^2 (\eps -1)-\TT^2 \tzz \left(7 \eps ^2-8 \eps +7\right)-\TT^2 \tzz^2 (\eps -1) \left(9 \eps ^2-\eps +9\right)+\TT^3 \tzz \left(\eps ^2-\eps +1\right)\\
& +\TT^3 \tzz^2 (\eps -1) \left(3 \eps ^2-\eps +3\right)+O(\tzz^{3})
\end{split}
\label{eq:dT2_null_approx}
\end{equation}
We immediately see that $\tzz = 0$, $\TT = 2$ solves both equations for all $\eps$.
Let $w(\eps, \tzz, \TT)$ and $v(\eps, \tzz, \TT)$ be the right hand sides of Equations
\ref{eq:dz2_null_approx} and \ref{eq:dT2_null_approx} respectively. We have
\begin{equation}
\left.\frac{\partial w}{\partial \TT}\right|_{(0, 0, 2)} = 2,~\left.\frac{\partial v}{\partial \TT}\right|_{(0, 0, 2)} = 4
\label{eq:implicit_deriv}
\end{equation}
In both cases the derivative is invertible. Therefore, $f_{\tzz, \eps}(\tzz)$ and $f_{\TT, \eps}(\tzz)$ are continuously differentiable in both $\tzz$ and $\eps$ in some neighborhood of
$0$. In fact, since $w$ and $v$ are analytic in all three arguments,
$f_{\tzz, \eps}(\tzz)$ and $f_{\TT, \eps}(\tzz)$ are analytic as well.

We can use the analyticity to solve for the low-order structure of the nullclines. One way
to compute the values of the derivatives is to define the nullclines as formal power series:
\begin{equation}
f_{\tzz}(\tzz) = 2+\sum_{j=1}^{\infty}\sum_{k=1}^{\infty}a_{j, k}\eps^{j}\tzz^{k}
\end{equation}
\begin{equation}
f_{\TT}(\tzz) = 2+\sum_{j=1}^{\infty}\sum_{k=1}^{\infty}b_{j, k}\eps^{j}\tzz^{k}
\end{equation}
We can then solve for the first few terms of the series using Equations \ref{eq:dz2_null_approx}
and \ref{eq:dT2_null_approx}. From this procedure, we have:
\begin{equation}
f_{\tzz,\eps}(\tzz) = 2+2\left(1-\eps\right)\tzz+2 \left(1-\eps+\eps ^2\right)\tzz^2+O(\tzz^{3})
\end{equation}
\begin{equation}
f_{\TT,\eps}(\tzz) = 2-\frac{\left( 2-3 \eps+2 \eps ^2\right)}{1-\eps}\tzz+\frac{1}{2} \left(4-\eps+4 \eps ^2\right)\tzz^2+O(\tzz^{3})
\end{equation}
The difference $f_{\Delta, \eps}(\tzz)$ between the two is:
\begin{equation}
f_{\Delta}(\tzz)\equiv f_{\tzz}(\tzz)  - f_{\TT}(\tzz)  = -\frac{\eps}{1-\eps}\tzz-\frac{3}{2}\eps\tzz^{2}+O(\tzz^{3})
\end{equation}
As $\eps$ decreases, for the low order terms
the distance between the nullclines also decreases.

We can show that the difference goes as $\eps$. The one-step dynamical equations for
$\eps = 0$ are
\begin{equation}
\tzz_{t+1}-\tzz_{t} = - \tzz_{t}\TT_{t}(0) + \frac{1}{2}\tzz_{t}^2\TT_{t}(0)
\end{equation}
\begin{equation}
\TT_{t+1}(0)-\TT_{t}(0) = -2\tzz_{t}\TT_{t}(0)+\zz_{t}^{2}\TT_{t}(0)
\end{equation}
Therefore, $\Delta\tzz = 2\Delta\TT$. This means that both the one step AND two-step
nullclines are identical. Since $f_{\tzz, 0}(\tzz) =  f_{\TT, 0}(\tzz)$, and both are
differentiable with respect to $\eps$, we have:
\begin{equation}
f_{\tzz, \eps}(\tzz) - f_{\TT, \eps}(\tzz) = \eps f_{\Delta, \eps}(\tzz)
\end{equation}
for some function $f_{\Delta, \eps}(\tzz)$ which is analytic in $\eps$ and $\tzz$
in a neighborhood around $(0, 0)$.

\subsection{Two-step dynamics of $y$}

\label{app:two_step_approx}

It is useful to define dynamical equations in coordinates $(\tzz, y)$ where $y$
is the difference between $\TT(0)$ and the $\tzz$ nullcline:
\begin{equation}
y \equiv \TT(0)-f_{\tzz,\eps}(\tzz)
\end{equation}
To lowest order in $\tzz$ and $\eps$ we have
\begin{equation}
y = \TT(0)- 2-2\left(1-\eps\right)\tzz-2 \left(1-\eps+\eps ^2\right)\tzz^2+O(\tzz^{3})
\label{eq:nullcline_small}
\end{equation}
We note that $y = 0$, at $\tzz = 0$ corresponds to $\TT(0) = 2$.
$y$ near but slightly less than $0$
is equivalent to edge-of-stability behavior. For positive $\tzz$, $y = 0$ implies
$\TT(0)>2$.

We can write the dynamics of $\tzz$ and $y$. The dynamics for $\tzz$ are given by:
\begin{equation}
\tzz_{t+2}-\tzz_{t} = p_{0}(\tzz_{t},\eps)+ p_{1}(\tzz_{t}, \eps)(y_{t}+f_{\tzz,\eps}(\tzz_{t}))+p_{2}(\tzz_{t}, \eps)(y_{t}+f_{\tzz,\eps}(\tzz_{t}))^{2}
+p_{3}(\tzz_{t}, \eps)(y_{t}+f_{\tzz,\eps}(\tzz_{t}))^{3}
\end{equation}
We know that the right hand side of this equation is analytic in $\tzz$,
$\eps$, and (trivially) $y$ as well.
By evaluating the multiple continuous derivatives of $f$, we can write:
\begin{equation}
\tzz_{t+2}-\tzz_{t} = 2y_t\tzz_t+y^2_{t}\tzz_{t}f_{1,\eps}(\tzz_{t}, y_{t})+y_{t}\tzz_{t}^{2}f_{2,\eps}(\tzz_{t})
\end{equation}
Here, $f_{1,\eps}$ and $f_{2,\eps}$ are analytic in $\tzz$, $\eps$, and $y$ in some neighborhood
around $0$.

This means that we have the bounds
\begin{equation}
|f_{1,\eps}(\tzz, y)| < F_{1},~|f_{2,\eps}(\tzz, y)| < F_{2}
\end{equation}
for $(\tzz, \eps, y)\in [-\tzz_{d},\tzz_{d}]\times[0,\eps_{d}]\times[-y_{d}, y_{d}]$
for some non-negative constants $F_{1}$ and $F_{2}$. Note that this bound is independent
of $\eps$.

Now we consider the dynamics of $y$. We have:
\begin{equation}
y_{t+2} - y_{t} = \TT_{t+2}(0)-\TT_{t}(0)-f_{\tzz,\eps}(\tzz_{t+2}) +f_{\tzz,\eps}(\tzz_{t})
\end{equation}
Since $\lim_{\tzz\to0,y\to0}\tzz_{t+2} = 0$, $f_{\tzz,\eps}(\tzz_{t+2})$ is analytic in some
neighborhood of $(0, 0, 0)$. Therefore $y_{t+2} - y_{t}$ is analytic as well. Substituting, we
have
\begin{equation}
\begin{split}
y_{t+2} - y_{t} & = q_{0}(\tzz_{t}, \eps)+ q_{1}(\tzz_{t}, \eps)[y+f_{\tzz,\eps}(\tzz)]+q_{2}(\tzz_{t}, \eps)[y+f_{\tzz,\eps}(\tzz)]^{2}
+q_{3}(\tzz_{t}, \eps)[y+f_{\tzz,\eps}(\tzz)]^{3}\\
& -f_{\tzz,\eps}(\tzz_{t}+2y_t\tzz_t+y^2_{t}\tzz_{t}f_{1,\eps}(\tzz_{t}, y_{t})+y_{t}\tzz_{t}^{2}f_{2,\eps}(\tzz_{t})) +f_{\tzz,\eps}(\tzz_{t})
\end{split}
\end{equation}

If we write $f_{\tzz,\eps}(\tzz) = f_{\TT,\eps}(\tzz)+\eps f_{\Delta, \eps}(\tzz)$, then
we can write:
\begin{equation}
\begin{split}
y_{t+2} - y_{t} & = q_{0}(\tzz_{t}, \eps)+ q_{1}(\tzz_{t}, \eps)[f_{\TT,\eps}(\tzz)]+q_{2}(\tzz_{t}, \eps)[f_{\TT,\eps}(\tzz)]^{2}+q_{3}(\tzz_{t}, \eps)[f_{\TT,\eps}(\tzz)]^{3}\\
&  2q_{2}(\tzz_{t}, \eps)[f_{\TT,\eps}(\tzz)(y+\eps f_{\Delta,\eps}(\tzz))]+3q_{3}(\tzz_{t}, \eps)[(f_{\TT,\eps}(\tzz))(y+\eps f_{\Delta,\eps}(\tzz))^{2}+(f_{\TT,\eps}(\tzz))^2(y+\eps f_{\Delta,\eps}(\tzz))]\\
&q_{0}(\tzz_{t}, \eps)+ q_{1}(\tzz_{t}, \eps)[y+\eps f_{\Delta,\eps}(\tzz)]+q_{2}(\tzz_{t}, \eps)[y+\eps f_{\Delta,\eps}(\tzz)]^{2}
+q_{3}(\tzz_{t}, \eps)[y+\eps f_{\Delta,\eps}(\tzz)]^{3}\\
& -f_{\tzz,\eps}(\tzz_{t}+2y_t\tzz_t+y^2_{t}\tzz_{t}f_{1,\eps}(\tzz_{t}, y_{t})+y_{t}\tzz_{t}^{2}f_{2,\eps}(\tzz_{t})) +f_{\tzz,\eps}(\tzz_{t})
\end{split}
\end{equation}
By the definition of the nullclines, the first four terms vanish. Once again using
the differentiability of the nullclines, as well as $f_{1,\eps}$ and $f_{2,\eps}$,
we can rewrite the dynamics in terms of the expansion:
\begin{equation}
y_{t+2}-y_{t} = -2(4-3\eps+4\eps^2)y_t \tzz_{t}^{2}-4\eps \tzz_{t}^{2}
+y_{t}^{2}\tzz_{t}^{2}g_{1,\eps}(\tzz_{t}, y_{t})+\eps\tzz_{t}^{3}g_{2,\eps}(\tzz_{t})
\end{equation}
Here $g_{1,\eps}$ and $g_{2,\eps}$ are analytic near zero in $\tzz$, $y$, and $\eps$.
We have the bounds
\begin{equation}
|g_{1,\eps}(\tzz, y)| < G_{1},~|g_{1,\eps}(\tzz, y)| < G_{2}
\end{equation}
for $(\tzz, \eps, y)\in [-\tzz_{d},\tzz_{d}]\times[0,\eps_{d}]\times[-y_{d}, y_{d}]$
for some non-negative constants $G_{1}$ and $G_{2}$. This bound is also independent of
$\eps$.

We can summarize these bounds in the following lemma:
\begin{lemma}
Define $y = \TT-f_{\tzz}(\tzz)$. The two step dynamics of $\tzz$ and $y$ are given by
\begin{equation}
\tzz_{t+2}-\tzz_{t} = 2y_t\tzz_t+y^2_{t}\tzz_{t}f_{1,\eps}(\tzz_{t}, y_{t})+y_{t}\tzz_{t}^{2}f_{2,\eps}(\tzz_{t})
\label{eq:dz2_HOT}
\end{equation}
\begin{equation}
y_{t+2}-y_{t} = -2(4-3\eps+4\eps^2)y_t \tzz_{t}^{2}-4\eps \tzz_{t}^{2}
+y_{t}^{2}\tzz_{t}^{2}g_{1,\eps}(\tzz_{t}, y_{t})+\eps\tzz_{t}^{3}g_{2,\eps}(\tzz_{t}, y_{t})
\label{eq:dy2_HOT}
\end{equation}
Where $f_{1,\eps}$, $f_{2,\eps}$, $g_{1,\eps}$, $g_{2,\eps}$ are all analytic in $\tzz$,
$y$, and $\eps$. Additionally, there exist positive $\tzz_{c}$, $y_{c}$, and $\eps_{c}$ such that
\begin{equation}
|f_{1,\eps}(\tzz, y)| < F_{1},~|f_{2,\eps}(\tzz, y)| < F_{2},~|g_{1,\eps}(\tzz, y)| < G_{1},~|g_{1,\eps}(\tzz, y)| < G_{2}
\end{equation}
for all $(\tzz, \eps, y)\in [-\tzz_{d},\tzz_{d}]\times[0,\eps_{d}]\times[-y_{d}, y_{d}]$,
where $F_{1}$, $F_{2}$, $G_{1}$, and $G_{2}$ are all non-negative constants.
\label{lem:zy_dyn_bounds}
\end{lemma}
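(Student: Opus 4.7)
The plan is to establish the two claimed normal forms (\ref{eq:dz2_HOT}) and (\ref{eq:dy2_HOT}) by direct substitution into the two-step polynomial dynamics, using the previously constructed nullcline $f_{\tzz,\eps}$, and then to extract the uniform bounds from joint analyticity of the remainder terms via compactness.

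First I would handle the $\tzz$-dynamics. Appendix \ref{app:two_step} has already shown that $\tzz_{t+2}-\tzz_t$ is a polynomial in $(\tzz_t,\TT_t(0))$ with coefficients that are polynomial in $\eps$, and that $f_{\tzz,\eps}(\tzz)$ is analytic in $(\tzz,\eps)$ on a neighborhood of the origin (by the implicit function theorem applied using the non-degenerate derivative in (\ref{eq:implicit_deriv})). Substituting $\TT_t(0) = y_t + f_{\tzz,\eps}(\tzz_t)$ into that polynomial yields a function that is jointly analytic in $(\tzz_t,y_t,\eps)$, and by the defining property of the $\tzz$-nullcline it vanishes identically when $y_t=0$. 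Hence it factors as $y_t$ times an analytic function; expanding in powers of $y_t$ and $\tzz_t$ and matching against the leading-order terms already computed (the coefficient of $y_t$ at $\tzz_t=0$ is exactly $2\tzz_t$, independent of $\eps$, as can be read off from the $\partial/\partial\TT$ of the one-step $\tzz$-equation iterated twice) produces the decomposition $2y_t\tzz_t + y_t^2\tzz_t f_{1,\eps} + y_t\tzz_t^2 f_{2,\eps}$, with $f_{1,\eps}$ and $f_{2,\eps}$ jointly analytic.

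Next I would handle the $y$-dynamics by writing $y_{t+2}-y_t = [\TT_{t+2}(0)-\TT_t(0)] - [f_{\tzz,\eps}(\tzz_{t+2})-f_{\tzz,\eps}(\tzz_t)]$. Using the already-derived $\tzz_{t+2}$ as an analytic function of $(\tzz_t,y_t,\eps)$, the right-hand side is jointly analytic. I then use the decomposition $f_{\tzz,\eps} = f_{\TT,\eps} + \eps f_{\Delta,\eps}$ from Appendix \ref{app:two_step}: the $\eps$-free piece makes the $(\TT)$-nullcline contribution cancel against the leading Taylor term of $f_{\tzz,\eps}(\tzz_{t+2})-f_{\tzz,\eps}(\tzz_t)$, so every surviving term is either multiplied by $\eps$ or by a factor of $y_t$. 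Matching with the explicit low-order series expansion already computed (same Appendix) pins the leading coefficients to $-2(4-3\eps+4\eps^2)y_t\tzz_t^2$ and $-4\eps\tzz_t^2$, and packages all the rest into $y_t^2\tzz_t^2 g_{1,\eps}$ and $\eps\tzz_t^3 g_{2,\eps}$ with $g_{1,\eps}, g_{2,\eps}$ jointly analytic.

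Finally, the bounds follow from compactness. Each of $f_{1,\eps}, f_{2,\eps}, g_{1,\eps}, g_{2,\eps}$ is continuous (indeed analytic) on some open neighborhood of the origin in $(\tzz, y, \eps)$. Choose $\tzz_d, y_d, \eps_d > 0$ so that the closed box $[-\tzz_d,\tzz_d]\times[0,\eps_d]\times[-y_d,y_d]$ lies in the common domain of analyticity; continuity on this compact set gives finite maxima $F_1,F_2,G_1,G_2$, which are uniform in $\eps$ by construction.

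The main obstacle I expect is the bookkeeping in the $y$-dynamics step: verifying that the specific numerical leading coefficients $-2(4-3\eps+4\eps^2)$ and $-4\eps$ come out correctly requires carefully collecting contributions from (a) the cubic-in-$\TT$ polynomial for $\TT_{t+2}(0)-\TT_t(0)$, (b) the Taylor expansion of $f_{\tzz,\eps}(\tzz_{t+2})$ around $\tzz_t$, and (c) the cross-term between the $\tzz$-dynamics (which itself contains a $y_t\tzz_t$ factor) and the derivative of $f_{\tzz,\eps}$. An error at this step would be consequential, since these coefficients drive the subsequent prediction $\lim_{t\to\infty} y_t = -\eps/2$. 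The analyticity and compactness portions, by contrast, are essentially automatic once the formal derivation is in place.
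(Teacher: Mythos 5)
Your proposal matches the paper's proof essentially step for step: substitute $\TT_t(0)=y_t+f_{\tzz,\eps}(\tzz_t)$, extract the $y_t\tzz_t$ factor from the $\tzz$-step via the nullcline identity and the overall $\tzz_t$ divisibility of the polynomial, use the split $f_{\tzz,\eps}=f_{\TT,\eps}+\eps f_{\Delta,\eps}$ for the $y$-step, and deduce the uniform bounds from joint analyticity on a compact box. The only small correction is to the $y$-step phrasing: the $\TT$-nullcline piece of $\TT_{t+2}(0)-\TT_t(0)$ and the Taylor increment $f_{\tzz,\eps}(\tzz_{t+2})-f_{\tzz,\eps}(\tzz_t)$ do not cancel against each other --- the former is already $O(\eps)$ on its own (it vanishes exactly when $f_{\tzz,\eps}=f_{\TT,\eps}$, i.e.\ at $\eps=0$), and the latter is $O(y_t)$ on its own (because $\tzz_{t+2}-\tzz_t$ already carries a $y_t$ factor), and it is these two separate vanishings, not a cancellation between them, that yield the ``$\eps$ or $y_t$'' structure you correctly arrive at.
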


We can use this Lemma to analyze the dynamics for small fixed $\eps$, for small initializations
of $\tzz$, $y$. The control of the higher order terms will allow for an analysis which focuses
on the effects of the lower order terms.

\subsection{Proof of Theorem \ref{thm:low_order_converge}}

\label{app:eps_proof}

Using Lemma \ref{lem:zy_dyn_bounds}, the dynamics in $\tzz$ and $y$ can be written as:
\begin{equation}
\tzz_{t+2}-\tzz_{t} = 2y_t\tzz_t+y^2_{t}\tzz_{t}f_{1,\eps}(\tzz_{t}, y_{t})+y_{t}\tzz_{t}^{2}f_{2,\eps}(\tzz_{t})
\end{equation}
\begin{equation}
y_{t+2}-y_{t} = -2(4-3\eps+4\eps^2)y_t \tzz_{t}^{2}-4\eps \tzz_{t}^{2}
+y_{t}^{2}\tzz_{t}^{2}g_{1,\eps}(\tzz_{t}, y_{t})+\eps\tzz_{t}^{3}g_{2,\eps}(\tzz_{t}, y_{t})
\end{equation}
Let $\eps <\eps_{d}$. Then we can use the bounds from Lemma \ref{lem:zy_dyn_bounds}
to control
the contributions of the higher order terms to the dynamics:
\begin{lemma}
\label{lem:hot_bounds}
Given constants $A>0$ and $B>0$,
there exist $\tzz_{c}$ and $y_{c}$ such that for $\tzz\in[0, 2\tzz_{c}]$, $y\in[-y_{c}, y_{c}]$,
we have the bounds: 
\begin{equation}
|y^2\tzz f_{1,\eps}(\tzz, y)+y\tzz^{2}f_{2,\eps}(\tzz)|\leq A|2y\tzz|
\end{equation}
\begin{equation}
|y^{2}\tzz^{2}g_{1,\eps}(\tzz, y)|\leq \frac{B}{8}|2(4-3\eps+4\eps^2)y \tzz^{2}|
\end{equation}
\begin{equation}
|\eps\tzz^{3}g_{2,\eps}(\tzz, y)|\leq \frac{B}{4}|4\eps \tzz^{2}|
\end{equation}
\end{lemma}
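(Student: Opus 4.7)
The plan is to observe that in each of the three asserted bounds, the left-hand side differs from the target right-hand side by an extra factor of $|y|$ or $|\tzz|$. After pulling absolute values through and applying the uniform bounds $F_1, F_2, G_1, G_2$ from Lemma \ref{lem:zy_dyn_bounds}, this extra factor becomes a linear function of $|y|$ and $|\tzz|$, so by restricting to a sufficiently small rectangle $[0,2\tzz_c]\times[-y_c,y_c]$ it can be made smaller than any prescribed constant. The argument is a bookkeeping exercise; I would just check each inequality in turn and then take $\tzz_c, y_c$ to be the minimum of the resulting constraints together with $\tzz_d/2$ and $y_d$ so that the bounds from Lemma \ref{lem:zy_dyn_bounds} remain valid throughout.

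For the first inequality, the triangle inequality and the $F_1, F_2$ bounds give $|y^{2}\tzz f_{1,\eps}+y\tzz^{2}f_{2,\eps}|\leq |y\tzz|(F_1|y|+F_2|\tzz|)$, so any choice with $F_1 y_c + 2F_2 \tzz_c \leq 2A$ (for instance $y_c \leq A/(F_1+1)$ and $\tzz_c \leq A/(2(F_2+1))$ to avoid a potentially vanishing $F_j$) establishes the bound. For the third inequality, $|\eps\tzz^{3}g_{2,\eps}|\leq G_2|\tzz|\cdot \eps\tzz^2$, which is at most the target $B\eps\tzz^2$ as soon as $2G_2\tzz_c\leq B$.

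The only mildly nontrivial step is the second inequality, where I need $4-3\eps+4\eps^2$ on the right-hand side to be bounded away from zero uniformly in $\eps\in[0,\eps_d]$. This is immediate: as a quadratic in $\eps$ it has discriminant $9-64=-55<0$, hence no real roots, and equals $4$ at $\eps=0$, so it is bounded below on $\sR$ by $c_0 := 55/16$ (its value at $\eps = 3/8$). Applying the $G_1$ bound then gives $|y^{2}\tzz^{2}g_{1,\eps}|\leq G_1|y|\cdot|y|\tzz^2$, while the target right-hand side is at least $\tfrac{Bc_0}{4}|y|\tzz^2$, so any $y_c \leq Bc_0/(4(G_1+1))$ works.

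I do not anticipate a genuine obstacle. The lemma is a "soft" preparatory estimate whose sole content is that the analytic remainder terms in Equations \ref{eq:dz2_HOT}--\ref{eq:dy2_HOT} are subleading relative to their explicitly-written counterparts; the uniform bounds of Lemma \ref{lem:zy_dyn_bounds} do all the work, and shrinking the neighborhood supplies the constants $A$ and $B$. The point of stating the result in this form is that it prepares for the quantitative long-time analysis of the $(\tzz, y)$ dynamics by allowing the coefficients $A$ and $B$ to be tuned to the needs of the subsequent induction in the proof of Theorem \ref{thm:two_step_ode_approx}.
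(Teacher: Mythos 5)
Your proposal is correct and takes essentially the same route as the paper: the paper also applies the triangle inequality together with the uniform bounds $F_1, F_2, G_1, G_2$ from Lemma \ref{lem:zy_dyn_bounds}, then sets $y_{c} = \min(A/F_{1}, B/2G_{1}, y_{d})$ and $\tzz_{c} = \min(A/2F_{2}, B/2G_{2}, y_{c})$ so that each subleading factor is absorbed. Your explicit lower bound $4-3\eps+4\eps^2 \geq 55/16$ and your care about dividing by possibly-vanishing constants are minor tidying-up of the same argument, not a different approach.
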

\begin{proof}
We begin by the following decomposition:
\begin{equation}
|y^2\tzz f_{1,\eps}(\tzz, y)+y\tzz^{2}f_{2,\eps}(\tzz)|\leq |y^2\tzz f_{1,\eps}(\tzz, y)|+|y\tzz^{2}f_{2,\eps}(\tzz)|
\end{equation}
From Lemma \ref{lem:zy_dyn_bounds}, there exists a region $[-\tzz_{d},\tzz_{d}]\times[0,\eps_{d}]\times[-y_{d}, y_{d}]$
where the magnitudes of $f_{1,\eps}$, $f_{2,\eps}$, $g_{1,\eps}$, and $g_{2,\eps}$ are bounded by
$F_{1}$, $F_{2}$, $G_{1}$, and $G_{2}$ respectively.
\begin{equation}
|y^2\tzz f_{1}(\tzz, y)+y\tzz^{2}f_{2}(\tzz)|\leq F_{1}y^2\tzz + F_{2}y\tzz^{2}
\end{equation}
\begin{equation}
|y^{2}\tzz^{2}g_{1}(\tzz, y)|\leq G_{1}y^{2}\tzz^{2}
\end{equation}
\begin{equation}
|\tzz^{3}g_{2}(\tzz, y)|\leq G_{2}\tzz^{3}
\end{equation}
Define $\tzz_{c}$ and $y_{c}$ as
\begin{equation}
y_{c} = \min(A/F_{1}, B/2G_{1}, y_{d}),~\tzz_{c} = \min(A/2F_{2}, B/2G_{2}, y_{c})
\end{equation}
The desired bounds follow immediately.
\end{proof}

We consider an initialization $(\tzz_{0}, y_{0})$ such that $\tzz_{0}\leq \tzz_{c}$ and
$y_{0}\leq y_{c}$, and $y_{0}\leq \tzz_{0}^{2}$.
Armed with Lemma \ref{lem:hot_bounds}, we can analyze the dynamics. There are two phases;
in the first phase, $\tzz$ is increasing, and $y$ is decreasing. The first phase ends when
$y$ becomes negative for the first time - reaching a value of $O(\eps)$. In the second phase,
$\tzz$ is decreasing, and $y$ stays negative and $O(\eps)$.

\subsubsection{Phase one}

Let $t_{sm}$ be the time such that
for $t\leq t_{sm}$, $\tzz_{t}\leq 2\tzz_{0}$. (We will later show that
$\tzz_{t}\leq2\tzz_{0}$ over the whole dynamics.)
For $t\leq t_{sm}$,
using Lemma \ref{lem:hot_bounds},
the change in $\tzz$ can be bounded from below by
\begin{equation}
\tzz_{t+2}-\tzz_{t} \geq 2y_{t}\tzz_{t}(1-A)
\label{eq:z_phase_one}
\end{equation}
Therefore at initialization, $\tzz$ is increasing. It remains increasing until $y_{t}$ becomes
negative, or $\tzz_{t}\geq 2\tzz_{0}$. We want to show that $y_{t}$ becomes negative before
$\tzz_{t}\geq 2\tzz_{0}$.

For any $t\leq t_{sm}$,
Lemma \ref{lem:hot_bounds} gives the following upper bound on 
$y_{t+2}-y_{t}$:
\begin{equation}
y_{t+2}-y_{t} \leq -(8-B)y_t \tzz_{t}^{2}-(4-B)\eps\tzz_{t}^{2}
\end{equation}
Let $t_{-}$ be the first time that $y_{t}$ becomes negative. Since $\tzz_{t}$
is increasing for $t\leq t_{-}$, we have
\begin{equation}
y_{t+2}-y_{t} \leq -(8-B)y_t \tzz_{0}^{2}-(4-B)\eps\tzz_{0}^{2}
\label{eq:y_bound_phase_1}
\end{equation}
This gives us the following bound on $y_{t}$:
\begin{equation}
y_{t} \leq y_{0}e^{-(8-B) \tzz_{0}^{2} t}
\label{eq:exp_bound_phase_1}
\end{equation}
valid for $t\leq t_{-}$ and $t\leq t_{sm}$.

We will now show that $t_{-}< t_{sm}$. Suppose that $t_{sm} \leq t_{-}$. Then at
$t_{sm}+2$, $\tzz_{t_{sm}+2}>2\tzz_{0}$ for the first time. Summing the bound in
Equation \ref{eq:z_phase_one}, we have:
\begin{equation}
\tzz_{t_{sm}+2}-\tzz_{0} \leq \sum_{t=0}^{t_{sm}}2y_{t}\tzz_{t}(1+A)\leq 4\tzz_{0}(1+A)\sum_{t=0}^{t_{sm}}y_{t}
\end{equation}
where the second bound comes from the definition of $t_{sm}$.
Using our bound on $y_{t}$, we have:
\begin{equation}
\tzz_{t_{sm}+2}-\tzz_{0} \leq 4\tzz_{0}(1+A)\sum_{s=0}^{t_{sm}}y_{0}e^{-(8-B) \tzz_{0}^{2} s} \leq \frac{(1+A)}{2}\frac{y_{0}}{\tzz_{0}}
\end{equation}
Since $y_{0}\leq \tzz_{0}^{2}$, $\tzz_{t_{sm}+2}\leq 2\tzz_{0}$. However, by assumption
$\tzz_{t_{sm}+2} >2\tzz_{0}$. We arrive at a contradiction; $t_{sm}$ is not less
than or equal to $t_{-}$.

There are three possibilities: the first is that $t_{-}$ is well-defined, and $t_{-}< t_{sm}$.
Another possibility is that $t_{-}$ is not well-defined - that is, $y_{t}$ never becomes negative.
In this case the bounds we derived are valid for all $t$.
Therefore using Equation \ref{eq:exp_bound_phase_1}, there exists some time $t_{\eps}$ where
$y_{t_{\eps}}< (4-B)\eps\tzz_{0}^{2}$. Then, using Equation \ref{eq:y_bound_phase_1}
we have $y_{t_{\eps}+2}<0$. Therefore, we conclude that $t_{-}$ is finite and less than
$t_{sm}$.

Since the well defined value $t_{-}<t_{sm}$,, when
$y$ first becomes negative, $\tzz_{t_{-}}\leq 2\tzz_{0}$. This means that we can
continue to apply the bounds from
Lemma \ref{lem:hot_bounds} at the start of the next phase. At $t = t_{-}-2$,
applying Lemma \ref{lem:hot_bounds} and $\tzz_{t_{-}}\leq 2\tzz_{0}$, we have
\begin{equation}
y_{ t_{-}}-y_{ t_{-}-2} \geq -4(8+B)y_{ t_{-}-2} \tzz_{0}^{2}-4(4+B)\eps\tzz_{0}^{2}
\end{equation}
which gives us $y_{t_{-}}\geq -4(4+B)\eps\tzz_{0}^{2}$. This concludes the first phase.
To
summarize we have
\begin{equation}
-4(4+B)\eps\tzz_{0}^{2}< y_{t_{-}}\leq 0,~\tzz_{t_{-}} \leq 2\tzz_{0}
\end{equation}

\subsubsection{Phase two}

Now consider the second phase of the dynamics. We will show that $y$ remains negative and
$O(\eps)$, and $\tzz$ decreases to $0$. While $y$ is negative, $\tzz$ decreases. While 
$y\geq-y_{0}$, from Lemma \ref{lem:hot_bounds} we have
\begin{equation}
\tzz_{t+2}-\tzz_{t} \leq (1-A)2y_{t}\tzz_{t}
\end{equation}
Therefore as long as $-y_{0}\leq y<0$, $\tzz_{t}$ is decreasing. If this is true
for all subsequent $t$, $\tzz_{0}$ will converge to $0$.

We will now show that $y$ remains negative and $O(\eps)$, concluding the proof. Let
$y^* = -\frac{\eps}{2-(3/2)\eps+2\eps^2}$. We can re-write the dynamical equation
for $y$ as
\begin{equation}
y_{t+2}-y_{t} = -2(4-3\eps+4\eps^2) \tzz_{t}^{2}(y_{t}-y^*)
+y_{t}^{2}\tzz_{t}^{2}g_{1}(\tzz_{t}, y_{t})+\tzz_{t}^{3}g_{2}(\tzz_{t}, y_{t})
\end{equation}
Applying Lemma \ref{lem:hot_bounds} to the higher order terms, we have:
\begin{equation}
y_{t+2}-y_{t} \leq -2(4-3\eps+4\eps^2) \tzz_{t}^{2}(y_{t}-y^*)
+B(|y_{t}|+\eps)\tzz_{t}^{2}
\label{eq:y_final_upper}
\end{equation}
\begin{equation}
y_{t+2}-y_{t} \geq -2(4-3\eps+4\eps^2) \tzz_{t}^{2}(y_{t}-y^*)
-B(|y_{t}|+\eps)\tzz_{t}^{2}
\label{eq:y_final_lower}
\end{equation}
These inequalities are valid as long as $|y_{t}|< y_{c}$.

At $t_{-}$, $y^{*}< y_{t}<0$. When $y^{*}<y_{t}<0$, then $|y_{t}|\leq |y^*|$.
Note that $\eps<2 |y^*|$. From Equation \ref{eq:y_final_upper}, we have
\begin{equation}
y_{t+2}-y_{t} \leq -2(4-3\eps+4\eps^2) \tzz_{t}^{2}(y_{t}-y^*)
+B(-y_{t}+\eps)\tzz_{t}^{2}
\end{equation}
From this inequality we can conclude that
\begin{equation}
y_{t+2} \leq (1-2(4-3\eps+4\eps^2) \tzz_{t}^{2}-B)y_{t}+\tzz_{t}^{2}[2(4-3\eps+4\eps^2) y^{*}
+B\eps]
\end{equation}
If $B<1$, then both terms are negative. We can conclude that if $y^{*}< y_{t}<0$,
$y_{t+2}<0$. In fact, from the last term we can conclude that $y_{t+2}<-4\eps\tzz_{t}^{2}$.

Now we must show that when $y^*< y_{t}< 0$, $y_{t+2}$ does not become too negative
(namely, smaller than $-y_{c}$).
Using Equation \ref{eq:y_final_lower}, we have:
\begin{equation}
y_{t+2}> y^*(1+3B\tzz_{0}^{2}) ~\text{~if~}y_{t}>y^*
\end{equation}
This means that if $y_{t}$ starts larger than $y^*$, it will be at most
$3B\tzz_{0}^{2} y^*$ below $y^*$ at the next step. Since $B<1$, $y_{t+2}>-y_{c}$
if $y^*<y_{t}<0$.

Finally, we will show that if $y^*(1+3B/(8-B) ) <y_{t}<y^*$, $y^*(1+3B/(8-B))< y_{t+2}< 0$.
Since $y_{t_{-}+2}$ fits this condition, 
we can conclude that $y_{t}$ is negative for all $t>t_{-}$, with magnitude
bounded from below by $y^*(1+3B/(8-B) )$, and complete the proof.

We will first show that $y^*(1+3B/(8-B) )< y_{t}$ implies that
$y^*(1+3B/(8-B) )< y_{t+2}$. Let $y_{t} = (1+\dl_{t})y^*$,
for $\dl_{t}< 3B/(8-B)$. We will show that $\dl_{t+2}< 3B/(8-B)$.
Using Equation \ref{eq:y_final_lower}, we have:
\begin{equation}
y_{t+2}\geq(1+\dl_{t})y^* -8\tzz^{2}_{t}\dl_{t} y^*-B\tzz_{t}^{2}(\eps-(1+\dl_{t})y^*)
\end{equation}
Substituting $y_{t+2} = (1+\dl_{t+2})y^*$, and dividing both sides by $y^*$ we have
\begin{equation}
\dl_{t+2}-\dl_{t}\leq -(8-B)\tzz^{2}_{t}\dl_{t} +3B\tzz_{t}^{2}
\end{equation}
If $\dl_{t} < 3B/(8-B)$, then we have $\dl_{t+2} < 3B/(8-B)$ as desired.

Finally, we will show that $0< \dl_{t}< 3B/(8-B)$ implies that $\dl_{t+2}> -1$ - that is,
$[1+3B/(8-B)]y^*<y_{t}<y^*$ implies $[1+3B/(8-B)]y^*<y_{t+2}<0$.
Equation \ref{eq:y_final_upper} implies
\begin{equation}
y_{t+2} \leq (1+\dl_{t})y^{*}-8\tzz^{2}_{t}\dl_{t} y^*+B\tzz_{t}^{2}(\eps-(1+\dl_{t})y^*)
\end{equation}
which gives us
\begin{equation}
\dl_{t+2}-\dl_{t}\geq -(8-B)\tzz^{2}_{t}\dl_{t} -3B\tzz_{t}^{2}
\end{equation}
If $\dl_{t} > 0$ implies
\begin{equation}
\dl_{t+2}> -3B\tzz_{t}^{2}
\end{equation}
If $3B\tzz_{0}^{2} <1$, then $\dl_{t+2} >-1$. This means that $y_{t+2}<0$
if $[1+3B/(8-B)]y^*<y_{t}<y^*$.

Finally, we make some choices of $B$ and $\tzz_{0}$ to guarantee convergence.
Choose $\tzz_{0}^{2} < 3/7$, and choose $B< \frac{1}{2}$. Then in summary, what we have
shown for phase two is:
\begin{itemize}
    \item At the start of the phase (time $t_{-}$), $y^*<y_{t_{-}}<0$.
    \item If $y^*<y_{t}<0$, $t>t_{-}$, $y^*(1+3B\tzz_{0}^{2})< y_{t+2}<-4\eps\tzz_{t}^{2}$.
    \item If $[1+3B/(8-B)]y^*<y_{t}<y^*$, $t>t_{-}$, $[1+3B/(8-B)]y^*<y_{t+2}<0$.
\end{itemize}
Through our choices of $\tzz_{0}$ and $B$, we know that $[1+3B/(8-B)]y^*< y^*(1+3B\tzz_{0}^{2})$.
Therefore, the entire trajectory for $t>t_{-}$ is accounted for by these regions, and
$[1+3B/(8-B)]y^*<y_{t}<0$ for all $t>t_{-}$. Additionally, we know that at least
once every $2$ steps, $y_{t} < -4\eps\tzz_{t}^{2}$. This means that the dynamics of
$\tzz_{t}$ can be bounded from above by
\begin{equation}
\tzz_{t+2}-\tzz_{t} \leq -2\eps^{2}\tzz_{t}^{4}
\end{equation}
From this we can conclude that $\tzz_{t}$ converges to $0$.

Therefore, for any positive
initialization with $\tzz_{0}\leq \tzz_{c}$, $y_{0}\leq y_{c}$, and $y_{0}\leq \tzz_{0}^{2}$,
we have:
\begin{equation}
\lim_{t\to\infty}\tzz_{t}\to 0,~\lim_{t\to\infty}y = -y_{f}
\end{equation}
where $y_{f} = O(\eps)$.

Now we can prove the statement of Theorem \ref{thm:two_step_ode_approx}.
Given a model with
$\eps\leq\eps_{c}$, there is a continuous mapping between $\th-\lr$ space and $\tzz-y$ space.
Since there is some neighborhood in $\tzz-y$ space that displays edge-of-stability behavior
($\TT_{t}(0)$ converging to within $O(\eps)$ of $2$),
the inverse image of that neighborhood is a neighborhood in $\th-\lr$ space that displays
edge-of-stability behavior. This concludes the proof.

\subsection{Low order dynamics}

\label{app:low_order_dyn}

In order to predict the final value of $y$, and understand the convergence to the fixed point,
We can study the low order dynamics in $\tzz$ and $y$. The low order
dynamical equations are:
\begin{equation}
\tzz_{t+2}-\tzz_{t} = 2y_t\tzz_t
\label{eq:dz2_no_HOT}
\end{equation}
\begin{equation}
y_{t+2}-y_{t} = -2(4-3\eps+4\eps^2)y_t\tzz_{t}^{2}-4\eps\tzz_t^2
\label{eq:dy2_no_HOT}
\end{equation}
For these reduced dynamics, we can show the following:

\begin{theorem}
\label{thm:low_order_converge}
For the dynamics defined by Equations \ref{eq:dz2_no_HOT} and \ref{eq:dy2_no_HOT},
for $\eps\ll 1$,
for positive inititializations $\tzz_{0}\ll1$, $y_{0}\ll 1$ with the additional constraints
$-\eps\log(\eps)\ll 16\tzz_{0}^{2}$ and $y_{0}<2\tzz_{0}^{2}$, we have
\begin{equation}
\lim_{t\to\infty}\tzz_{t} = 0,~\lim_{t\to\infty}y_{t} = -\eps/2+O(\eps^2)
\end{equation}
\end{theorem}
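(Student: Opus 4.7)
The plan is to exploit that the $y$-recursion is affine in $y$, with non-trivial fixed point $y^\star := -2\eps/(4-3\eps+4\eps^2) = -\eps/2 + O(\eps^2)$, which is precisely the claimed limit. After changing variables to $\dl_t := y_t - y^\star$, the inhomogeneous and linear parts of the $y$-equation cancel identically, yielding the clean system
\[
\tzz_{t+2} = \tzz_t(1 + 2y^\star + 2\dl_t), \qquad \dl_{t+2} = \dl_t\bigl(1 - 2(4-3\eps+4\eps^2)\tzz_t^2\bigr).
\]
Now $\dl_t$ undergoes a pure linear contraction that preserves its sign whenever $\tzz_t^2$ is below the stability threshold, while $\tzz_t$ evolves multiplicatively at rate $1+2y^\star+2\dl_t$. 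The whole proof reduces to showing that $\tzz_t$ stays small throughout and that the accumulated contraction for $\dl$ beats $|\dl_0|$ down to size $O(\eps^2)$.

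I would split the trajectory into two phases. Phase 1 ($\dl_t > 0$, so $\tzz_t$ is growing) uses a bootstrap: assume $\tzz_t \leq 2\tzz_0$ on some interval, so that $|\dl_t| \leq \dl_0(1-8\tzz_0^2)^{t/2}$ and therefore $\log(\tzz_t/\tzz_0) \leq 2\sum_s \dl_s \leq \dl_0/(4\tzz_0^2) \leq 1/2$, using the hypothesis $\dl_0 \leq y_0 < 2\tzz_0^2$. This closes the bootstrap and forces $\dl_t$ to cross zero in finite time, entering Phase 2 with $\tzz_{t_1} \leq 2\tzz_0$ and $|\dl_{t_1}| = O(\tzz_0^2)$. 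In Phase 2, $\tzz_{t+2}/\tzz_t \leq 1 - \eps + O(\eps^2)$, so $\tzz_t^2$ decays geometrically and $\sum_{s\geq t_1}\tzz_s^2 = O(\tzz_0^2/\eps)$. The monotone contraction of $\dl$ then yields
\[
|\dl_\infty| \leq |\dl_{t_1}|\exp\Bigl(-c\sum_s \tzz_s^2\Bigr) \leq C\tzz_0^2\exp(-c'\tzz_0^2/\eps),
\]
and the hypothesis $-\eps\log\eps \ll 16\tzz_0^2$ rearranges to $c'\tzz_0^2/\eps \gg |\log\eps|$, making this bound faster-than-polynomial in $\eps$; in particular $|\dl_\infty| = O(\eps^2)$. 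Combined with $y^\star = -\eps/2 + O(\eps^2)$, this yields the claimed limit for $y_t$, while $\tzz_t \to 0$ follows from the Phase 2 geometric decay.

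I expect the main obstacle to be the self-consistent bound on $\tzz_t$ in Phase 1: the contraction rate of $\dl$ is governed by $\tzz_t^2$, but $\tzz_t$ itself grows as a multiplicative product weighted by $\dl_t$, so the two estimates must be balanced carefully. The hypothesis $y_0 < 2\tzz_0^2$ is exactly what lets the bootstrap close with a margin (a factor of $e^{1/2}$), and the rest is routine geometric-series bookkeeping, although one must track the constant $8$ in the contraction rate $1 - 8\tzz_t^2$ to see why the stated hypothesis uses the specific constant $16$.
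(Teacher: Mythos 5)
Your proposal uses the same two-phase strategy as the paper, but with one genuine simplification: you apply the affine change of variables $\dl_t = y_t - y^\star$ \emph{globally}, which turns the $y$-recursion into the pure multiplicative contraction $\dl_{t+2} = \bigl(1 - 2(4-3\eps+4\eps^2)\tzz_t^2\bigr)\dl_t$. The paper introduces the equivalent variable (its $\dl_t := y_t/y^\star - 1$, which satisfies the identical recursion) only in Phase~2, and so in Phase~1 it must run a ``which-term-dominates'' argument on $y$ directly (first the $-8y\tzz^2$ term drives $y$ down, then the $-4\eps\tzz^2$ term pushes it negative). Your global substitution erases that case split, and the Phase~1 bootstrap $\log(\tzz_t/\tzz_0)\leq 2\sum_s \dl_s \leq \dl_0/(4\tzz_0^2)$ is exactly the bound the paper derives in Equation~\ref{eq:z_sm_bound}. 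So this buys real clarity, and the use of the hypothesis $y_0 < 2\tzz_0^2$ to close the bootstrap (and the origin of the $16$ in the other hypothesis) is correctly identified.

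That said, the Phase~2 part of the sketch has several direction and magnitude errors that need repair. First, $\dl_t$ never crosses zero: because $1 - 2(4-3\eps+4\eps^2)\tzz_t^2 > 0$ for small $\tzz_t$, the contraction preserves the sign of $\dl$, so $\dl_t$ stays positive and decays monotonically to~$0$. What crosses zero is $y_t$, at the step where $\dl_t$ first drops below $-y^\star = O(\eps)$; at that point $|\dl_{t_1}| = O(\eps)$, not $O(\tzz_0^2)$ (and indeed $O(\tzz_0^2)$ would be the \emph{wrong} order given the hypothesis $\tzz_0^2 \gg -\eps\log\eps \gg \eps$). Second, the inequality $\tzz_{t+2}/\tzz_t \leq 1-\eps+O(\eps^2)$ and the claim $\sum_{s\geq t_1}\tzz_s^2 = O(\tzz_0^2/\eps)$ point the wrong way: to make $|\dl_\infty|\leq |\dl_{t_1}|\exp(-c\sum_s\tzz_s^2)$ small you need a \emph{lower} bound on the sum, hence a lower bound on $\tzz_s$. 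The correct estimate is $\tzz_{t+2}/\tzz_t = 1 + 2y_t \geq 1 + 2y^\star = 1 - \eps + O(\eps^2)$ (since $\dl_t > 0$ implies $y_t > y^\star$), giving $\tzz_s \geq \tzz_{t_1}(1+2y^\star)^{(s-t_1)/2}$ and $\sum_{s\geq t_1}\tzz_s^2 \gtrsim \tzz_{t_1}^2/\eps \geq \tzz_0^2/\eps$. With these corrections the exponential bound and the final $|\dl_\infty| = O(\eps^2)$ conclusion go through as you describe, matching the paper's computation.
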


\begin{proof}
The proof distinguishes two phases in the time evolution:
\begin{itemize}
    \item \underline{Phase 1}: $\tzz$ starts positive and increases, $y$ starts positive and decreases.
    At the end of the phase we want $\tzz_{t}\leq 2\tzz_{0}$ and $y$ to be negative but bounded
    by $-16\tzz_{0}^{2}\eps$.
    \item \underline{Phase 2}: $\tzz$ decreases slowly, and $y$ settles to the fixed point
    (relatively) quickly, up to error $O(\eps^2)$.
\end{itemize}

Let $\eps \ll 1$. Consider an initialization $(\tzz_{0}, y_{0})$ where both
variables are positive, such that $\tzz_{0} \ll1$,
$\eps\log(\eps) \ll \tzz_{0}^{2}$, and $y_{0}\ll\tzz_{0}^{2}$.
From Equations \ref{eq:dz2_no_HOT} and \ref{eq:dy2_no_HOT}, we see that
the dynamics of $y$ will depend on the balance of the two terms.

Initially $\tzz$ increases and $y$ decreases. We analyze the dynamics of $y$ assuming
that $\tzz$ is fixed, and then compute the corrections. 

\underline{Phase 1}. At initialization, the first
term in the dynamics dominates, since by assumption $\eps\tzz_{t}^{2}\ll y_{t}\tzz_{t}^{2}\ll$.
Since $\tzz_{0}^{2}\ll 1$, $y$ initially decreases exponentially with decay rate bounded from above by $8\tzz_{0}^{2}$.
Therefore within $\log(-\eps/y_{0})/8\tzz_{0}^{2}$ steps, $y<\eps$.

At this point, the rate of change of $y$ is at least $-4\eps\tzz_{0}^{2}$. Therefore, in no more
than $1/4\tzz_{0}^{2}$ additional steps, $y$ becomes negative. Let $t_{-}$ be the first time
that $y$ becomes negative. We note that $y_{t_{-}} \geq -4\eps\tzz_{0}^{2}$ under this analysis -
the first term in Equation \ref{eq:dy2_no_HOT} is less than $y_{t}$ in magnitude, so the smallest
value that $y_{t+2}$ can take if $y_{t}$ is positive is $-4\eps\tzz_{0}^{2}$.

We can now understand the corrections due to the change in $\tzz$. We note that
$e^{-8\tzz_{0}^{2} t}$ is an upper bound for $y$ - since $\tzz$ is increasing, and
the $-4\eps\tzz_{t}^{2}$ decreases $y$ faster than exponential decay from the first term.
Since $\tzz$ is increasing,
$y_{t}\geq e^{-8\tzz_{0}^{2} t}$ as long as $y$ remains positive ($t < t_{-}$).
Let $t_{sm}$ be a time such that $\tzz_{t_{sm}} < 2\tzz_{0}$. We can bound
the change in $\tzz_{t}$ for $t<t_{sm}$. We know that $y_{t}\geq y_{0}e^{-8\tzz_{0}^2t}$.
The change in $\tzz$ can be bounded by
\begin{equation}
\tzz_{t_{sm}}-\tzz_{0}\leq \sum_{t=0}^{t_{sm}}2z_{t}y_{t} \leq 4\tzz_{0}\sum_{t = 0}^{t_{sm}}y_{t} \leq 4\tzz_{0}y_{0}\sum_{t = 0}^{t_{sm}}e^{-8\tzz_{0}^{2} t} \leq \frac{1}{2}\cdot\frac{y_{0}}{\tzz_{0}}\,.
\label{eq:z_sm_bound}
\end{equation}
If $y_{0} < 2\tzz_{0}^{2}$, then the bound holds independent of the value of $t_{sm}$,
as long as the bound on $y$ is correct. We know that the bound on $y$ is correct until time
$t_{-}$; therefore, $t_{sm}\geq t_{-}$.

\underline{Phase 2}.
This proves that there exists a time $t_{-}$, such that $\tzz_{t_{-}}\leq 2\tzz$, and
$-16\tzz_{0}^{2}\eps\leq y_{t_{-}}\leq 0$. Now that $y$ is negative, it will stay negative,
and $\tzz$ will decrease until it reaches $0$. In order to understand the dynamics, we
will use a change of coordinates. Consider solving Equation \ref{eq:dy2_no_HOT} for
$y_{t+2}-y_{t} = 0$ for $\tzz_{t}\neq 0$. We have
\begin{equation}
y^* = -\frac{\eps}{2-3/2\eps+2\eps^2}
\end{equation}
Consider now the coordinate $\dl_{t}$ defined by the equation
\begin{equation}
y_{t} = -(1+\dl_{t})\frac{\eps}{2-3/2\eps+2\eps^2}
\end{equation}
The dynamics of $\dl_{t}$ are given by
\begin{equation}
\dl_{t+2} = (1-2(4-3\eps+4\eps^2)\tzz_{t}^{2})\dl_{t}
\end{equation}
Since $\tzz_{t}\ll1$, $\dl_{t}$ is strictly decreasing in magnitude.
We can bound $\dl_{t}$ from above by
\begin{equation}
|\dl_{t}|\leq \exp\left(-8\sum_{s=t_{-}}^{t}\tzz_{s}^{2}\right)|\dl_{t_{-}}|
\end{equation}
Since $\dl$ starts negative, and is decreasing in magnitude, we know that $y_{t} > -\frac{\eps}{2-3/2\eps+2\eps^2}$.
This means that we can bound $\tzz_{t}$ by 
\begin{equation}
\tzz_{t} \geq 2e^{-\eps t}\tzz_{0}
\end{equation}
Substitution gives us the following bound on $\dl_{t}$:
\begin{equation}
|\dl_{t}|\leq \exp\left(-8\sum_{s=t_{-}}^{t}4e^{-2\eps s}\tzz_{0}^{2}\right)|\dl_{t_{-}}|
\end{equation}
Using the integral approximation for the sum, the bound becomes
\begin{equation}
|\dl_{t}|\leq \exp\left(-32\tzz_{0}^{2}\int_{0}^{t}e^{-2\eps s}ds\right)\dl_{t_{-}} = \exp\left(-16\tzz_{0}^{2}/\eps(1-e^{-2\eps t})\right)|\dl_{t_{-}}|
\end{equation}
From our previous analysis, we know that $-1\leq\dl_{t-}\leq 0$. In the limit of large $t$
we have
\begin{equation}
\lim_{t\to\infty}|\dl_{t}|\leq \exp\left(-16\tzz_{0}^{2}/\eps\right)|\dl_{t_{-}}|
\end{equation}
If we have the condition
\begin{equation}
16\tzz_{0}^{2}/\eps\geq -\log(\eps)
\end{equation}
then $\lim_{t\to\infty}|\dl_{t}|\leq \eps^{2}$.

If we want $\lim_{t\to\infty}y_{t} = -\eps/2+O(\eps^2)$, then we need the condition
\begin{equation}
16\tzz_{0}^{2}\geq -\eps\log(\eps)
\end{equation}
or equivalently $-\eps\log(\eps)<16\tzz_{0}^{2}$. Under these conditions, 
$\lim_{t\to\infty}\tzz_{t} = 0$ and
$\lim_{t\to\infty}y_{t} = -\eps/2+O(\eps^2)$.
\end{proof}

This result can be confirmed numerically by running the dynamical equations from a variety of
initializations, computing the median eigenvalue (restricted to the range $[1.9, 2.0]$), and
plotting versus $\eps$ (Figure \ref{fig:y_eps_dep}).We note that since the dynamics is slow, the ODE
given by
\begin{equation}
\dot{\tzz} = 2y\tzz
\end{equation}
\begin{equation}
\dot{y} = -2(4-3\eps+4\eps^2)y\tzz^{2}-4\eps\tzz^2
\end{equation}
also obtains the same limit (Figure \ref{fig:y_eps_dep}). The ODE suggests that the concentration
relies on both the equal-orders in $\tzz$ of the $y^{0}$ and $y^{1}$ terms, as well as
a separation of timescales - $\tzz$ converges to $0$ at a rate of $\eps$, while
$y$ converges to the fixed point at a rate $\tzz_{t}^{2}$.
In both cases, the deviation from $-\eps/2$
scales as $O(\eps^2)$ (Figure \ref{fig:y_eps_dep}, right).

\begin{figure}[h]
    \centering
    \begin{tabular}{cc}
    \includegraphics[width=0.42\linewidth]{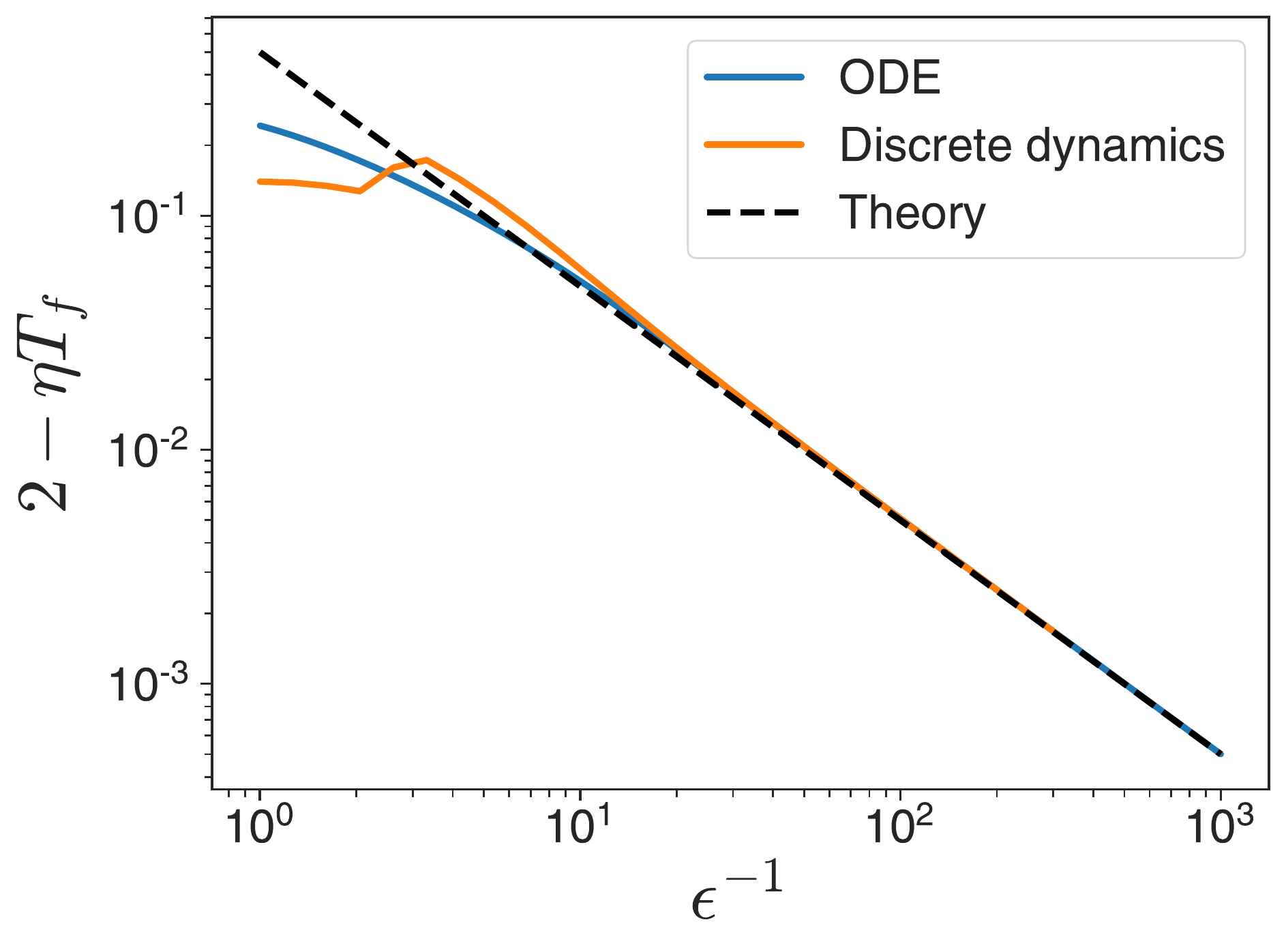} & \includegraphics[width=0.42\linewidth]{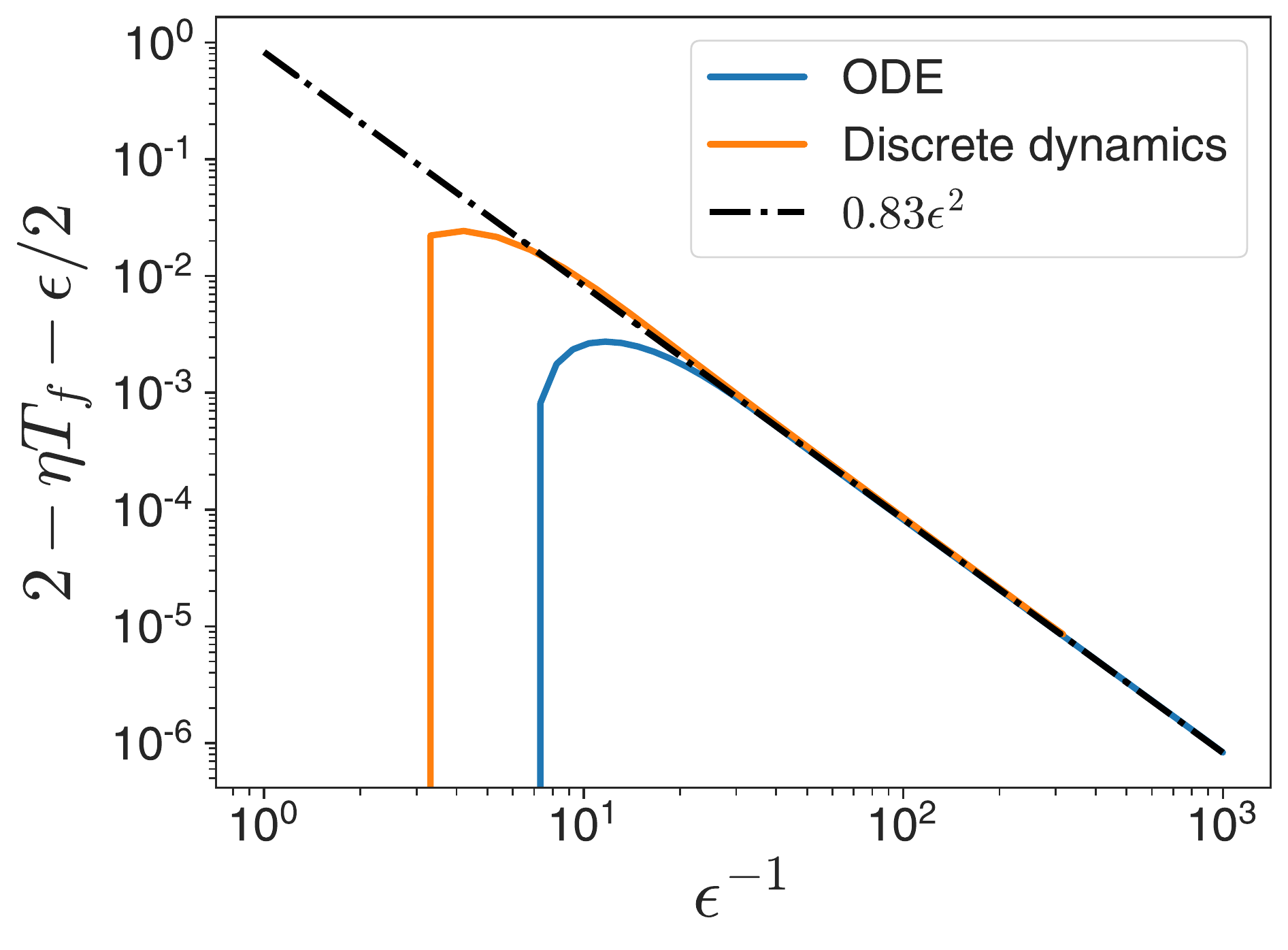}
    \end{tabular}
    \caption{Final values of $y$, normalized deviation from critical value $\TT(0) = 2$, for
    discrete dynamics and ODE approximation. Deviation is well approximated by $\eps/2$ over
    a large range (left). Deviations from $\eps/2$ are $O(\eps^2)$ (right).}
    \label{fig:y_eps_dep}
\end{figure}

\section{Quadratic regression model dynamics}

We use Einstein summation notation in this section - repeated indices on the right-hand-side
of equations are considered to be summed over, unless they show up on the left-hand-side.

\subsection{Proof of Theorem \ref{thm:ave_curv_deriv}}

\label{app:gf_dynamics}

Let $\z$, $\J$, and $\Q$ be initialized with i.i.d. random elements with $0$ mean
and variance $\sigma_{z}^{2}$, $\sigma_{J}^{2}$, and $1$ respectively. Furthermore,
Let the distributions be invariant to rotations in both data space and parameter
space, and have finite $4$th moment.

In order to understand the development of the curvature at early times, we consider coordinates
which convert  $\J$ into its singular value form.
In these coordinates, we can write:
\begin{equation}
\J_{\al i} = \begin{cases}
0&\text{~if~}\al\neq i\\
\sigma_{\al}&\text{~if~}\al = i
\end{cases}
\end{equation}
The singular values $\sigma_{\al}$ are the square roots of the singular values of the NTK matrix. We assume that
they are ordered from largest ($\sigma_{1}$) to smallest in magnitude.
By assumption, under this rotation the statistics of $\z$ and $\Q$ are left unchanged.

The time derivatives at $t = 0$ can be computed directly in the singular
value coordinates.
The first derivative is given by
\begin{equation}
\frac{d}{dt}\sigma^{2}_{\al} = 2\sigma_{\al}\dot{\sigma}_{\al}
\end{equation}
 Using the
diagonal coordinate system, we have
\begin{equation}
\expect\left[\frac{d}{dt}\sigma^{2}_{\al}\right] = \expect[\Q_{\al\bt j}\J_{\bt j}\z_{\bt}] = 0
\end{equation}

However, the average second derivative is positive. Calculating, we have:
\begin{equation}
\frac{d^2}{dt^2}\sigma^{2}_{\al} = 2(\dot{\sigma}_{\al}^2+\sigma_{\al}\ddot{\sigma}_{\al})
\end{equation}
We can compute the average at initialization. We have:
\begin{equation}
\expect[\dot{\sigma}^{2}_{\al}] = \expect[\Q_{\al\bt j}\J_{\bt j}\z_{\bt}\Q_{\al \dl k}\J_{\dl k}\z_{\dl}] = \expect[\dl_{\bt\dl}\dl_{jk}\J_{\bt j}\J_{\dl k}\z_{\bt}\z_{\dl}]
\end{equation}
\begin{equation}
\expect[\dot{\sigma}^{2}_{\al}] = \expect[\Q_{\al\bt j}\J_{\bt j}\z_{\bt}\Q_{\al \dl k}\J_{\dl k}\z_{\dl}] = \sum_{j}\expect[\J_{\bt j}^2\z_{\bt}^2] = \D\P\sigma_{J}^{2}\sigma_{z}^{2}
\end{equation}
To compute the second term, we compute $\ddot{\J}_{\al i}$:
\begin{equation}
\ddot{\J}_{\al i} = -\Q_{\al i j}(\J_{\bt j}\dot{\z}_{\bt}+\dot{\J}_{\bt j}\z_{\bt})
\end{equation}
Expanding, we have:
\begin{equation}
\ddot{\J}_{\al i} = \Q_{\al i j}(\J_{\bt j}\J_{\bt k}\J_{\dl k}\z_{\dl}+\Q_{\bt j k}\J_{\dl k}\z_{\dl}\z_{\bt})
\end{equation}
In the diagonal coordinates $\J_{\al\al} = \sigma_{\al}$. This gives us:
\begin{equation}
\expect[\sigma_{\al}\ddot{\sigma}_{\al}] = \expect[\sigma_{\al}\Q_{\al \al j} \Q_{\bt j k}\J_{\dl k}\z_{\dl}\z_{\bt}]
\end{equation}
Averaging over the $\Q$, we get:
\begin{equation}
\expect[\sigma_{\al}\ddot{\sigma}_{\al}] = \P\expect[\sigma_{\al}\delta_{\al \bt}\delta_{\al k}\J_{\dl k}\z_{\dl}\z_{\bt}] = \expect[\sigma_{\al}\z_{\al}\z_{\dl}\J_{\dl\al}]
\end{equation}
Which evaluates to:
\begin{equation}
\expect[\sigma_{\al}\ddot{\sigma}_{\al}] = \sigma_{z}^{2}\P\expect[\sigma_{\al}^2]
\end{equation}
In the limit of large $\D$ and $\P$, for fixed ratio $\D/\P$,
the statistics of the Marchenko-Pastur distribution allow us to compute
the derivative of the largest eigenmode as
\begin{equation}
\expect[\sigma_{0}\ddot{\sigma}_{0}] = \sigma_{z}^{2}\sigma_{J}^{2}\P^2\D(1+\sqrt{\D/\P})^2
\end{equation}
Taken together, this gives us
\begin{equation}
\expect\left[\frac{d^{2}\lambda_{max}}{dt^2}\right] = \sigma_{z}^{2}\sigma_{J}^{2}\D\P(\P(1+\sqrt{\D/\P})^2+1)
\end{equation}
We confirm the prediction numerically in Figure \ref{fig:ave_second_deriv}.

That is, the second derivative of the maximum curvature is positive on average. If we normalize
with respect to the eigenvalue scale, in the limit of large $\D$ and $\P$ we have:
\begin{equation}
\expect\left[\frac{d^{2}\lambda_{max}}{dt^2}\right]/\expect[\lambda_{max}] = \sigma_{z}^{2}
\end{equation}
Therefore, increasing $\sigma_{z}$ increases the relative curvature of the $\lambda_{max}$ trajectory. This gives us the proof of Theorem \ref{thm:ave_curv_deriv}.

\begin{figure}[h]
    \centering
    \begin{tabular}{cc}
    \includegraphics[width=0.4\linewidth]{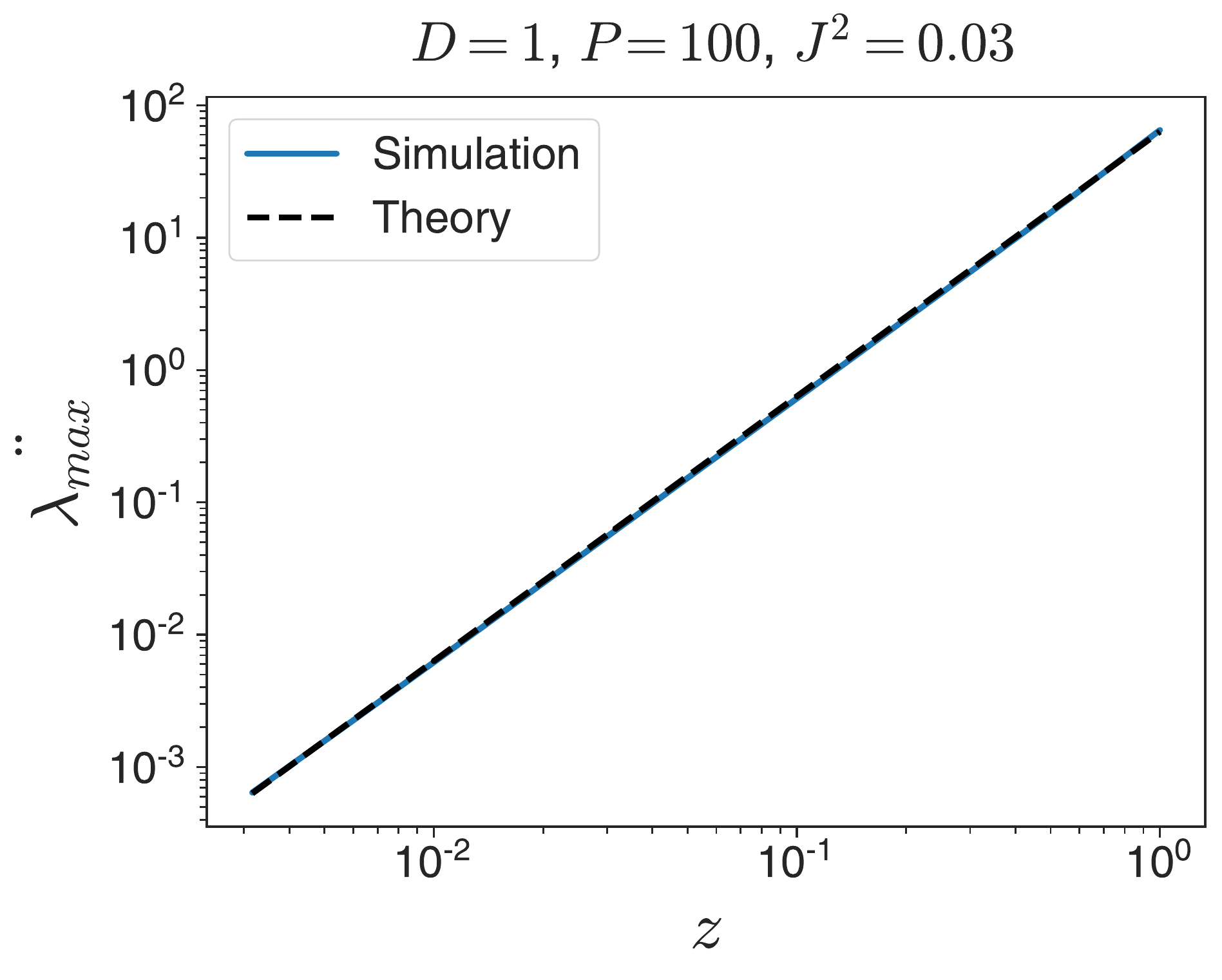} &
    \includegraphics[width=0.4\linewidth]{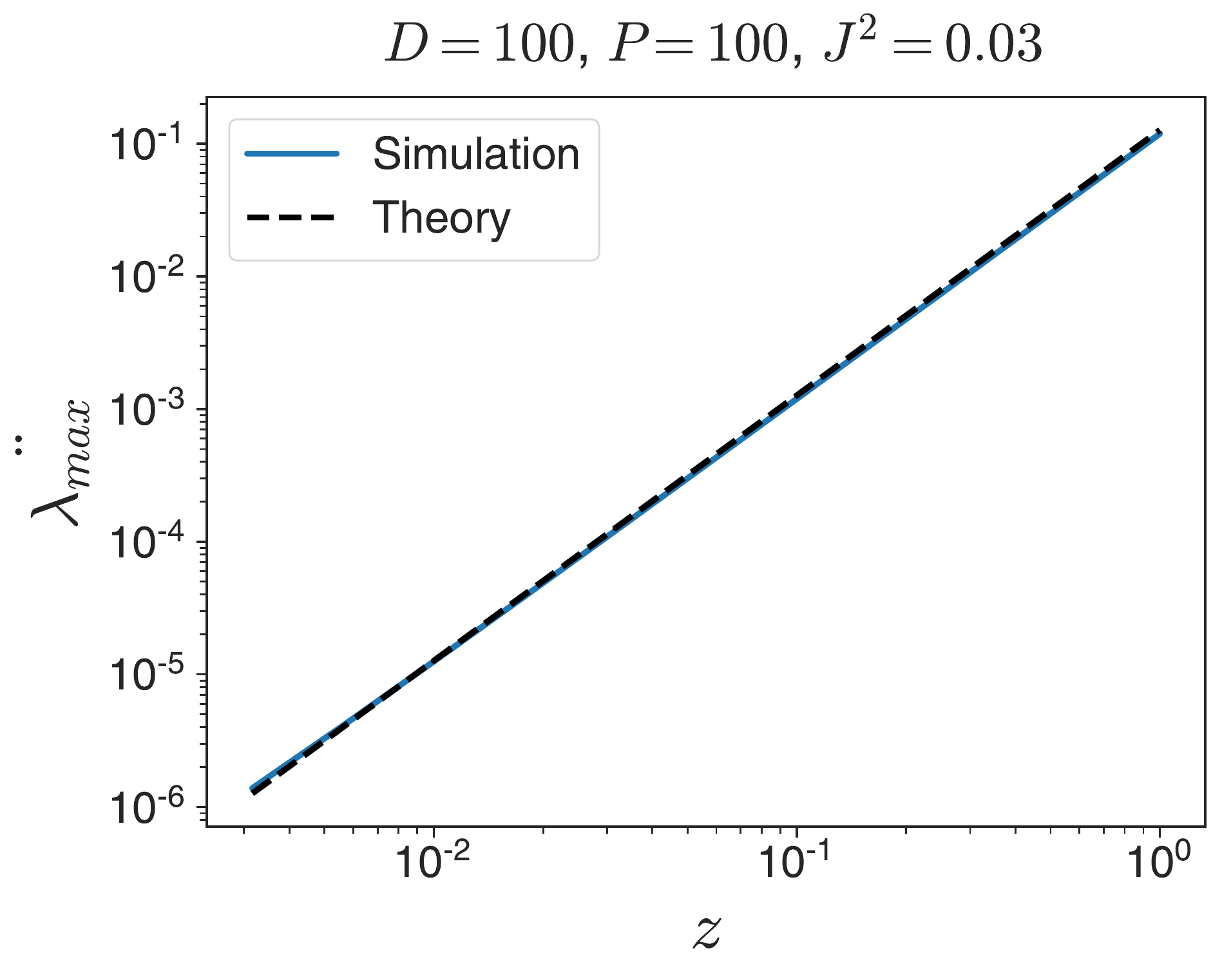}\\
    \includegraphics[width=0.4\linewidth]{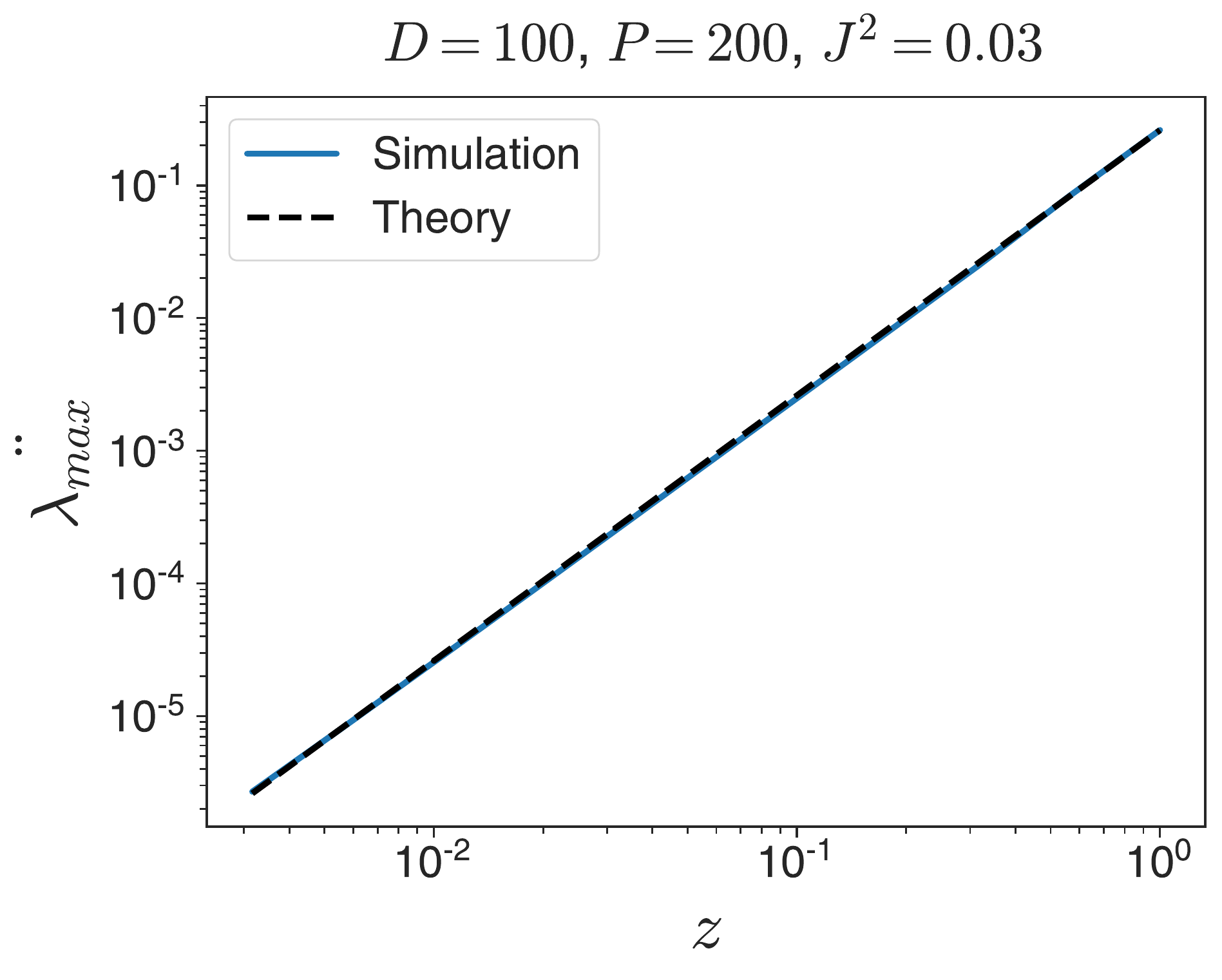} &
    \includegraphics[width=0.4\linewidth]{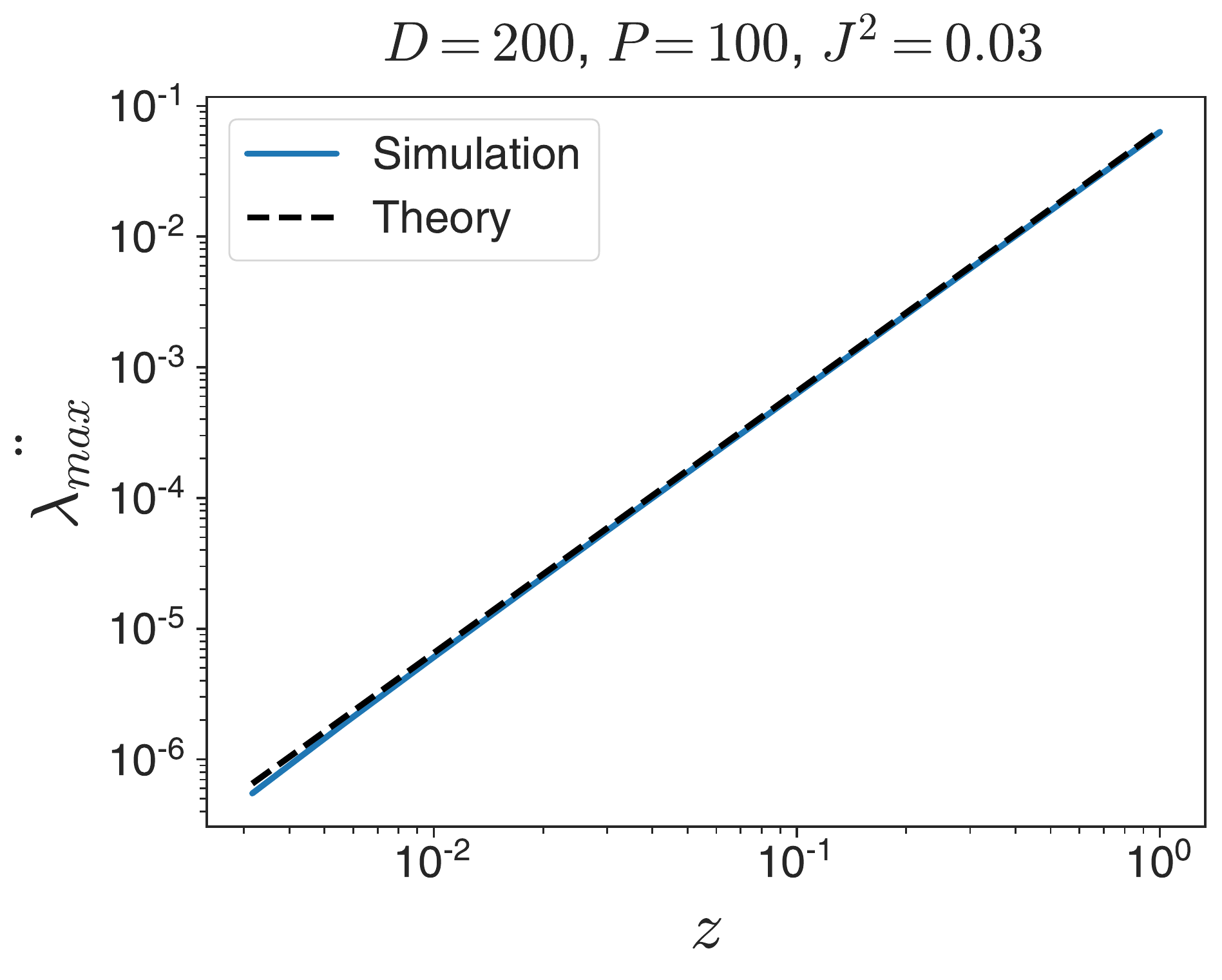}
    \end{tabular}
    \caption{Average $\ddot{\lam}_{\max}(0)$ versus $\sgz$, various $\D$ and $\P$
    (100 seeds).}
    \label{fig:ave_second_deriv}
\end{figure}

This result suggests that as $\sgz$ increases, so does the degree
of progressive sharpening. This can be confirmed by looking at GF trajectories
(Figure \ref{fig:gf_traj}). The trajectories with small $\sgz$ don't change their curvature
much, and the loss decays exponentially at some rate. However, when $\sgz$ is larger,
the curvature increases initially, and then stabilizes to a higher value, allowing for
faster convergence to the minimum of the loss.

\begin{figure}[h]
    \centering
    \begin{tabular}{cc}
    \includegraphics[width=0.48\linewidth]{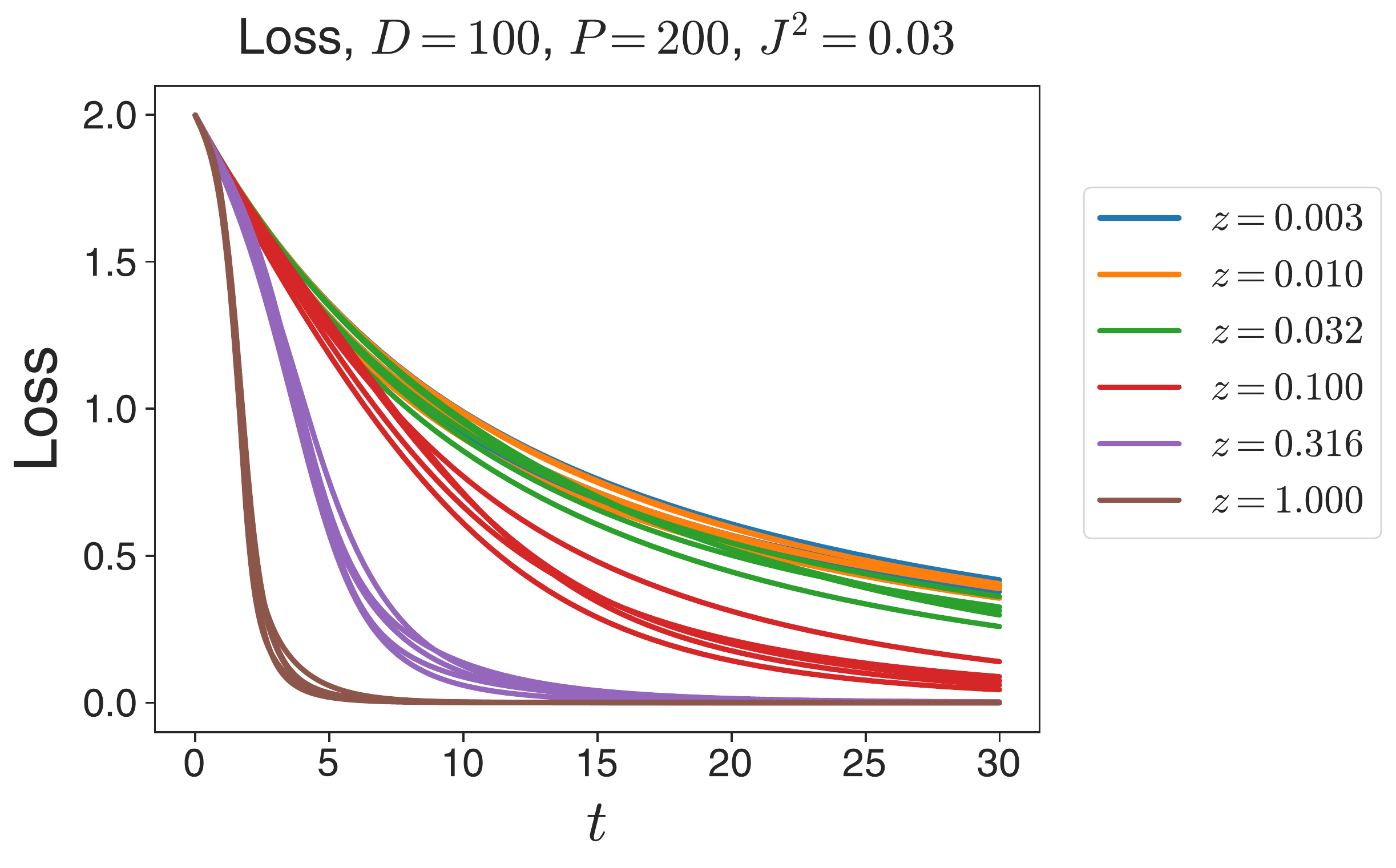} & \includegraphics[width=0.48\linewidth]{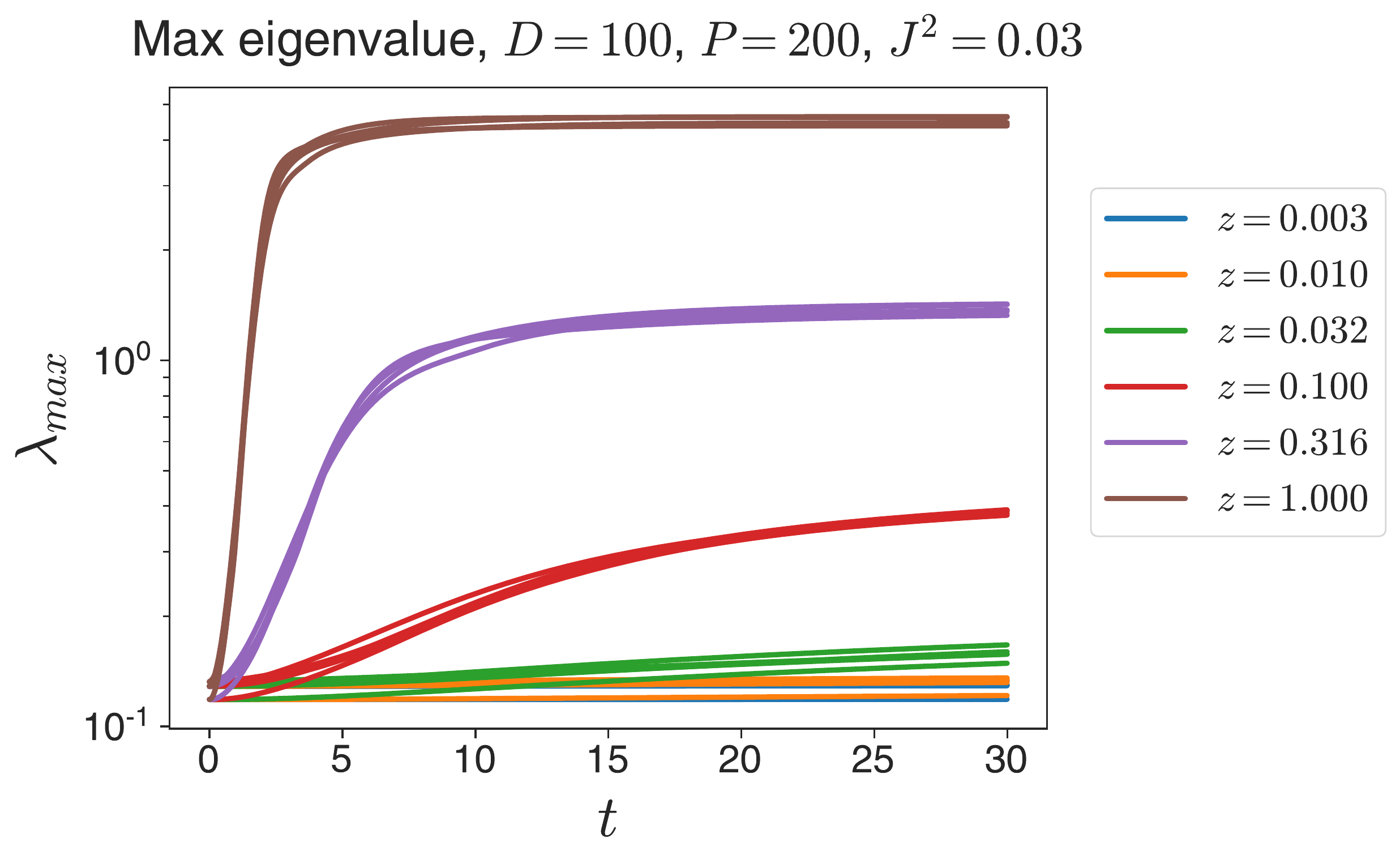}
    \end{tabular}
    \caption{Gradient flow trajectories of loss and max NTK eigenvalues for quadratic regression models 
    for varying $\sgz$. As $\sgz$ increases, $\lam_{\max}$ changes more quickly, and is
    generally increasing. Models with higher $\sgz$ converge faster in GF dynamics.}
    \label{fig:gf_traj}
\end{figure}

\subsection{Timescales for gradient descent}

\label{app:gd_timescales}

Consider a random initialization of $\z$, $\J$, and $\Q$, where the terms are i.i.d. with zero
mean variances $\sgz^2$, $\sgJ^2$, and $1$ respectively, and finite fourth moments.
Furthermore, suppose that
$\z$, $\J$, and $\Q$ are rotationally invariant in both input and output space. Under
these conditions, we hope to compute
\begin{equation}
r_{NL}^{2} \equiv \frac{\expect[|| \frac{1}{2}\lr^2 \Q_{\al i j}(\J_{\bt i})_{0}(\z_{\bt})_{0}(\J_{\dl j})_{0}(\z_{\dl})_{0}||_{2}^{2}]}{\expect[||\lr(\J_{\al i})_{0} (\J_{i\bt})_{0}(\z_{\bt})_{0} ||_{2}^{2}]} = \frac{1}{4}\lr^2\sgz^2\D^2
\end{equation}
at initialization, in the limit of large $\D$ and $\P$.

The denominator is given by:
\begin{equation}
\expect[\J_{\al i}\J_{\bt i}(\z_{\bt})\J_{\al j}\J_{\dl j}(\z_{\dl})] = \sgz^2\expect[\J_{\al i}\J_{\bt i}\J_{\al j}\J_{\dl j}\dl_{\bt\dl}] = \sgz^2\expect[\J_{\al i}\J_{\bt i}\J_{\al j}\J_{\bt j}]
\end{equation}
Evaluation gives us:
\begin{equation}
\expect[\J_{\al i}\J_{\bt i}(\z_{\bt})\J_{\al j}\J_{\dl j}(\z_{\dl})] = \sgz^2(\sgJ^4(\P(\P-1)\D)+C_{4}\D\P)
\end{equation}
where $C_{4}$ is the $4$th moment of $\J_{\al i}$. To lowest order in $\D$ and $\P$
\begin{equation}
\expect[\J_{\al i}\J_{\bt i}(\z_{\bt})\J_{\al j}\J_{\dl j}(\z_{\dl})] = \sgz^2\sgJ^4\D\P^{2}+O(\D\P)
\end{equation}

Evaluating the numerator, we have:
\begin{equation}
\expect[\Q_{\al i j}\J_{\bt i}\z_{\bt}\J_{\dl j}\z_{\dl}\Q_{\al mn}\J_{\gm m}\z_{\gm}\J_{\nu n}\z_{\nu}] = \expect[\J_{\bt i}\z_{\bt}\J_{\dl j}\z_{\dl}\J_{\gm m}\z_{\gm}\J_{\nu n}\z_{\nu}](\delta_{im}\delta_{jn}+(M_{4}-1)\delta_{ijmn})
\end{equation}
where $M_{4}$ is the $4th$ moment of $\Q_{\al i j}$. This gives us:
\begin{equation}
\begin{split}
\frac{1}{\D}\expect[\Q_{\al i j}\J_{\bt i}\z_{\bt}\J_{\dl j}\z_{\dl}\Q_{\al mn}\J_{\gm m}\z_{\gm}\J_{\nu n}\z_{\nu}] & = \expect[\J_{\bt i}\z_{\bt}\J_{\dl j}\z_{\dl}\J_{\gm i}\z_{\gm}\J_{\nu j}\z_{\nu}]+\\
&(M_{4}-1)\expect[\J_{\bt i}\z_{\bt}\J_{\dl i}\z_{\dl}\J_{\gm i}\z_{\gm}\J_{\nu i}\z_{\nu}]
\end{split}
\end{equation}
Next, we perform the $\z$ averages. We have
\begin{equation}
\begin{split}
\frac{1}{\D}\expect[\Q_{\al i j}\J_{\bt i}\z_{\bt}\J_{\dl j}\z_{\dl}\Q_{\al mn}\J_{\gm m}\z_{\gm}\J_{\nu n}\z_{\nu}] & = \sigma_{z}^{4}\expect[\J_{\bt i}\J_{\dl j}\J_{\gm i}\J_{\nu j}](\delta_{\bt\dl}\delta_{\gm\nu}+\delta_{\bt\gm}\delta_{\dl\nu}+\delta_{\bt\nu}\delta_{\dl\gm})\\
&+(C_{4}-\sigma^{4})\expect[\J_{\bt i}\J_{\dl j}\J_{\gm i}\J_{\nu j}]\delta_{\bt\dl\gm\nu}\\
& +(M_{4}-1)\sigma_{z}^{4}\expect[\J_{\bt i}\J_{\dl i}\J_{\gm i}\J_{\nu i}](\delta_{\bt\dl}\delta_{\gm\nu}+\delta_{\bt\gm}\delta_{\dl\nu}+\delta_{\bt\nu}\delta_{\dl\gm})\\
& +(M_{4}-1)(C_{4}-\sigma^{4})\expect[\J_{\bt i}\J_{\dl i}\J_{\gm i}\J_{\nu i}]\delta_{\bt\dl\gm\nu}
\end{split}
\end{equation}
where $C_{4}$ is the $4$th moment of $\z$. Simplification gives us:
\begin{equation}
\begin{split}
\frac{1}{\D}\expect[\Q_{\al i j}\J_{\bt i}\z_{\bt}\J_{\dl j}\z_{\dl}\Q_{\al mn}\J_{\gm m}\z_{\gm}\J_{\nu n}\z_{\nu}] & = \sigma_{z}^{4}(\expect[\J_{\bt i}\J_{\bt j}\J_{\dl i}\J_{\dl j}]+\expect[\J_{\bt i}\J_{\dl j}\J_{\bt i}\J_{\dl j}]+\expect[\J_{\bt i}\J_{\dl j}\J_{\dl i}\J_{\bt j}])\\
&+(C_{4}-\sigma_{z}^{4})\expect[\J_{\bt i}\J_{\bt j}\J_{\bt i}\J_{\bt j}]\\
& +(M_{4}-1)\sigma_{z}^{4}(\expect[\J_{\bt i}\J_{\bt i}\J_{\gm i}\J_{\gm i}]+\expect[\J_{\bt i}\J_{\dl i}\J_{\bt i}\J_{\dl i}]+\expect[\J_{\bt i}\J_{\dl i}\J_{\dl i}\J_{\bt i}])\\
& +(M_{4}-1)(C_{4}-\sigma^{4})\expect[\J_{\bt i}\J_{\bt i}\J_{\bt i}\J_{\bt i}]
\end{split}
\end{equation}
For large $\D$ and $\P$, the final three terms are asymptotically smaller than the first term. Evaluating the first
term, to leading order we have:
\begin{equation}
\frac{1}{\D}\expect[\Q_{\al i j}\J_{\bt i}\z_{\bt}\J_{\dl j}\z_{\dl}\Q_{\al mn}\J_{\gm m}\z_{\gm}\J_{\nu n}\z_{\nu}] =  \sigma_{z}^{4}\sigma_{J}^{4}(2\D\P^{2}+
2\D^2\P+\D^2\P^2) +O(\D^2\P+\D\P^2)
\end{equation}
\begin{equation}
\expect[\Q_{\al i j}\J_{\bt i}\z_{\bt}\J_{\dl j}\z_{\dl}\Q_{\al mn}\J_{\gm m}\z_{\gm}\J_{\nu n}\z_{\nu}] =  \sigma_{z}^{4}\sigma_{J}^{4}\D^3\P^2+O(\D^3\P+\D^2\P^2)
\end{equation}

This gives us:
\begin{equation}
r_{NL}^2 = \frac{1}{4}\frac{\sigma_{z}^{4}\sigma_{J}^{4}\D^3\P^2}{\sgz^2\sgJ^4\D\P^{2}} = \frac{1}{4}\sgz^2\D^2
\end{equation}
to leading order, in the limit of large $\D$ and $\P$.

\section{Analysis of real models}

\subsection{Dynamics of $y$ in CIFAR10 model}

\label{app:y_dyn}

The dynamics of $y$ in the CIFAR10 model analyzed in Section \ref{sec:real_world_model} are
more complicated than the $\zz_{1}$ dynamics. We see from Figure \ref{fig:cifar_eos} that
there is a $\zz_{1}$ and $y$-independent component of the two-step change in $y$. We can approximate
this change $b$ by computing the average value of $y_{t+2}-y_{t}$ for small $\zz_{1}$
(taking $\zz_{1}< 10^{-4}$ in this case). We can then subtract off $b$ from $y_{t+2}-y_{t}$,
and plot the remainder against $\zz_{t}^{2}$ (Figure \ref{fig:dy_approx} left). We see that
$y_{t+2}-y_{t}-b$ is negatively correlated with $\zz_{t}^{2}$, particularly for
large $\zz_{t}$. However, $y_{t+2}-y_{t}$ is clearly not simply function of $\zz_{1}$.

The two-step model dynamics could be written as $(ay+c)\tzz^{2}$. If we plot
$(y_{t+2}-y_{t}-b)/\zz_{1}^{2}$ versus $y_{t}$, we again don't have a single-valued function
(Figure \ref{fig:dy_approx}, right). Therefore, the functional form of $y_{t+2}-y_{t}$
is not given by $b+ay\zz_{1}^{2}+c\zz_{1}^{2}$.

\begin{figure}[h]
    \centering
    \begin{tabular}{cc}
    \includegraphics[height=0.28\linewidth]{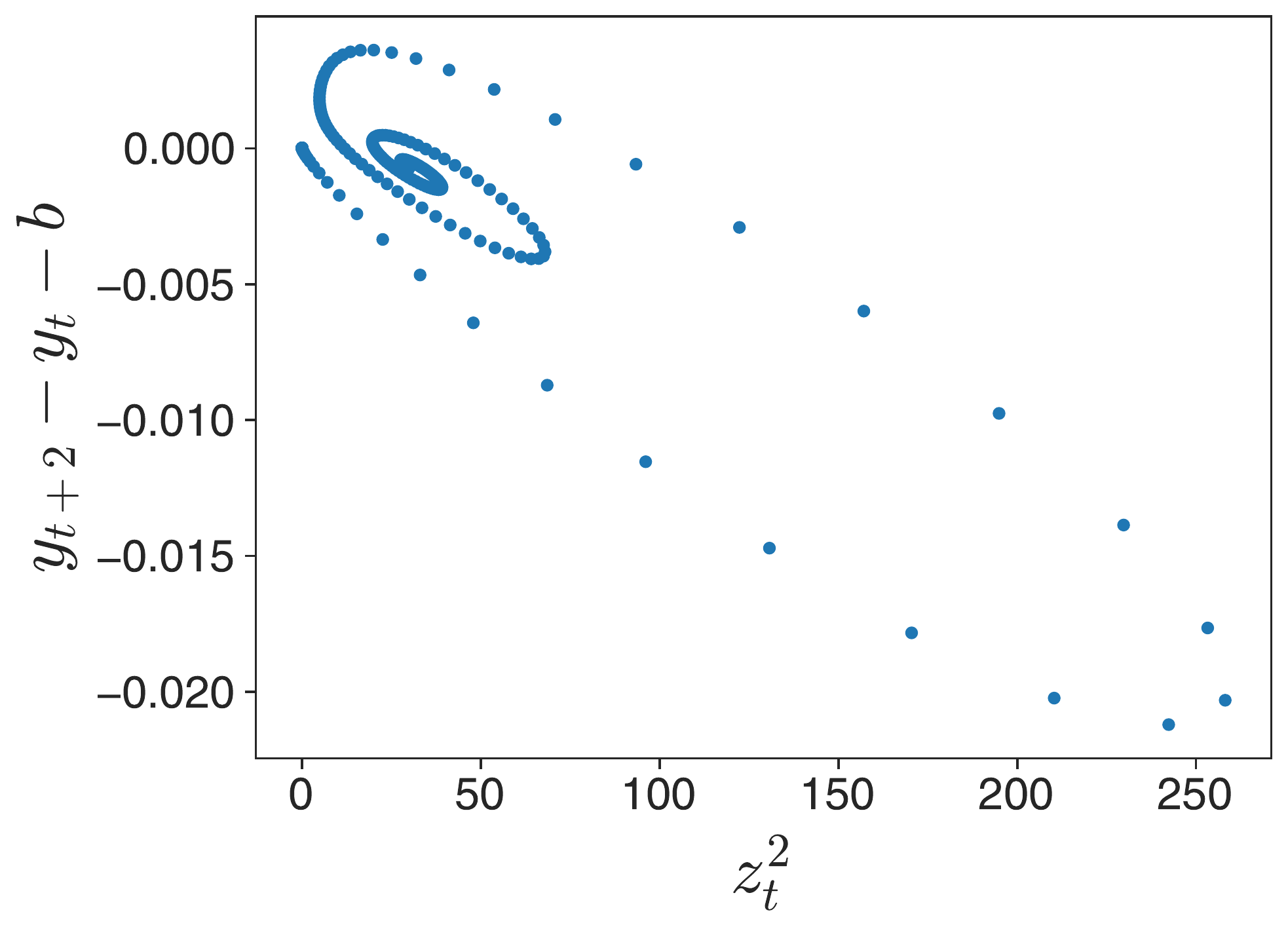} & \includegraphics[height=0.28\linewidth]{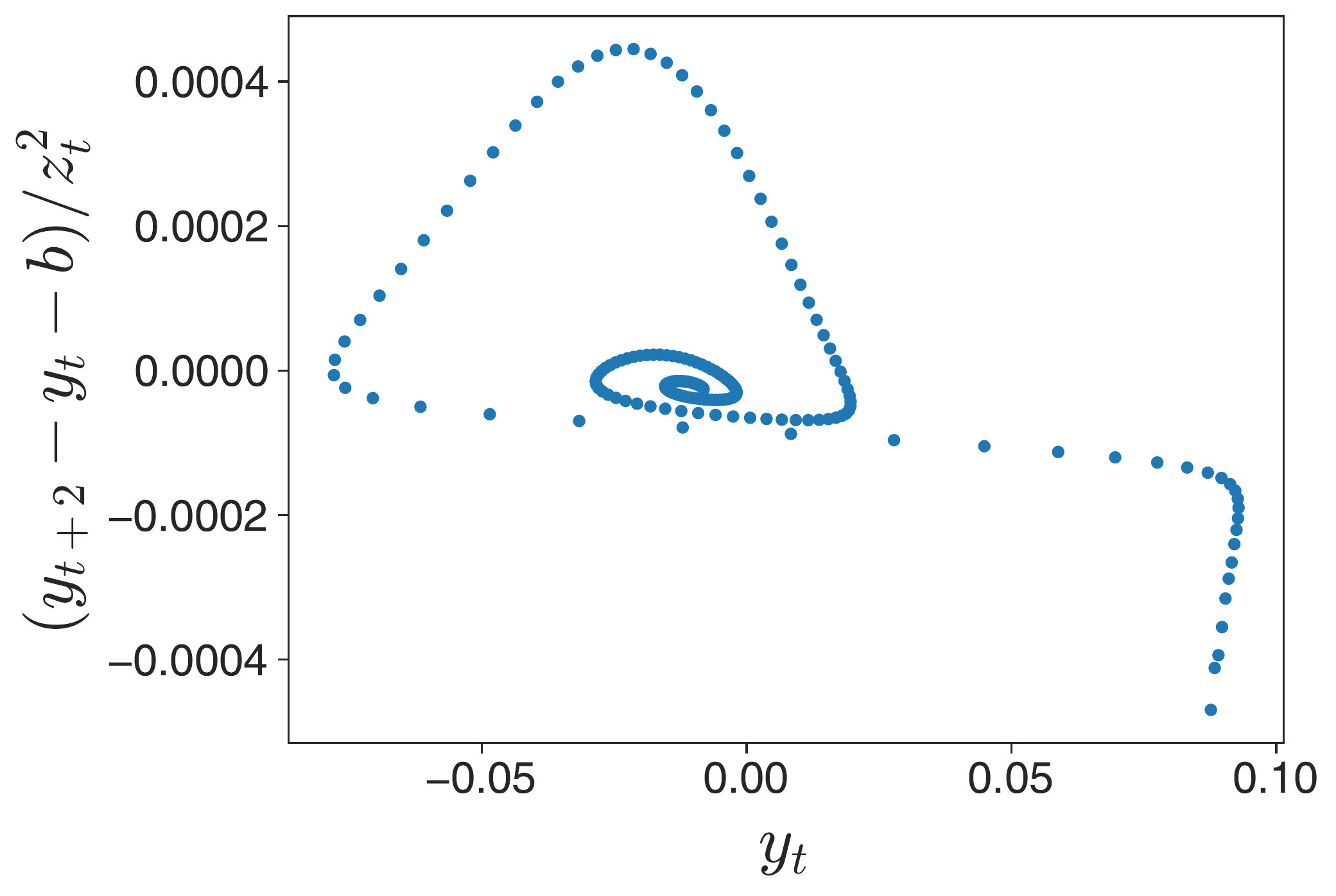}
    \end{tabular}
    \caption{Gradient flow trajectories of loss and max NTK eigenvalues for quadratic regression models 
    for varying $\sgz$. As $\sgz$ increases, $\lam_{\max}$ changes more quickly, and is
    generally increasing. Models with higher $\sgz$ converge faster in GF dynamics.}
    \label{fig:dy_approx}
\end{figure}

\subsection{Quadratic expansion of $2$-class CIFAR model}

\label{app:quad_expo}

We trained a CIFAR model using the first two classes only with $5000$ datapoints using the
Neural Tangents library \citep{novak_neural_2019} - which let us perform $2$nd order Taylor
expansions of the model at arbitrary parameters. The models were $2$-hidden layer fully-connected
networks, with hidden width $256$ and ${\rm Erf}$ non-linearities. Models were initialized with
the NTK parameterization, with weight variance $1$ and bias variance $0$.
The targets were scalar valued - $+1$ for the first class, $-1$ for the second class.
A learning rate of $0.003204$ was used in all experiments. All plots were made using
float-64 precision.

Taking a quadratic expansion at initialization, we see that the loss tracks the full
model for the first $1000$ steps in this setting (Figure \ref{fig:quad_expo_details}, left),
but misses the edge-of-stability behavior.
We use Neural Tangents to efficiently compute the NTK to get the top eigenvalue
$\lam_{1}$ (and consequently, $y$). We can also compute $\zz_{1}$ by computing the associated
eigenvector $\v_{1}$ and projecting residuals $\z$. If the quadratic expansion is taken
closer to the edge of stability, the dynamics of $\zz_{1}$ well approximates the true
$\zz_{1}$ dynamics, up to a shift associated with exponential growth of $\zz_{1}$ occurring
at different times (Figure \ref{fig:quad_expo_details}, middle).
We see that the shape of the first peak in $|\zz_{1}|$ is the same
for the full model and the quadratic model, but the subsequent oscillations are
faster and more quickly damped in the full model. This suggests that the initial EOS behavior may
be captured by the quadratic model, but the detailed dynamics require an understanding of
higher order terms. For example, the $3$rd order Taylor expansion improves the prediction
of the magnitude and period of the oscillations, but still misses key quantitative
features (Figure \ref{fig:quad_expo_details}, right).

\begin{figure}[h]
\begin{tabular}{ccc}
\includegraphics[height=0.23\linewidth]{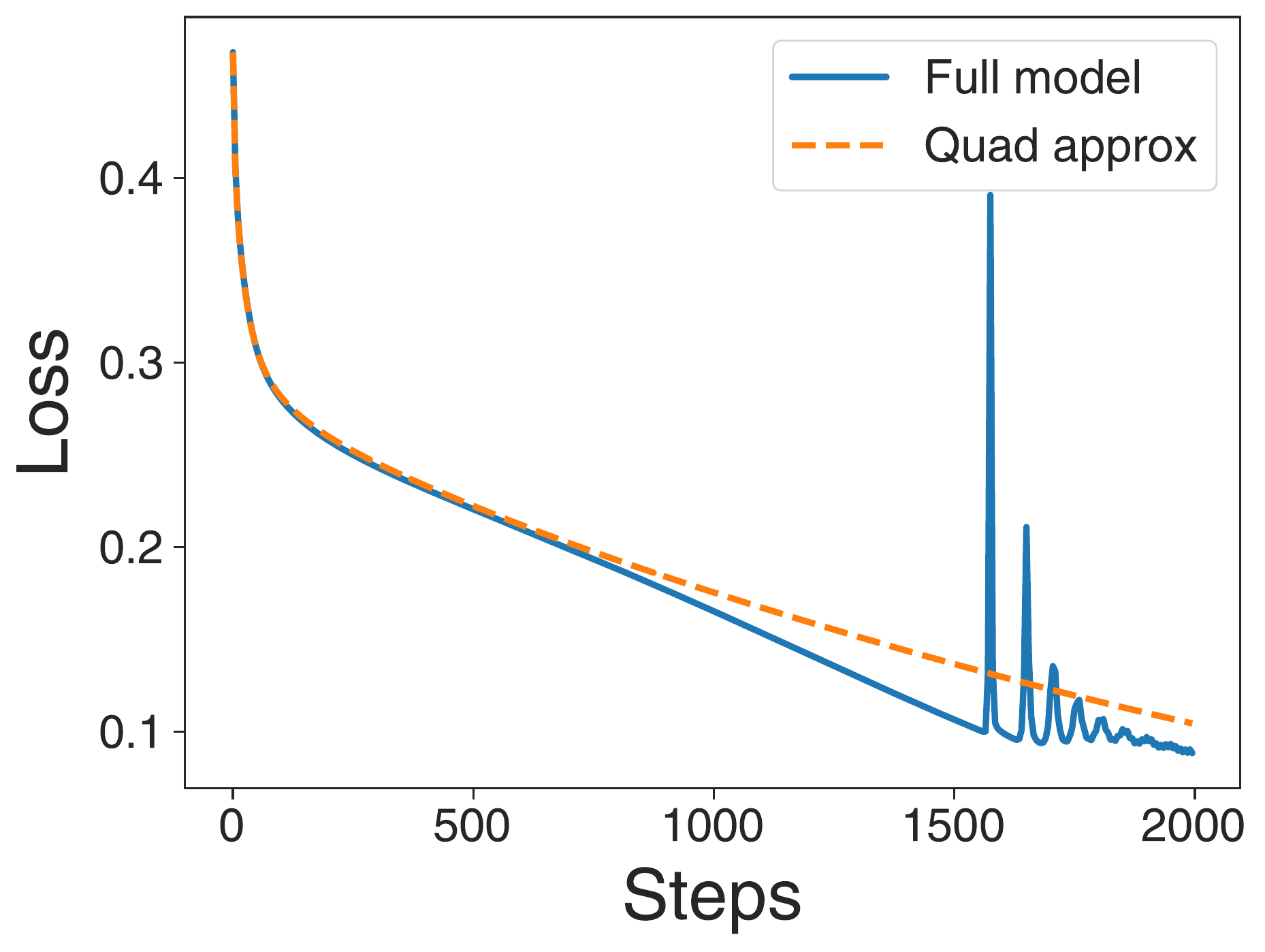} & \includegraphics[height=0.23\linewidth]{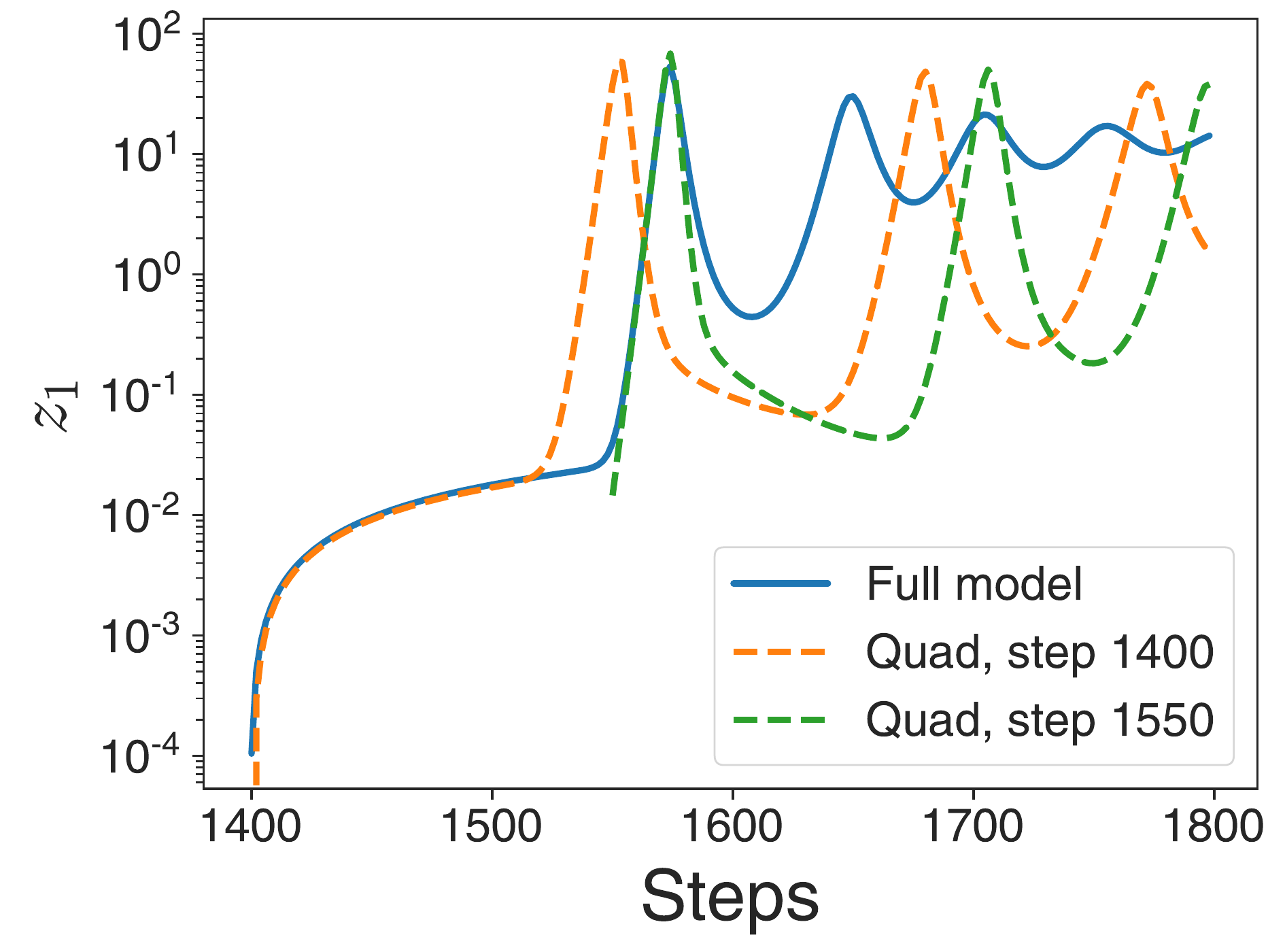} &
\includegraphics[height=0.23\linewidth]{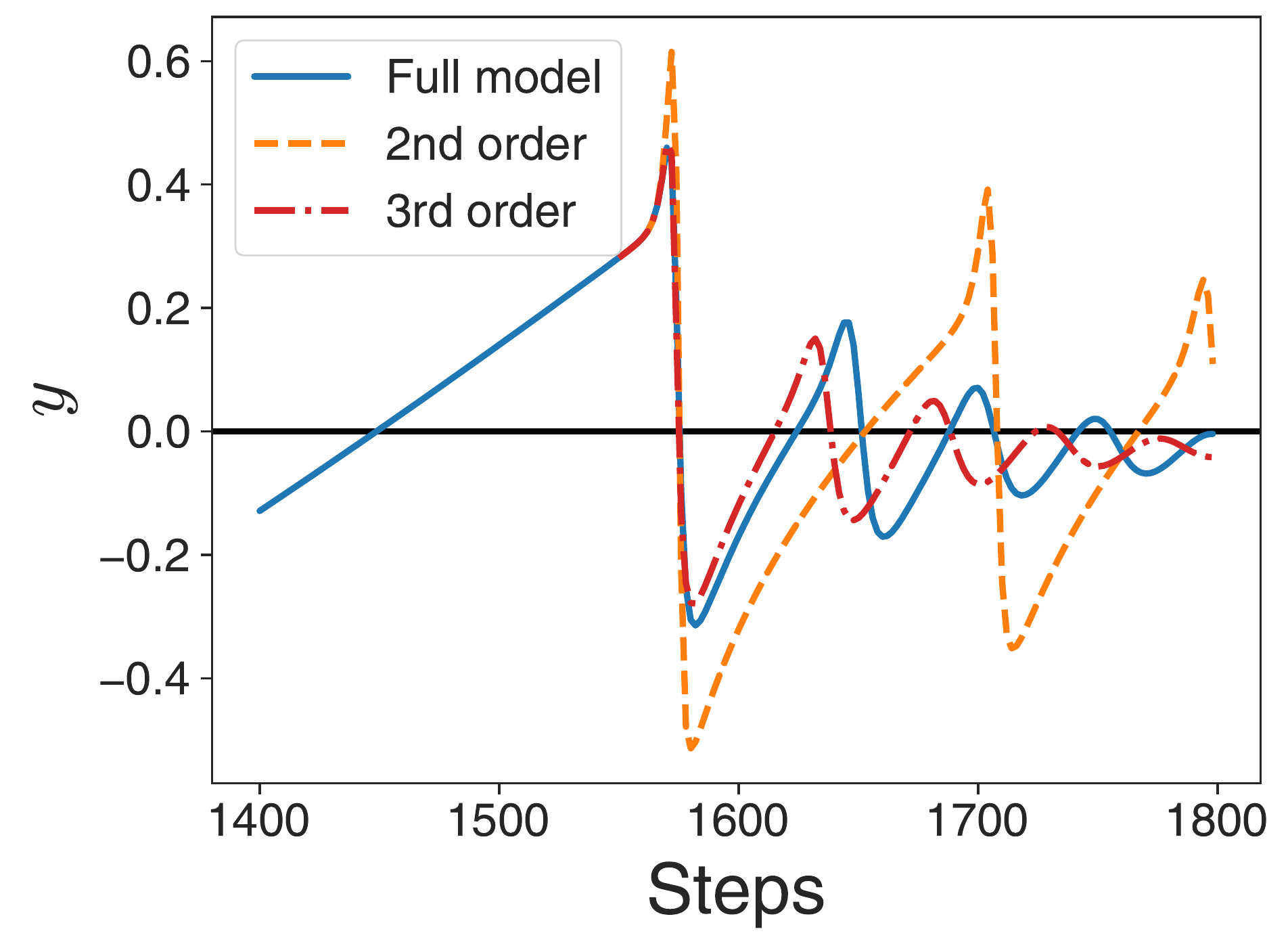}
\end{tabular}
\caption{Quadratic expansion of FCN model trained on two-class CIFAR. Expanding at initialization
gives good approximation to full model for $1000$ steps, after which EOS behavior occurs
in full model but not approximate one (left). When $\zz_{1}$ is small, quadratic model tracks
full model; however, initial exponential increase may happen earlier in approximate model (middle).
Magnitude of $\zz_{1}$ has larger oscillations in full model compared to approximate
model. Third-order Taylor expansion better captures magnitude and period of oscillations, but
still misses quantitative features (right).}
\label{fig:quad_expo_details}
\end{figure}

\end{document}